\def\eqref#1{equation~\ref{#1}}
\def\1{\bm{1}}
\DeclareMathAlphabet{\mathsfit}{\encodingdefault}{\sfdefault}{m}{sl}
\SetMathAlphabet{\mathsfit}{bold}{\encodingdefault}{\sfdefault}{bx}{n}
\theoremstyle{plain}
\newtheorem{theorem}{Theorem}[section]
\newtheorem{lemma}[theorem]{Lemma}
\theoremstyle{definition}
\theoremstyle{remark}
\newcommand{\ital}{\textit}
\title{Reconstructing Cell Lineage Trees from Phenotypic Features with Metric Learning}
\author{
    Da Kuang$^{1}$, Guanwen Qiu$^{1}$, Junhyong Kim$^{1,2}$ \\
    \small $^{1}$Department of Computer and Information Science, University of Pennsylvania, Philadelphia, USA \\
    \small $^{2}$Department of Biology, University of Pennsylvania, Philadelphia, USA \\
    \texttt{kuangda@seas.upenn.edu, guanwenq@seas.upenn.edu, junhyong@sas.upenn.edu}
}
\date{}
\begin{document}

\maketitle

\begin{abstract}
How a single fertilized cell gives rise to a complex array of specialized cell types in development is a central question in biology. The cells grow, divide, and acquire differentiated characteristics through poorly understood molecular processes. A key approach to studying developmental processes is to infer the tree graph of cell lineage division and differentiation histories, providing an analytical framework for dissecting individual cells’ molecular decisions during replication and differentiation. Although genetically engineered lineage-tracing methods have advanced the field, they are either infeasible or ethically constrained in many organisms. In contrast, modern single-cell technologies can measure high-content molecular profiles (\textit{e.g.}, transcriptomes) in a wide range of biological systems.

Here, we introduce \emph{CellTreeQM}, a novel deep learning method based on transformer architectures that learns an embedding space with geometric properties optimized for tree-graph inference. By formulating lineage reconstruction as a tree-metric learning problem, we have systematically explored supervised, weakly supervised, and unsupervised training settings and present a \emph{Cell Lineage Reconstruction Benchmark} to facilitate comprehensive evaluation of our learning method. We benchmarked the method on (1) synthetic data modeled via Brownian motion with independent noise and spurious signals and (2) lineage-resolved single-cell RNA sequencing datasets. Experimental results show that \emph{CellTreeQM} recovers lineage structures with minimal supervision and limited data, offering a scalable framework for uncovering cell lineage relationships in challenging animal models. To our knowledge, this is the first method to cast cell lineage inference explicitly as a metric learning task, paving the way for future computational models aimed at uncovering the molecular dynamics of cell lineage.
\end{abstract}

\section{Introduction}
\begin{figure}[ht]
    \centering
    \includegraphics[width=0.5\linewidth]{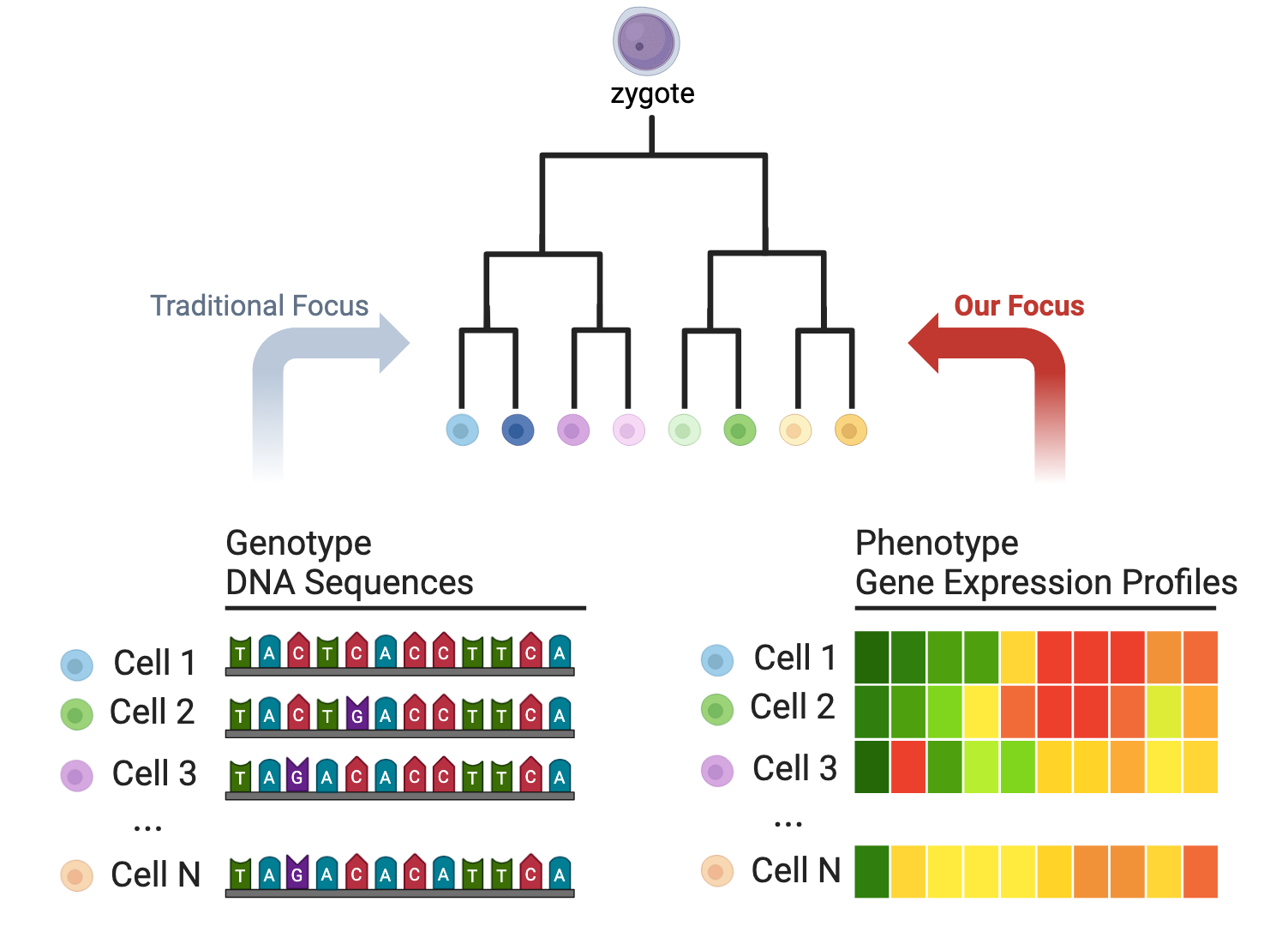}
    \caption{\textbf{Exploring phenotype-based cell lineage reconstruction.} This figure highlights the focus of our study on reconstructing cell lineage trees using phenotype data, specifically gene expression profiles (right panel), in contrast to traditional methods that rely on genotype data, such as DNA sequences (left panel).}
    \label{fig:figure1}
\end{figure}
Understanding how the single cell of a fertilized egg repeatedly divides and differentiates, \textit{i.e.}, the process of different cells acquiring unique characteristics (\textit{e.g.}, skin cell, muscle cell), to give rise to a fully formed animal has been a long-standing goal in biology \citep{wolpert_principles_2015,slack_essential_2021}. A key component of this developmental process is the tree-graph of \emph{cell lineages}, which provides a roadmap of how diverse cell types arise from a single progenitor \citep{clevers_cancer_2011,shapiro_single-cell_2013,wagner_lineage_2020}. The cell replication process cannot be directly observed for most organisms; therefore, inferring or reconstructing the cell lineage tree from features measured on the individual cells is an important challenge. Beyond organismal development, knowledge of the cell lineages has wide biomedical applications, including in deciphering molecular processes of cell injury and repair, tumor development, degenerative diseases, and more \citep{zhang_lineage_2020,sivandzade_regenerative_2021}. 

Currently, the gold standard for cell lineage tree reconstruction is \emph{prospective lineage tracing} \citep{kretzschmar_lineage_2012}. One popular approach leverages CRISPR-Cas9 to genetically engineered ``recorders'' \textemdash{} that is, exogenous DNA sequences designed to accumulate mutations. (Fig.~\ref{fig:figure1}, left). Although powerful, these recorders face key limitations. The number of divisions that can be tracked is limited by molecular constraints, such as the size of the target array. Additionally, achieving an optimal mutation rate is crucial; if mutations accumulate too slowly, they provide insufficient information for lineage reconstruction, while too rapid accumulation can lead to early saturation, obscuring lineage relationships \citep{salvador-martinez_is_2019}. Most importantly, since this method involves genetic engineering, it can only be used in contexts where such genome manipulation is feasible or ethically permissible \citep{mckenna_recording_2019,zafar_single-cell_2020}.

Before organismal DNA sequences were widely available, researchers developed phylogenetic methods based on \emph{phenotypes}—such as morphological traits—to infer species relationships. For foundational results and methodologies in this field, we refer readers to \citep{kim_tutorial_1999}. Today, advances in single-cell biology enable researchers to measure high-content molecular phenotypes from individual cells. For example, it is now routine to measure a cell’s transcriptome—the complete set of RNA molecules in the cell—and represent it as a vector of counts for each RNA species. \textbf{An open question is whether such high-content molecular phenotypes give enough information for tree reconstruction and if so what algorithm can be used to find it} (Fig.~\ref{fig:figure1}, right).

We show a positive answer to the above question by introducing \emph{CellTreeQM} (Cell-Tree Quartet Metric), a Deep Learning framework built on transformer encoder that maps the transcriptome data to a space in which distances reflect tree-like relationships. Our results demonstrate that reconstructing cell lineage structures from purely transcriptome data is both tractable and data-efficient.

Our main contributions are as follows:

\paragraph{Formulating the Cell Lineage Tree Reconstruction Problem.}
We frame the reconstruction of cell lineage trees as a metric learning problem and identify three practical scenarios encompassing supervised, weakly supervised, and unsupervised settings (Fig.~\ref{fig:intro-framework}). Within the weakly supervised paradigm, we introduce two practical cases. In the \emph{high-level partitioning setting}, biologists possess prior knowledge of major clades within the lineage. In the \emph{partially leaf-labeled setting}, lineage-tracing technologies provide topological labels for a subset of cells, with the objective of extrapolating these relationships to the unlabeled leaves.

\paragraph{Proposed Solution.}
We present \emph{CellTreeQM}, a feature learning framework for cell lineage tree reconstruction. Inspired by \citep{de_soete_least_1983}, we design a loss function that explicitly encourages the learned embedding space to satisfy tree-metric properties and show that stochastic gradient descent is efficient in finding a generalizable embedding.

\paragraph{Lineage Reconstruction Benchmark.}
We introduce a Lineage Reconstruction Benchmark with (a) synthetic datasets based on Brownian motion with independent noise and spurious signals, (b) lineage-resolved single-cell RNA sequencing (scRNA-seq) datasets. Experimental results on the benchmark demonstrate that \emph{CellTreeQM} efficiently reconstructs lineage structures under weak supervision and limited data, providing a scalable framework for uncovering cell lineage relationships.

\begin{figure}[ht]
    \centering
    \includegraphics[width=\linewidth]{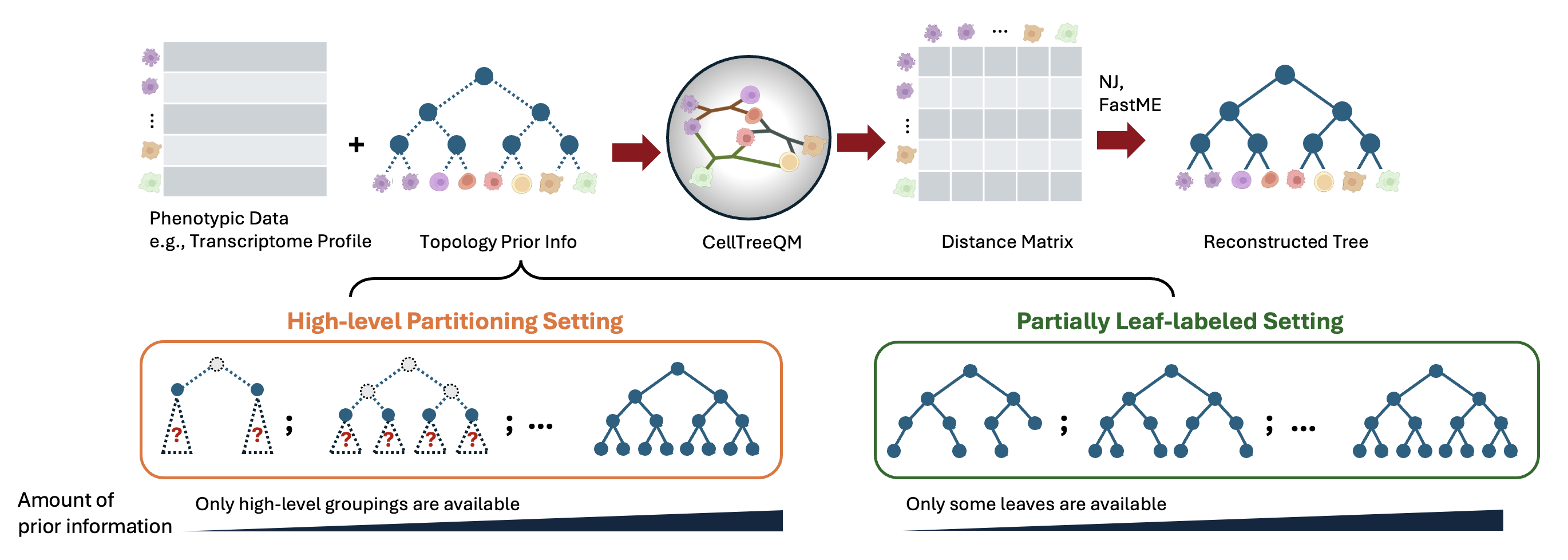}
    \caption{\textbf{Overview of the CellTreeQM workflow for lineage reconstruction using Metric Learning.} When the full tree is known as prior knowledge, this is a supervised setting. When no prior information about the tree is available, the setting is unsupervised. In between, we highlight two weakly supervised settings: the High-level Partitioning Setting, where only high-level groupings are available, and the Partially Leaf-labeled Setting, where topological labels are provided for a subset of leaves.}
    \label{fig:intro-framework}
\end{figure}

\section{Key Challenges}
Reconstructing a cell lineage tree from \emph{phenotypic} data conceptually parallels phylogenetics, which infers evolutionary relationships from discrete data such as aligned DNA or protein sequences. Historically, classical phylogenetics relied on phenotypic information (e.g., morphological traits) encoded by biological experts; however, with the advent of affordable sequencing technologies, modern phylogenetic methods predominantly use genotype-based data and well-defined stochastic models of sequence evolution to estimate both topology and branch lengths \citep{felsenstein_inferring_2003}.

Although the field now has access to high-content, high-dimensional \emph{molecular} phenotypic data (e.g., gene expression), these data have not been widely applied to lineage reconstruction. We attribute this gap primarily to two key challenges (detailed below). These challenges collectively underscore why comprehensive lineage annotations remain scarce and why robust methods for lineage reconstruction—particularly under incomplete or noisy labels—are needed.

\paragraph{Uncharacterized Stochastic Processes in Gene Expression Data}\label{uncharacterized_stochastic_processes}

In phylogenetics, species relationships are often modeled using well-defined stochastic processes such as DNA mutation rates, which have been empirically validated across diverse datasets. These models provide a principled framework for lineage reconstruction, typically through likelihood-based principles. Unlike genetic sequences, transcriptomic profiles are highly dynamic and context-dependent.  Gene expression in a cell reflects both its functional and regulatory states, many of which are transient, environmentally induced, or unrelated to heritable lineage history. Variables such as stage in the cell cycle, nutrient availability, and niche-specific signaling can obscure true ancestral signals. Additionally, distantly related cells may exhibit convergent expression profiles due to functional similarity. For instance, in the model nematode, \textit{Caenorhabditis elegans}, transcriptomic similarity initially correlates with lineage but diverges after gastrulation, illustrating how lineage signals can be lost as cells commit to specialized fates \citep{packer_lineage-resolved_2019,qiu_systematic_2022}.  

Nevertheless, because cell lineages make history-dependent decisions during developmental differentiation, each cell is hypothesized to retain a record of these decisions in its molecular state. Despite the challenges posed by transient and non-heritable components of gene expression, researchers have used transcriptomic data as an auxiliary source of information to refine genetic barcoding approaches (e.g., CRISPR-Cas9) for lineage tracing \citep{zafar_single-cell_2020,wang_cospar_2022,pan_linrace_2023}. Moreover, certain gene expression patterns have been successfully leveraged to reconstruct portions of multicellular structures \citep{phillips_memory_2019,shaffer_memory_2020,ruckert_clonal_2022,mold_clonally_2024}. Additionally, selecting specific gene subsets has been shown to improve clustering of cells with shared lineage labels \citep{eisele_gene-expression_2024}. 

\paragraph{Limited ground-truth lineage annotations}
Although technologies like CRISPR-based barcoding can provide direct lineage labels, they remain impractical or ethically restricted for many organisms, resulting in a shortage of well-annotated datasets for training and evaluation \citep{mckenna_recording_2019,zafar_single-cell_2020}. \emph{C. elegans} is a rare exception: by painstakingly observing and directly tracking cell divisions, Brenner, Sulston, and colleagues established its entire embryonic lineage \citep{sulston_embryonic_1983}. Beyond this model nematode, fully resolved lineage annotations are virtually nonexistent in other species, and even partial annotations—where only certain branches or cell types are labeled—are both scarce and often incomplete \citep{domcke_reference_2023}.

A central reason for this scarcity is the inherent difficulty of reliably assigning lineage labels in single-cell data. There are multiple biological barriers. First, the majority of cell division and differentiation happens inside the body where, except in some special cases, observation without sacrificing the animal is impossible. Second, cells are typically around $10^{-5}$ meters (10 micrometers) in size, making both visible traits and molecular measurements technically challenging and prone to noise. Lastly, the number of cells in a typical multicellular organism is extremely large—for example, the human body comprises approximately $10^{14}$ cells—making tracking individual lineages extremely difficult.

\section{Related Works}
\paragraph{Metric Learning} Metric learning is a broad field focused on learning representations in which distances capture meaningful similarity relationships. In recent years, a subset of metric learning techniques known as contrastive learning has gained significant traction in deep learning. These methods have been applied in both supervised \citep{khosla_supervised_2021} and unsupervised \citep{he_momentum_2020} settings, with the main objective being to embed similar examples closer together while pushing dissimilar ones farther apart. In this paper, we compare two widely used metric learning objectives—Triplet loss \citep{schroff_facenet_2015} and Quadruplet loss \citep{chen_beyond_2017}—against our proposed approach.  

\paragraph{Representation Learning for scRNA-seq}
ScRNA-seq data are high-dimensional and subject to diverse technical and biological noise. Common strategies to handle this complexity often involve dimensionality reduction (e.g., PCA \citep{heumos_best_2023}), or deep generative models (e.g., scVI \citep{lopez_deep_2018}), which embed cells in a lower-dimensional space that preserves essential variability.

Recently, large-scale pretrained models have emerged in single-cell analysis. Transformer-based architectures such as Geneformer \citep{theodoris_transfer_2023} and scGPT \citep{cui_scgpt_2024} learn embeddings from massive single-cell corpora, facilitating tasks such as cell-type classification, data integration, and cross-modality predictions. Meanwhile, supervised contrastive learning on scRNA-seq has been applied to capture nuanced cellular states, offering efficient data usage and strong generalization \citep{yang_contrastive_2022, heimberg_cell_2024, zhao_sccobra_2025}.

Nevertheless, most existing approaches rely on a notion of similarity derived from overall transcriptomic profiles, which does not necessarily align with lineage relationships (as discussed in the next paragraph). Although these methods successfully map cellular states and correct for technical artifacts, they do not directly address how to exploit these representations for cell lineage reconstruction.

\paragraph{Trajectory Inference vs.\ Lineage Reconstruction}
Methods such as RNA velocity \citep{la_manno_rna_2018,bergen_generalizing_2020} and trajectory inference \citep{qiu_reversed_2017,street_slingshot_2018} reveal continuous trajectories in the molecular state space of the transcriptomes. Although these tools capture \emph{average} progression trends, they do not directly yield a cell-by-cell lineage hierarchy. Rather, they provide trajectory embeddings that broadly reflect state set evolution, rather than the lineage history of cells. Moreover, genes driving these gene trajectory embeddings are typically selected based on high global variance, potentially missing the key drivers for lineage-specific processes.

\paragraph{Cell Types vs. Cell Linages}
In standard single-cell analysis, cell types are typically inferred by grouping cells with similar transcriptomic profiles, either through unsupervised clustering \citep{heumos_best_2023} or reference-based classification methods \citep{aran_reference-based_2019,ianevski_fully-automated_2022,hu_cellmarker_2023}. While these approaches effectively capture functional similarities, they do not explicitly account for the developmental origins of each cell. In other words, two cells with nearly identical gene expression patterns may not necessarily share a recent common ancestor. In contrast, cell lineages focus on the actual historical branching process by which cells emerge and differentiate—thus requiring methods that go beyond mere transcriptomic similarity to capture genealogical relationships.

\paragraph{Phylogenetic Inference} 
Reconstructing the lineage history of species or cells is a phylogenetic inference problem. The parameter space is vast and inherently complex due to the combination of discrete topologies and continuous branch lengths, making the problem NP-hard in most formulations \citep{felsenstein_inferring_2003}. Despite this, various data assumptions and heuristic-based reconstruction algorithms have been developed, achieving reasonable performance on typical phylogenetic datasets. The most widely used approaches infer trees using maximum parsimony, maximum likelihood, Bayesian inference, or distance-based methods \citep{jones_inference_2020, gong_single_2022}. 

Our method falls within the category of distance-based lineage reconstruction, where the goal is to fit the data to its closest tree metric. Finding the optimal tree metric is NP-hard under $\ell_1$, $\ell_2$, and $\ell_\infty$ norms for unrooted trees, but significant progress has been made in improving approximation algorithms \citep{ailon_fitting_2005}. For instance, \cite{de_soete_least_1983} proposed a greedy approach by directly using gradient descent to find the closest additive distance matrix. However, instead of explicitly optimizing pairwise distances, our approach learns an embedding function that maps data points into a space where the geometric properties facilitate tree inference.

Notably, the most recent work \citep{schluter_integrating_2025} investigates how to identify phylogenetically informative features under a fully supervised setting using permutation approaches. In our study, we conduct similar permutation experiments to assess the feasibility of our approach, but with the added objective of not only characterizing the features but also directly reconstructing the tree.

\section{Problem Formulation: Cell Lineage Reconstruction}\label{sec:problem_formulation}
\subsection{Terminology}\label{sec:terminology}
Tree graph algorithms in biology originate from the mathematical systematics field dealing with lineage reconstruction of whole organisms, represented by species or taxons, while the cell lineage reconstruction problem arises from the field of molecular and developmental biology. These two different areas have distinct but also overlapping terminology \citep{kim_tutorial_1999,clevers_what_2017,zeng_what_2022,domcke_reference_2023,rafelski_establishing_2024}. Here,we establish following definitions to avoid confusion. 

\paragraph{Single-Cell Biology}
\begin{itemize}
    \item Cell state: Characterization of a cell’s molecular phenotype. This phenotype typically varies with time and space, comprised of measurements of gene expression, metabolism, and other functional properties.
    

    \item Cell lineage: The sequential path of cell divisions that traces a given cell’s ancestry back to the zygote. Some times, cell lineage is used to refer to parts of the path. For example, ``neuronal lineage'' might refer to the parts of the path that distictly lead to cell groups identified as neuronal cell type (see next).
    
    \item Cell type: A classification of cells based on cell phenotype and sometimes also cell lineages. In general, cells of the same type typically share functional and structural characteristics but does not necessarily imply lineage relationships. However, in some biological cases, a particular cell type might be established by its particular lineage relationship rather than just the cell phenotype.
    
\end{itemize}

\paragraph{Phylogenetics and Evolutionary Biology}
\begin{itemize}
    \item Phylogeny: A tree-structured graph describing the evolutionary history and relationships among a set of biological entities. In Systematics, these entities are typically taxons (see next). In particular, a phylogeny is typically a leaf-labeled tree graph, in the sense that only the leaves of the tree are measured and named entities and interior vertices are hypothetical unnamed ancestors.
    
    When the entities are cells, a phylogeny can be seen as a \emph{cell lineage tree}.
    \item Cell lineage tree: A specialized tree-structured graph where each node represent a cell or a group of cells, and edges represent cell division events. The root corresponds to the common ancestral cell for all cells in the tree. Ultimately, all cells lead to the fertilized egg as the root.  Branches typically depict temporal progression leading to changes in cell state termed differentiation. Sometimes, branches might represent a collection of cell divisions.
    
    \item Vertices: The nodes in a lineage tree. An \emph{internal vertex} usually corresponds to a cell division point, producing two (or more) daughter cells. A \emph{leaf vertex} (or \emph{leaf})  typically represents a terminally differentiated cell. However, if the cells are sampled from middle of the cell replication process, the leaf vertex may represent transient cell(s).
    
    \item Taxon (plural: taxa): \emph{taxon} is an abstract unit in Systematics, referring to a distinct group of organisms that has been annotated by an expert for the purposes of classification. A canonical example is a species or an isolated population. Here we use it to denote the biological entity represented by a vertex of the cell lineage tree. This entity might be a single cell or it might be a class of cells denoted as a cell type.
    
    \item Clade: A subset of a lineage tree that includes a common ancestor and all its descendant leaves. 
    
\end{itemize}

\subsection{Distance-Based Lineage Reconstruction}
Given a dissimilarity matrix $D$, the goal of a distance-based approach is to solve the following optimization problem \citep{felsenstein_inferring_2003}:
\begin{equation}\label{eq:distance_based_formulation}
    \min_{T \in \mathcal{T}} \|D - D_T\|_2, 
\end{equation}
where $\mathcal{T}$ denotes the space of all tree metrics, also called additive distance matrices. This problem is known to be NP-hard in the general case \citep{kim_tutorial_1999}. However, when $D$ is close to a perfect tree metric, there exist efficient heuristic algorithms that can recover the exact optimal tree. For instance, a well-known sufficient condition for the Neighbor-Joining (NJ) algorithm \citep{saitou_neighbor-joining_1987} to reconstruct the optimal tree is:
\begin{equation*}
    \|d' - d_{T}\|_\infty \leq x^*/2,
\end{equation*}
where $x^*$ denotes the shortest edge length in $T$ \citep{atteson_performance_1999}.  

A key motivation for formulating the problem in Eq.~\ref{eq:distance_based_formulation} arises when the leaf data are generated by a tree-structured Markov process. Specifically, for phenotypic data, we suppose each vertex $v$ in a tree $T$ has a random state $\mathbf{x}_v \in \mathbb{R}^d$ generated by an independent-increments process (e.g., Brownian motion). Under standard assumptions (zero-mean Gaussian increments, independence along edges, etc.), the \emph{expected} squared distance between any two leaves $i$ and $j$ is \emph{additive} in the path length on $T$.

\begin{lemma}[Additivity of Expected Distances, Informal]\label{lemma:informal}
\emph{If $\{\mathbf{x}_i\}$ arise from a continuous-time Markov process on a tree $T$ with independent Gaussian increments along each edge (e.g., Brownian motion), then for any two leaves $i$ and $j$,}
\[
    \mathbb{E}\bigl[\|\mathbf{x}_i - \mathbf{x}_j\|^2\bigr] 
    \;=\; D_{T}(i, j),
\]
\emph{where $D_{T}(i, j)$ is an additive (tree) distance reflecting the unique path between $i$ and $j$.}
\end{lemma}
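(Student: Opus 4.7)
The plan is to reduce the claim to the observation that a Brownian (or more generally, independent-increments Gaussian) process on a tree produces differences that are sums of \emph{independent} edge increments, after which additivity follows from the fact that variance is additive for independent centered random variables.

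First I would fix notation. Root the tree at some vertex $r$, and for each edge $e$ directed from parent $p(e)$ to child $c(e)$ with length $t_e$, define the increment $\mathbf{z}_e = \mathbf{x}_{c(e)} - \mathbf{x}_{p(e)}$. The hypotheses give $\mathbf{z}_e \sim \mathcal{N}(0, \sigma^2 t_e I_d)$ and the collection $\{\mathbf{z}_e\}_{e \in E(T)}$ is mutually independent. Telescoping along any root-to-leaf path yields $\mathbf{x}_v - \mathbf{x}_r = \sum_{e \in P_{r \to v}} \mathbf{z}_e$ for every leaf $v$.

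Next I would localize to the two leaves of interest. Let $a = \mathrm{lca}(i,j)$ be the most recent common ancestor, so that the paths $P_{r \to i}$ and $P_{r \to j}$ share exactly the segment $P_{r \to a}$. Subtracting the two telescoping sums cancels the shared contribution and gives
\begin{equation*}
  \mathbf{x}_i - \mathbf{x}_j \;=\; \sum_{e \in P_{a \to i}} \mathbf{z}_e \;-\; \sum_{e \in P_{a \to j}} \mathbf{z}_e,
\end{equation*}
where $P_{a \to i}$ and $P_{a \to j}$ are edge-disjoint and their union is precisely the unique $i$-to-$j$ path $P_{i \to j}$ in $T$. Expanding $\mathbb{E}[\|\mathbf{x}_i - \mathbf{x}_j\|^2]$, all cross terms $\mathbb{E}[\mathbf{z}_e^\top \mathbf{z}_{e'}]$ with $e \neq e'$ vanish by independence and zero mean, so only the diagonal survives:
\begin{equation*}
  \mathbb{E}\bigl[\|\mathbf{x}_i - \mathbf{x}_j\|^2\bigr] \;=\; \sum_{e \in P_{i \to j}} \mathbb{E}\bigl[\|\mathbf{z}_e\|^2\bigr] \;=\; \sum_{e \in P_{i \to j}} d\,\sigma^2 t_e.
\end{equation*}
Defining edge weights $w_e := d\,\sigma^2 t_e$ and $D_T(i,j) := \sum_{e \in P_{i \to j}} w_e$ produces the claimed additive tree metric; the four-point condition follows automatically from summing weights along unique paths in a tree.

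The proof is largely bookkeeping, and the only real conceptual step is the edge-disjointness of $P_{a \to i}$ and $P_{a \to j}$, which is what converts independence of edge increments into additivity of expected squared distances. A minor subtlety worth flagging in the write-up is that the same argument works verbatim for any Markov process on $T$ whose per-edge increments are independent and centered with variance depending only on $t_e$ (not necessarily Gaussian); the Gaussian assumption is invoked only to match the Brownian-motion framing and to justify the statement that the full joint distribution is tractable, not for the additivity conclusion itself.
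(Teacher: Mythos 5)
Your proposal is correct and follows essentially the same route as the paper's proof in Appendix~\ref{appendix:lemma-proof}: root the tree, telescope the independent edge increments, cancel the shared segment so that only the edges on the unique $i$--$j$ path survive, and use independence plus zero mean to reduce the expected squared norm to a sum of per-edge variances (your $d\,\sigma^2 t_e$ is exactly the paper's $\operatorname{trace}(\Sigma_e)$ specialized to isotropic covariance). Your explicit use of the least common ancestor to justify the edge-disjoint decomposition is slightly more careful than the paper's phrasing, but the argument is the same.
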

A proof of this lemma, including detailed assumptions and covariance structure, is provided in Appendix~\ref{appendix:lemma-proof}. In essence, these conditions ensure that the expected squared deviation contributed by each edge along the path from $i$ to $j$ sums linearly, inducing an additive tree metric on the leaves.

Despite this theoretical grounding, in reality, tree reconstruction using additive distance principles can be challenging. Phenotypic data often violate the idealized Markov assumptions due to non-heritable effects, measurement noise, and convergent expression. These violations of model assumptions can be more problematic than the challenges of fitting a model-optimal tree. Standard sequence-based phylogenetic data are typically preprocessed and curated by experts, whereas molecular phenotypic data lack principles for preprocessing. Thus, fitting a tree to the observed distance matrix $D$, is likely to lead to a large deviation from the ground truth tree. 

Instead of relying solely on tree-fitting methods, we propose learning an \emph{embedding function} $f: \mathbb{R}^p \rightarrow \mathbb{R}^d$ that maps each phenotype vector $\mathbf{x}\in\mathbb{R}^p$ into a (lower) $d$-dimensional representation $\mathbf{z} = f(\mathbf{x})$. Our goal is for pairwise distances among embedded points $\{\mathbf{z}_i\}$ to approximate an additive tree metric, thus extracting hierarchical signals. Formally, we can write:
\begin{equation}\label{eq:embedding_objective}
    \min_{f, T} \|D(f(\bm x)) - D_T\|_2^2 + \lambda \Omega(f),
\end{equation}
where $\Omega(f)$ is a regularization term that prevents overfitting, weighted by $\lambda$. 

When $T$ is unknown, we jointly search for both the tree topology $T$ and the function $f$. When $T$ is known, Eq.~\ref{eq:embedding_objective} reduces to a supervised metric learning problem. However, in many real biological settings, \emph{the full tree is not known} but biologists may have coarse lineage information (e.g., grouping data into a few large clades, while the finer-grained structure within each clade remains unknown). Therefore, rather than formulating the problem over the full joint pair-wise relationship of the leaves, we find it more useful to decompose the problem into \emph{local constraints} on subsets of the leaves consisting of \emph{quartets}, which capture the additive distance property more flexibly. For instance, given a clade-level prior, some quartet subsets will be ``known'' in the sense that we know branching relationships (also termed ``tree topology''), while others will not be known (Fig.~\ref{fig:quartets_parition}). We can formulate a strategy to only learn from the known quartets,

\begin{equation}
\label{eq:quartet_objective}
\min_{f, T}
\sum_{q \in \mathcal{Q}}
\mathcal L( D\bigl(f(x_q)\bigr), D_{T}\bigl(x_q\bigr))
+
\lambda \Omega(f),
\end{equation}
where $Q$ is the set of known quartets.

Phylogenetic algorithms that operate on quartets, such as quartet puzzling \citep{strimmer_quartet_1996}, have shown strong performance in the face of moderate noise or partial data. Additional theoretical and empirical motivations for focusing on quartets—particularly their robustness to local errors and compatibility with partial supervision—are discussed in Appendix~\ref{appendix:quartet_motivation}. We formalize our quartet-based approach in Section~\ref{sec:proposed_approach} and show how it accommodates supervised, weakly supervised, and unsupervised regimes.

\subsection{Scenarios}\label{supervision_scenarios}

\paragraph{Supervised setting.} We assume that we have the \emph{true} additive distances of the edge-weighted tree for every pair of vertices. Our model is trained to ensure that the embedding distance $\|\mathbf{z}_i - \mathbf{z}_j\|$ (where $\mathbf{z}_i = f(\mathbf{x}_i)$) between phenotype vectors $\mathbf{x}_i$ and $\mathbf{x}_j$ is proportional to their known tree distances. At test time, given a new set of phenotype vectors $\{\mathbf{x}_{\text{new}}\}$ from the same ground-truth lineage tree, we embed them using $f$ to obtain $\{\mathbf{z_{\text{new}}}\}$ and reconstruct their lineage relationships based on the test set embeddings.

\paragraph{Weakly Supervised Setting.} In most biological contexts, the full ground-truth lineage is unavailable. Instead, only coarser or partial annotations might be provided, and our goal is to leverage the limited data to reconstruct the underlying lineage structure (see Appendix \ref{app:prior_info} for details). We highlight two practical cases:
\begin{itemize}
    \item \textit{High-level Partitioning Setting.} 
    We know that the population of cells can be divided into a few large clades based on prior biological knowledge (e.g., tissue origin or cell fate categories). However, the finer-scale relationships \emph{within} each clade are unknown.     
    \item \textit{Partially Leaf-labeled Setting.} 
    We have access to ground-truth lineage relationships for only a subset of cells, often obtained via lineage-tracing techniques (e.g., CRISPR-Cas9-based barcoding). The remaining cells are unlabeled. 
 \end{itemize}
\textbf{Unsupervised Setting.} No explicit lineage information is available at all. The model relies solely on the raw phenotypic data and data-driven estimation of desired metric properties in the embedding space. This scenario is the most challenging because the algorithm must disentangle lineage-related signals from confounding variation in the data without any direct lineage cues.

\subsection{Constructing the Tree}
After learning the embeddings $\{\bm{z}_i\}$ in which pairwise distances approximate the true tree distances, the next step is to build the lineage tree $\hat{{T}}$. We use the NJ algorithm by default which runs in time $O(n^3)$ for $n$ leaves. This is typically manageable for moderate dataset sizes. More discussion about phylogeny reconstruction is in \S\ref{app:prelimnaries_of_phylogeny}.

\subsection{Evaluation Metrics} \label{evaluation_metrics}
We evaluate the reconstructed tree $\hat{{T}}$ against the ground-truth tree ${T}$ using the following metrics. Additional details can be found in \S\ref{app:evaluation}.  

\paragraph{Robinson–Foulds Distance (RF).}  
The Robinson–Foulds (RF) \citep{robinson_comparison_1981} distance quantifies the topological difference between two leaf-labeled unrooted trees by comparing their sets of partitions, where each partition corresponds to a bipartition of the leaves induced by an internal edge. The RF distance counts the number of partitions that differ between the inferred and true trees. We report the normalized RF distance, where 0 indicates identical tree topologies and 1 implies that no partitions are shared between the two trees.  

\paragraph{Quartet Distance (QD).}  
Although widely used in the literature, the RF distance can be insensitive to finer-grained subtree structures \citep{bocker_generalized_2013}. To complement RF distance, we also report the Quartet Distance (QD) \citep{bryant_computing_2000}. Given any four leaf vertices (a quartet), there are three possible unrooted tree topologies describing their relationships. The quartet distance is computed as the fraction of quartets that are resolved differently between the inferred and true trees. For large trees, we approximate the quartet distance by randomly sampling a subset of leaf quartets.  

\section{Proposed Framework: \textit{CellTreeQM}}\label{sec:proposed_approach}
We present \textit{CellTreeQM}, (Cell Tree Quartet Metric), a framework designed to reconstruct lineage relationships from phenotypic data. As introduced in \S\ref{sec:problem_formulation}, our main objective is to learn an embedding function based on the assumption of additive pairwise distances. In this section, we introduce our approach to consider quartets of leaves and learn an embedding that optimizes tree-metric properties of (known) quartets in the latent space.


\subsection{Additivity Loss via the Four-Point Condition}
The additivity loss is derived from a classic property of the additive distance matrix known as the  \textit{four-point condition} (Theorem \ref{four_point_condition}).
Denote a quartet of leaf vertices ${A,B,C,D}$ and define the three distance sums:
\begin{align*}
    S_1 =& D(\bm z_A, \bm z_B) + D(\bm z_C, \bm z_D),\\
    S_2 =& D(\bm z_A, \bm z_C) + D(\bm z_B, \bm z_D),\\
    S_3 =& D(\bm z_A, \bm z_D) + D(\bm z_B, \bm z_C).
\end{align*}\label{dist_sums}
The Four-Point Condition suggests that if $D$ are additive tree distances, two of these sums match exactly and both exceed the remaining sum. In addition, the ordering of the $S_1$, $S_2$, $S_3$ quantities defines a unique unrooted tree topology (Fig.~\ref{fig:quartet_intuition}a-c). Considering relaxation where $D(,)$ are not additive, we define terms that measure deviation from additivity. Assume we have an expected ordering $S_1 \ge S_2 \ge S_3$, where the ordering is either from supervised information or based on observed magnitudes. We define:
\begin{itemize}
    \item $\mathcal{L}_{\text{close}}$ measures the gap between the top two sums and as a loss function component encourages the two terms to be equal, 
    \[\mathcal{L}_{\text{close}} = |S_1 - S_2|.\]
    \item $\mathcal{L}_{\text{push}}$ enforces a margin so that the smallest sum ($S_3$) is sufficiently less than the average of the top two,
    \[\mathcal{L}_{\text{push}} = \Bigl[S_{3} -  \frac{S_{1} + S_{2}}{2} + m_0\Bigr]_+,\]
    where $m_0 > 0$ is a margin hyperparameter.
\end{itemize}

We combine them into a single quartet loss: $\mathcal{L}_{\text{quartet}} = \mathcal{L}_{\text{close}} + \mathcal{L}_{\text{push}}$, then average over all (or a sampled subset of) quartets $Q$ in each training batch: 
\begin{equation}\label{eq:quartet_loss}   
\mathcal{L}_{\text{addivity}} = \frac{1}{Q} \sum_{Q} \mathcal{L}_{\text{quartet}} = \frac{1}{Q} \sum_{Q} \mathcal{L}_{\text{close}} + \mathcal{L}_{\text{push}}
\end{equation}

Figure~\ref{fig:quartet_intuition} illustrates the geometric intuition of the loss. When the distance matrix is perfectly additive, the four points form an unrooted tree, such as in Figure~\ref{fig:quartet_intuition}a where $S_2 = D(A,C)+D(B,D)$ and $S_3 = D(A,D)+D(B,C)$ and $S_2 = S_3$. When additivity is violated, this balance is disrupted, and the structure can be imagined as a ``box'' with an extra edge (Figure~\ref{fig:quartet_intuition}d). The $\mathcal{L}_{\text{close}}$ term encourages the top two distance sums to become more similar, thereby reducing the imbalance that creates the box-like distortion. In effect, it ensures that the box is not ``fat''. Meanwhile, the $\mathcal{L}_{\text{push}}$ term increases the gap between the smallest sum and the average of the top two, effectively ``widening the bridge.'' This widening enhances the tree model’s robustness to noise and distortions from the ideal tree structure.
\begin{figure}[ht]
    \centering
    \includegraphics[width=0.5\linewidth]{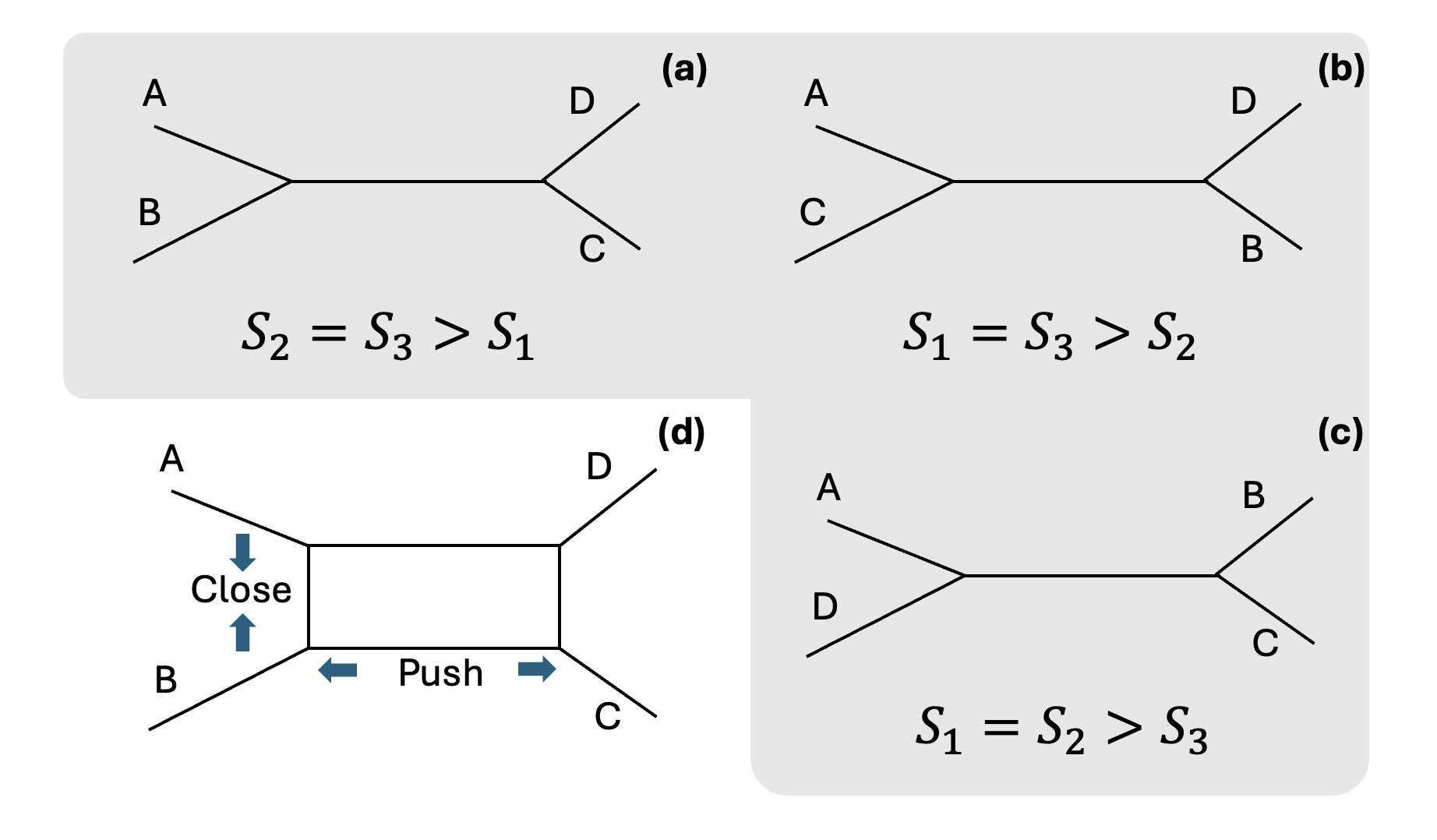}
    \caption{Geometric intuition of the quartet loss. (a-c) show the three possible unrooted tree topologies for a quartet $\{A,B,C,D\}$, (d) depicts the ``box-like'' distortion that arises if the distances are not perfectly additive.}
    \label{fig:quartet_intuition}
\end{figure}
\subsection{Regularization: Deviation Loss}
To prevent the learned embedding from drifting too far away relative to the original data structure, we introduce a deviation loss $\Omega$. This loss penalizes large discrepancies between the original distance matrix $D(X)$ and the induced distance matrix $D(f(X))$ in the latent space:
\setlength{\abovedisplayskip}{4pt}
\setlength{\belowdisplayskip}{4pt}
\begin{equation*} 
\Omega(f, X) = \frac{1}{N} \| \mathcal{D}(f(X)) - \mathcal{D}(X) \|_F^2 
\end{equation*}
where $\| \cdot \|_F^2$ denotes the squared Frobenius norm and $N$ is the number of points. This ensures that the latent space remains faithful to the measured phenotypic similarities while preventing the learning process from degenerating in scale.

\subsection{Feature Gating}
High-dimensional phenotypic datasets (e.g., scRNA-seq) usually include numerous features that do not reflect lineage-related variation. To address this, we introduce a \textit{feature gating} module, which adaptively modulates the contribution of each input feature. By emphasizing lineage-relevant signals and down-weighting confounding or redundant attributes, the gating module can improve downstream tree reconstruction. To learn an effective feature mask, we use a Gumbel-Softmax \citep{gumbel_statistical_1954} for discrete gating decisions. 
\[
\bm g \;=\; \text{GumbelSoftmax}\bigl(\text{MLP}(\bm{E}), \tau, \text{hard}=\text{True}\bigr),
\]
where $\bm{E}$ is an embedding for each feature, $\text{MLP}$ projects this embedding into 2-dimensional logits $\ell_i$ (one for ``off'' and one for ``on''), and $\tau$ is a temperature parameter. In the hard-sampling regime ($\text{hard} = \text{True}$), the gating weights become nearly binary ($0 \text{ or } 1$), effectively pruning or retaining each feature. Applying the gating is a simple element-wise product: $\tilde{\bm{x}}_i = \bm{x}_i \cdot g_i$.

\paragraph{Sparsity Penalty.}
To encourage gating out superfluous features, we introduce a \textit{sparsity penalty}:
\begin{equation*}
\Omega_{\text{gates}}^{(\text{sparsity})} = \lambda_{\text{spar}} \frac{\sum_{i} g_i}{\text{input dim}},
\end{equation*}
where $\lambda_{\text{spar}}$ is a tunable coefficient. Lower total activation ($\sum_i g_i$) results in a smaller penalty, thus encouraging the network to keep fewer features ``on.'' This penalty helps reduce noise and highlight relevant signals for lineage reconstruction.

\paragraph{Integrated Objective.}
Combining Gumbel gating with our core metric-learning objectives, we arrive at the overall optimization function:

\begin{equation}
\min_{f, g}
\Bigl[
\mathcal{L}_{\text{additivity}}(f \circ g, X) +
\lambda \,\Omega_(f \circ g,X) +
\Omega_{\text{gates}}^{(\text{sparsity})}
\Bigr].
\end{equation}

This integrated framework allows us to \emph{jointly} learn which features are most informative for lineage reconstruction and how to embed them to satisfy the quartet-based additivity constraints. 

\begin{figure*}
    \centering
    \includegraphics[width=\linewidth]{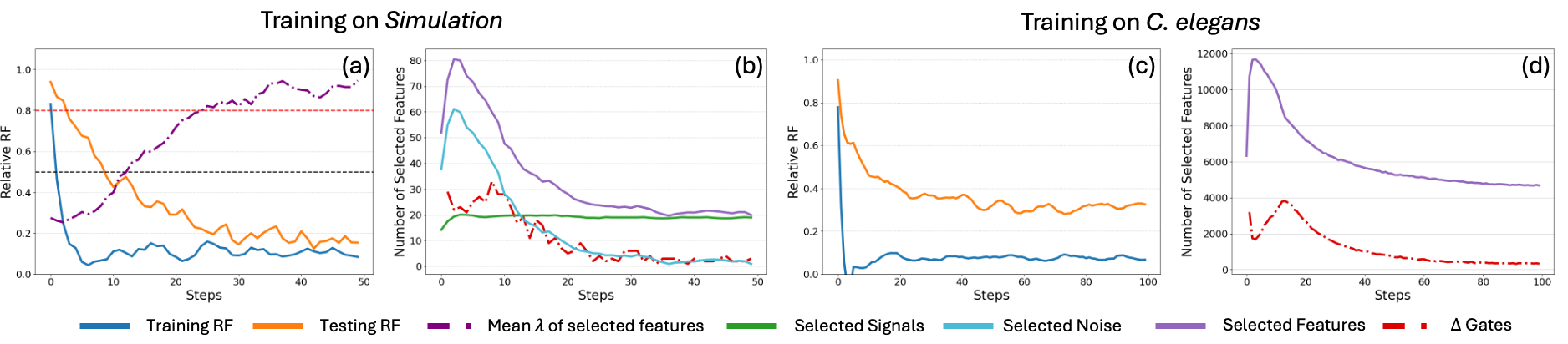}
    \caption{Supervised training dynamics on the simulation and \textit{C. elegans} Small dataset.
    \textbf{(a)} For the simulated dataset, the dashed pink line represents the RF distance of the tree reconstructed from the raw data, while the dashed black line indicates the RF distance using only the ``signal’’ features. The dashed purple line shows the average Pagel’s $\lambda$ of the selected features, serving as a benchmark for phylogenetic signal strength.
    \textbf{(b)} The feature selection process gradually excludes noise features (blue), while the number of selected signal features (green) stabilizes. The difference in gating values (dashed red) decreases over time, indicating that feature selection is converging.
    \textbf{(c-d)} Training on the \textit{C. elegans} Small dataset: (c) RF distance and (d) the number of selected features exhibit similar trends as observed in the simulation dataset.}
    \label{fig:supervised_training}    
\end{figure*}

\subsection{Model Architecture}
Our proposed framework, \emph{CellTreeQM}, aims to learn embeddings from high-dimensional phenotypic data that facilitate phylogenetic reconstruction. To effectively learn relationships among cells, we use a sequence of Transformer encoder blocks as the backbone of the network, illustrated in Figure~\ref{fig:phylodist}. Unlike classical Transformer models, we do not include positional encodings, as our input leaves are not inherently ordered. Without positional constraints, the self-attention module focuses purely on learning meaningful relationships based on the feature similarities between cells. We compared the performance between the network with Transformer encoders as backbones and fully connected layers as the backbone under a supervised setting on two \emph{C. elegans} real datasets (see Table.\ref{tab:transformer-vs-fc} in Appendix). Networks with the attention module performed much better than the fully connected architecture.

\section{Cell Lineage Reconstruction Benchmark}
To systemically evaluate lineage-reconstruction methods, we introduce a benchmark spanning synthetic datasets and three scales of lineage-resolved real datasets. Figure \ref{fig:supervised_training} shows the training dynamics of \textit{CellTreeQM} on a synthetic dataset and a real dataset under supervised settings. \S\ref{app:dataset} provides full dataset descriptions, parameter choices, and curation procedures.

\paragraph{Lineage-Resolved \emph{C.~elegans} Dataset:}
\label{sec:celegans_benchmark}
Among model organisms, \emph{C.~elegans} is uniquely suited for benchmarking lineage reconstruction because its embryonic cell lineage is \emph{invariant} and fully \emph{resolved}. We curate three subsets of increasing size -- \emph{C.~elegans Small}, \emph{Mid}, and \emph{Large} -- containing 102, 183, and 295 leaves, respectively, using transcriptomic atlases from \citep{packer_lineage-resolved_2019,large_lineage-resolved_2024}. Each subset provides fully lineage-resolved ground truth topologies. These datasets allow us to measure reconstruction quality at multiple scales.

\paragraph{Synthetic Brownian-Motion Simulations:}
\label{sec:sims_benchmark}
We generate synthetic data by simulating a fully binary tree with random edge weights and evolving feature vectors along branches via Brownian motion. To introduce realistic complexity, we add independent Gaussian noise and ``alternative-tree'' features that follow some confounding lineages. By tuning parameters such as the number of leaves or the relative strengths of signal and noise, we obtain multiple levels of reconstruction difficulty (Fig.\ref{fig:signal_dataset}, Table~\ref{tbl:supervised_simulation}).

\section{Experiments}\label{results}
\subsection{Baselines}
We consider the following two common contrastive losses in metric learning. The details can be found in \S\ref{app:baselines}. \emph{CellTreeQM} and baselines produce learned embedding spaces from which we extract pairwise distances between leaves. 

\begin{itemize}
    \item \textbf{Triplet Loss:}~ We designate the closest pair from each quartet as anchor–positive and the farthest leaf as negative. The loss encourages $\|f(A)-f(P)\| < \|f(A)-f(N)\|$.
    \item \textbf{Quadruplet Loss:}~ Extends Triplet Loss by introducing a second negative leaf, enforcing additional pairwise margins to improve global distance structure.
\end{itemize}

\subsection{Cell Aggregation}
Recall that a cell lineage is a sequential path of cell divisions and a leaf is the terminal state of the division.\footnote{Readers may find it helpful to refer to Section \ref{sec:terminology} for a review of relevant terminology.} A taxon is a biological entity that the leaf stands for. Thus far, we have assumed that each taxon corresponds to one point of cell state. In other words, we assume the observed molecular profile of one cell can be used to represent the terminal state. In the simplest case—where all cells in the dataset come from one individual—each cell can indeed serve as a leaf in the tree (as in our \textit{Synthetic Brownian-Motion Simulations}). However, in most real scRNA-seq experiments, one point of cell state alone is typically insufficient to robustly characterize the terminal state of a cell lineage due to many transient variations. Therefore, the taxon should be a set of replications of cells (e.g., from multiple animals, experimental batches) to achieve adequate statistical power. One can imagine that each taxon has a distribution of cell states with intrinsic and extrinsic variation. The observed cell states are sampled from the distributions of lineage terminals

A common approach is therefore to group cells by taxon and aggregate their identities to reduce technical noise and biological variation. Specifically, let $v$ denote a taxon, and let $S_v$ be the set of cells assigned to $v$. One may then define an aggregate molecular vector
\begin{equation*}
    \bm{x}_v = \mathrm{Aggregate}\bigl(\bm{x}^i : i \in S_v\bigr),
\end{equation*}
where $\bm{x}^i$ is the cell identity (i.e. molecular profile) of a cell $i$ and \(\mathrm{Aggregate}\) could be as simple as the arithmetic mean, which is a standard practice in many single-cell lineage-resolved atlases \citep{packer_lineage-resolved_2019,large_lineage-resolved_2024}.

In the \textit{Lineage-Resolved C. elegans Dataset}, the authors provide the map $\{S_v\}$ from cells to taxon $v$. For datasets without direct annotation, one can employ methods such as local pooling \citep{l_lun_pooling_2016}, which merges nearby cells based on transcriptomic similarity. These grouped or ``pseudo-bulk'' profiles can then serve as robust taxon-level representations for downstream lineage reconstruction.

\begin{table}[ht]
\centering
\caption{\textbf{Supervised results on the \emph{C. elegans Small} dataset.} 
Direct reconstruction on raw data yields RF = 0.923 and QDist = 0.554. Suffix ``-G'' denotes feature gating, and ``-p'' indicates label permutation. The reported values are means across three runs, with standard deviations in parentheses. $\Delta$\%RF and $\Delta$\%DQ measure the relative improvement in RF or QD on the test dataset over direct raw-data reconstruction.}
\begin{tabular}{@{}lcccc@{}}
\toprule
& \textbf{Train RF \(\downarrow\)}       & \textbf{Test RF \(\downarrow\)} & \textbf{$\Delta$\%RF \(\uparrow\)}     & \textbf{$\Delta$\%QDist \(\uparrow\)}  \\ \hline
CellTreeQM   & \textbf{0.000} (0.00) & \textbf{0.286 (0.05)}                      & \textbf{0.690} (0.05) & \textbf{0.867} (0.02) \\
CellTreeQM-G & \textbf{0.000} (0.00) & \textbf{0.226 (0.02)}                      & \textbf{0.757} (0.03) & \textbf{0.848} (0.01) \\
CellTreeQM-p & 0.013 (0.00)                           & 0.566 (0.04)                               & 0.434 (0.04)                           & 0.691 (0.05)                           \\ \hline
Triplet      & 0.519 (0.03)                           & 0.741 (0.01)                               & 0.179 (0.01)                           & 0.637 (0.02)                           \\
Triplet-G    & 0.545 (0.03)                           & 0.724 (0.00)                               & 0.203 (0.03)                           & 0.631 (0.02)                           \\
Triplet-p    & 0.677 (0.06)                           & 0.963 (0.01)                               & 0.037 (0.01)                           & 0.385 (0.06)                           \\ \hline
Quadruplet   & 0.057 (0.00)                           & 0.492 (0.03)                               & 0.454 (0.02)                           & 0.784 (0.01)                           \\
Quadruplet-G & 0.061 (0.00)                           & 0.471 (0.04)                               & 0.484 (0.04)                           & 0.791 (0.01)                           \\
Quadruplet-p & 0.118 (0.00)                           & 0.848 (0.01)                               & 0.149 (0.01)                           & 0.538 (0.02)                           \\ \bottomrule
\end{tabular}
\label{tbl:supervsied_celegans_small}
\end{table}

\subsection{Supervised setting}\label{se:weakly_supervised_setting_all_quartets}
Similar to \citep{schluter_integrating_2025}, we conduct experiments in a supervised scenario where all quartet topologies are assumed known and we train a model that enforces such topology in the embedding space. While this setting has little empirical significance, its results justify the soundness of our approach.
We use $\Delta\%\text{RF}$ and $\Delta\%\text{QD}$ (see Eq.\ref{delta_percentage}) to measure relative improvement compared to tree estimates on original raw data and Training RF to assess how well the learned embedding preserves the global tree structure implied by the quartets.

Table~\ref{tbl:supervsied_celegans_small} shows results on the \emph{C. elegans Small} dataset, whose direct reconstruction from the raw data yields high RF (0.923), suggesting almost no lineage information. We train each method both with and without feature gating (\emph{-G}). Additionally, to test whether our method is overfit to any given topology, we randomly permute (\emph{-p}) the labels of the leaves of ${T}$ to obtain an arbitrary tree ${T}'$ for training. Several key trends emerge:

\paragraph{CellTreeQM consistently outperforms contrastive losses.} By explicitly enforcing the four-point condition across quartets, \textit{CellTreeQM} consistently achieves higher improvement over raw data ($\Delta\%$RF and $\Delta\%$QD) compared to the triplet and quadruplet baselines. Moreover, triplet loss can recover moderate levels of quartet topology, but its latent space is not aligned on the global tree topology as represented by a large Train RF and low $\Delta\%$RF. This is reasonable because both of the contrastive losses do not explicitly enforce tree topology in the embedding space. Furthermore, feature gating (\emph{-G}) typically yields additional gains, especially in RF distance, likely by pruning out non-heritable or noisy gene expressions. 

\paragraph{Permutation experiments validate quartet-based fitting.} Training \emph{CellTreeQM} on a label permuted tree ${T}'$ still embeds the data with moderate fidelity ($\Delta\%$RF), reflecting information in an unlabeled tree, but less accurately than with the true tree. In contrast, triplet and quadruplet models struggle to align a random tree with the data. Similar trends hold for the \emph{C. elegans Mid} and \textit{Large} datasets, where triplet/quadruplet losses improve over raw data but still lag behind \emph{CellTreeQM} (see Table~\ref{tbl:supervised_full}). These results echo the recent findings in \citep{schluter_integrating_2025} showing that \emph{true lineage topologies} yield lower training losses and more generalizable embeddings, thus confirming there is ample lineage signal in the gene expression data---our method is not merely overfitting. 

Figure~\ref{fig:supervised_tsne_all} (2D t-SNE) shows that \textit{CellTreeQM} better reconstructs the hierarchical structure than Triplet or Quadruplet. Figure~\ref{fig:supervised_circle_tree} further illustrates the reconstructed trees, where \textit{CellTreeQM} yields lineage-consistent subtrees.

\subsection{Weakly Supervised}\label{sec:weakly_supervised_setting_partial_quartets}
In most biological applications, we lack access to the complete ground-truth lineage. To represent realistic conditions, as introduced in \S\ref{supervision_scenarios}, we formulated two scenarios where only partial knowledge of the underlying tree is available. If we consider all possible quartets, partial knowledge will allow us to identify the topology of some quartets (termed ``known quartets'') while leaving others unresolved (``unknown quartets''). In the following subsections, we demonstrate that trained on the known quartets, \emph{CellTreeQM} is able to generalize to the unknown quartets and effectively recovers large portions of the global tree structure under these reduced supervision regimes.

\paragraph{High-level Partitioning Setting.} 
We define a node’s \emph{level} as the number of branching steps from the root (e.g., the root is level 0, its immediate children are level 1, and so on). In this setting, we assume that at each level, the leaves are partitioned into a fixed number of clades (see Fig.~\ref{fig:intro-framework}, left). 
So each leaf is assigned to exactly one high-level clade, but the tree structure among clades and \emph{within} each clade remains unknown. 
Consequently, we can infer the structure of some of the quartets based on the partition prior. As demonstrated in Fig.~\ref{fig:quartets_parition}, any quartet with a pair of leaves drawn from the same clade has a unique unrooted topology among the three possible configurations.

\begin{table}[ht]
\centering
\footnotesize
\caption{\textbf{Weakly supervised results on \emph{C.~elegans Small} under different partition levels.} 
\(\mathrm{K\text{-}QD}\) and \(\mathrm{U\text{-}QD}\) 
are quartet distances on the \emph{known} and \emph{unknown} quartets, respectively.
The reported values are means across five runs, with
standard deviations in parentheses.}
\label{tbl:high_level_partition_celegans_small}
\begin{tabular}{@{}lrrrrrr@{}}
\toprule
\textbf{Method}             & \multicolumn{1}{c}{\textbf{RF\(\downarrow\)}} & \multicolumn{1}{c}{\textbf{$\Delta$\%RF\(\uparrow\)}} & \multicolumn{1}{c}{\textbf{QD\(\downarrow\)}} & \multicolumn{1}{c}{\textbf{$\Delta$\%QD\(\uparrow\)}} & \multicolumn{1}{c}{\textbf{$\Delta$\%K-QD\(\uparrow\)}} & \multicolumn{1}{c}{\textbf{$\Delta$\%U-QD\(\uparrow\)}} \\ \midrule
\textbf{Partition Level: 3} & \multicolumn{1}{l}{}   & \multicolumn{1}{c}{}                                  & \multicolumn{1}{l}{}   & \multicolumn{1}{c}{}                                  & \multicolumn{1}{c}{}                                    & \multicolumn{1}{c}{}                                    \\
CellTreeQM                  & \textbf{0.538(0.01)}   & \textbf{0.403(0.01)}                                  & \textbf{0.156(0.00)}   & \textbf{0.715(0.01)}                                  & \textbf{0.999(0.00)}                                    & \textbf{0.247(0.02)}                                    \\
Triplet                     & 0.702(0.01)            & 0.232(0.01)                                           & 0.264(0.01)            & 0.518(0.01)                                           & 0.722(0.02)                                             & 0.183(0.01)                                    \\
Quadruplet                  & 0.854(0.01)            & 0.081(0.02)                                           & 0.241(0.01)            & 0.560(0.02)                                           & 0.971(0.01)                                             & -0.117(0.04)                                 \\ \midrule
\textbf{Partition Level: 2} & \textbf{}              & \textbf{}                                             & \textbf{}              & \textbf{}                                             & \textbf{}                                               & \textbf{}                                               \\
CellTreeQM                  & \textbf{0.657(0.01)}   & \textbf{0.286(0.02)}                                  & \textbf{0.129(0.01)}   & \textbf{0.764(0.03)}                                  & \textbf{0.998(0.00)}                                    & \textbf{0.511(0.06)}                                    \\
Triplet                     & 0.773(0.02)            & 0.168(0.02)                                           & 0.326(0.01)            & 0.405(0.01)                                           & 0.726(0.01)                                             & 0.058(0.02)                                             \\
Quadruplet                  & 0.871(0.01)            & 0.052(0.02)                                           & 0.286(0.00)            & 0.477(0.00)                                           & 0.995(0.00)                                             & -0.084(0.01)                                            \\ \midrule
\textbf{Partition Level: 1} & \multicolumn{1}{l}{}   & \multicolumn{1}{c}{}                                  & \multicolumn{1}{l}{}   & \multicolumn{1}{c}{}                                  & \multicolumn{1}{c}{}                                    & \multicolumn{1}{c}{}                                    \\
CellTreeQM                  & \textbf{0.773(0.02)}   & \textbf{0.164(0.01)}                                  & \textbf{0.209(0.03)}   & \textbf{0.619(0.05)}                                  & \textbf{0.998(0.00)}                                    & \textbf{0.467(0.07)}                                    \\
Triplet                     & 0.813(0.02)            & 0.120(0.02)                                           & 0.352(0.01)            & 0.357(0.02)                                           & 0.831(0.01)                                             & 0.168(0.02)                                             \\
Quadruplet                  & 0.859(0.01)            & 0.061(0.00)                                           & 0.423(0.01)            & 0.228(0.01)                                           & 0.672(0.02)                                             & 0.051(0.01)                                    \\ \bottomrule
\end{tabular}
\end{table}
We compute the additivity loss \emph{only} on the known quartets. We then evaluate on the full set of leaves, measuring the overall RF and QD against the ground-truth tree. Additionally, we separate the results of quartets into known versus unknown and measure QD for each group (K-QD for known quartets, U-QD for unknown). The performance on the U-QD shows the ability of learning on known quartets to generalize to the unknown quartets.

Table~\ref{tbl:high_level_partition_celegans_small} summarizes performance for varying levels on the \emph{C.~elegans Small} dataset. In all cases, \emph{CellTreeQM} outperforms contrastive baselines. Notably, $\Delta\%\mathrm{K\text{-}QD}$ for \textit{CellTreeQM} is near 1.0 at all levels, indicating near-perfect recovery of the \emph{supervised} quartets. More importantly, $\Delta\%\mathrm{U\text{-}QD}$ remains substantial (up to 0.51), demonstrating that the learned embedding preserves tree structure for \emph{unsupervised} quartets as well. The contrastive baselines generally struggle to maintain strong global alignment in these partially supervised settings, especially \textit{Quadruplet}, whose $\Delta\%\mathrm{U\text{-}QD}$ can even dip below zero (indicating worse performance than raw data).

Interestingly, Partition Level~2 yields the best $\mathrm{QD} (0.129)$ and highest $\Delta\%\mathrm{QD} (0.764)$, meaning it most accurately resolves local quartet structure. In line with the theoretical result in \S\ref{app:high_level_partition}, having \emph{four} broad clades maximizes the fraction of quartets whose topology can be directly inferred, thereby aiding the learning process. Partition Level~3, in contrast, achieves the lowest $\mathrm{RF}$ (best global structure) but slightly higher $\mathrm{QD}$. This indicates that going one level deeper captures better “big-picture” topology (lower $\mathrm{RF}$) at a small cost to local quartet fidelity ($\mathrm{QD}$).

\begin{table}[ht]
\centering
\scriptsize
\caption{\textbf{Weakly supervised results on \emph{C.~elegans Small} across different known fractions.} 
\(\mathrm{K\text{-}QD}\), \(\mathrm{P\text{-}QD}\) and \(\mathrm{U\text{-}QD}\) 
are quartet distances on the \emph{known}, \emph{partial} and \emph{unknown} quartets, respectively.
The reported values are means across ten runs, with standard deviations in parentheses.}
\label{tbl:partial_label_results_celegans_small}
\scalebox{0.9}{
\begin{tabular}{@{}lcccccccc@{}}
\toprule
\textbf{Method} & \textbf{Train RF\(\downarrow\)} & \textbf{RF\(\downarrow\)}                   & \textbf{$\Delta$\%RF\(\uparrow\)} & \textbf{QD\(\downarrow\)}                   & \textbf{$\Delta$\%QD\(\uparrow\)} & \textbf{$\Delta$\%K-QD\(\uparrow\)} & \textbf{$\Delta$\%P-QD\(\uparrow\)} & \textbf{$\Delta$\%U-QD\(\uparrow\)} \\ \midrule
\multicolumn{9}{l}{\textbf{Known Fraction: 0.8}}                                                                                                                                                                                                                                          \\
CellTreeQM      & \textbf{0.024(0.03)}            & \textbf{0.506(0.05)} & \textbf{0.448(0.06)}              & \textbf{0.086(0.03)} & \textbf{0.842(0.05)}              & \textbf{0.999(0.00)}                & \textbf{0.742(0.07)}                & \textbf{0.465(0.13)}                \\
Triplet         & 0.454(0.06)                     & 0.750(0.04)          & 0.175(0.05)                       & 0.149(0.02)          & 0.728(0.03)                       & 0.895(0.01)                         & 0.624(0.05)                         & 0.339(0.14)                         \\
Quadruplet      & 0.066(0.04)                     & 0.542(0.03)          & 0.403(0.03)                       & 0.112(0.03)          & 0.796(0.06)                       & 0.953(0.02)                         & 0.697(0.09)                         & 0.435(0.22)                         \\ \midrule
\multicolumn{9}{l}{\textbf{Known Fraction: 0.5}}                                                                                                                                                                                                                                          \\
CellTreeQM      & \textbf{0.012(0.01)}            & 0.827(0.04)          & 0.092(0.05)                       & \textbf{0.214(0.03)} & \textbf{0.609(0.05)}              & \textbf{0.999(0.00)}                & \textbf{0.598(0.06)}                & \textbf{0.398(0.05)}                \\
Triplet         & 0.303(0.09)                     & 0.869(0.03)          & 0.049(0.04)                       & 0.272(0.03)          & 0.505(0.05)                       & 0.879(0.02)                         & 0.493(0.05)                         & 0.304(0.06)                         \\
Quadruplet      & 0.023(0.02)                     & \textbf{0.807(0.04)}          & \textbf{0.115(0.04)}                       & 0.248(0.02)          & 0.549(0.04)                       & 0.934(0.03)                         & 0.537(0.04)                         & 0.340(0.06)                         \\ \midrule
\multicolumn{9}{l}{\textbf{Known Fraction: 0.3}}                                                                                                                                                                                                                                          \\
CellTreeQM      & \textbf{0.000(0.00)}            & 0.932(0.02)          & -0.023(0.02)                      & \textbf{0.347(0.03)} & \textbf{0.368(0.06)}              & \textbf{1.000(0.00)}                & \textbf{0.401(0.06)}                & 0.250(0.06)                         \\
Triplet         & 0.156(0.06)                     & 0.921(0.01)          & -0.001(0.02)                      & 0.353(0.02)          & 0.358(0.04)                       & 0.889(0.02)                         & 0.384(0.04)                         & \textbf{0.263(0.05)}                \\
Quadruplet      & 0.008(0.02)                     & 0.924(0.02)          & -0.020(0.03)                      & 0.352(0.02)          & 0.358(0.03)                       & 0.918(0.04)                         & 0.386(0.03)                         & 0.259(0.02)                         \\ \bottomrule
\end{tabular}
}
\end{table}

\paragraph{Partially Leaf-labeled Setting.} In this scenario, a fraction $\kappa$ of leaves (e.g., 30\%, 50\%, or 80\%) have known lineage information, while the rest remain unresolved (see Fig.~\ref{fig:intro-framework} right). Quartets are considered as \emph{Known} if all four leaves are labeled, \emph{Unknown} if no leaves are labeled, and \emph{Partial} otherwise. During training, we apply the additivity loss only to Known quartets. After training, we embed \emph{all} leaves—labeled and unlabeled—and measure the reconstructed tree quality (RF and QD), alongside three specialized QD metrics for Known (K-QD), Partial (P-QD), and Unknown (U-QD) quartets.

Table~\ref{tbl:partial_label_results_celegans_small} shows that when $\kappa = 80\%$ or even 50\% of leaves are labeled, \textit{CellTreeQM} provides strong performance, highlighting that a moderate amount of lineage information can guide an effective tree embedding. However, at only $\kappa = 30\%$ known labels, \textit{all} methods exhibit only modest gains in $\mathrm{QD}$ and even negative $\Delta\%\mathrm{RF}$. Indeed, $\kappa^4 = 0.3^4 = 0.0081$, meaning fewer than 1\% of quartets are fully labeled (see Fig.~\ref{fig:quartet_partial}). With such sparse supervision, the model has very little direct signal to guide the global tree structure.

Although \textit{Quadruplet} overfits (yielding negative $\Delta\%\mathrm{U\text{-}QD}$) in the high-level partitioning setting, it performs more competitively here—often surpassing \textit{Triplet} in $\Delta\%\mathrm{K\text{-}QD}$, $\Delta\%\mathrm{P\text{-}QD}$, and $\Delta\%\mathrm{U\text{-}QD}$. This suggests that introducing a second negative leaf (as in \textit{Quadruplet} Loss) can help capture some global structure and extrapolate to unlabeled quartets.

\subsection{Unsupervised Setting}\label{sec:unsupervised_setting}
Lastly, we present preliminary results under the unsupervised regime, where no quartet constraints or partial subtree information are given. Here, we compute empirical $S_1, S_2, S_3$ quantities in the latent space and choose the ordering that best fits the observed quartet quantities. Although the problem is more difficult—and in practice, purely unsupervised lineage inference from phenotypes alone can be error-prone—we find that \textit{CellTreeQM} can still learn a representation that partially respects tree-metric properties on simulated data. The performance on real empirical data was not significant, suggesting better strategies are needed for a completely unsupervised setting. See details in \S\ref{app:unsupervised}.

\begin{figure}[h]
    \centering
    \vspace{-10pt}
    \includegraphics[width=0.5\linewidth]{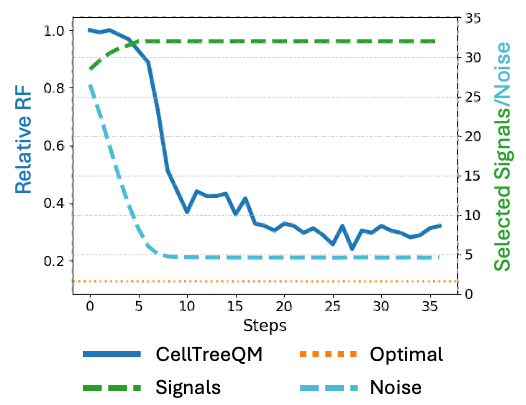}
    \caption{\textbf{Training dynamics of CellTreeQM in a purely unsupervised setting on a simulated dataset.} Optimal is the RF of reconstructed tree only based on signal features.
    }        
    \vspace{-10pt}
    \label{fig:unsupervsied_setting}
\end{figure}


\section{Discussion}
\paragraph{Summary.} We introduce \textit{CellTreeQM}, a deep learning framework for reconstructing cell lineage trees from phenotypic features via metric learning. The key idea is that the geometric properties of certain spaces are conducive to mapping to (inferring) tree-graph structures. Therefore, we formulate our problem as learning a latent space whose metric properties are optimized for quartets of tree-graphs. To our knowledge, this is the first method to cast cell lineage inference explicitly as a metric learning problem. Experimental results in supervised, weakly supervised, and unsupervised settings show that \textit{CellTreeQM} considerably improves lineage reconstruction accuracy over standard contrastive baselines (e.g., triplet and quadruplet losses), indicating that transcriptomic data contain cryptic lineage information, which can be uncovered with carefully designed metric learning models.

\paragraph{Limitations.} Despite its advantages, \textit{CellTreeQM} has several limitations. First, our loss function integrates tree-metric constraints, gating, and distortion regularization. Although the distortion term limits the deviation of the latent space geometry from the input feature space, it can impede learning when lineage signals in real data are significantly distorted. While gating helps suppress random noise in simulations, its impact for the real data is mild, suggesting that real “noise” features may be correlated with signal features. Second, we rely on NJ for final tree construction. Although NJ is widely adopted, it can fail under complex noise conditions. Future research could explore more robust tree-inference methods, such as Bayesian approaches or graph neural networks. Third, our benchmarks focus on lineage-resolved datasets with well-defined ground truth. Extending \textit{CellTreeQM} to more heterogeneous single-cell datasets (e.g., developmental atlases with partial lineage annotations) will be crucial for broader applicability. 


\paragraph{Guidance for Future Work.} The modest success of unsupervised \textit{CellTreeQM} on small, clean simulations opens the door to future improvements. Additional data-driven methods for determining optimal latent space geometry and more extensive hyperparameter tuning may further enhance performance. In real single-cell transcriptome datasets, the phenotype–lineage correlation can be weak, particularly at later stages of development; thus, stronger regularization or heuristic constraints might be needed. Nevertheless, our initial findings show that unsupervised approaches can capture coarse lineage structure without explicit topological supervision. We believe our strategy of learning metric properties of the latent space rather than directly trying to learn the tree-graph can be an effective approach.


\bibliography{references}
\bibliographystyle{plainnat}

\newpage
\appendix

\section{Preliminaries of Phylogeny}\label{app:prelimnaries_of_phylogeny}
A tree is a connected graph where every pair of nodes is connected by a unique path. Among other things, this restriction implies that in a tree there are no links ($v_i$, $v_{i'}$) that connect a node $v_i$ to itself. An additive tree is a connected undirected network where every pair of nodes is connected by a unique path \citep{sattath_additive_1977}. Since there exists only one path between any two nodes, the minimum path length distance between two nodes is equal to the length of the unique path that connects them. These distances are often referred to as path length distances or additive tree distances. 

In evolutionary biology, a phylogeny is a bifurcating tree that models the evolutionary relationships among a set of species or other biological entities. The leaves of this tree represent extant (or observed) entities, while the internal nodes represent their hypothetical common ancestors. Such trees help us understand how these entities have diverged over time.

With the advent of large-scale genomic data, the field of phylogenomics has emerged. Phylogenomics integrates phylogenetic analysis with genome-wide data, allowing for more accurate and comprehensive inferences about evolutionary history. Rather than focusing on a single gene (as in classical phylogenetics), phylogenomics considers data from multiple genes, entire genomes, or high-dimensional molecular measurements, providing a richer context for reconstructing the evolutionary relationships among species.

In this section, we introduce fundamental concepts for distance-based phylogeny reconstruction—an approach that finds an additive tree by mapping $n \times n$ distance matrix to $n \times n$ additive distance matrix \citep{kim_tutorial_1999,de_soete_least_1983}. These concepts form the theoretical basis of the learning objectives proposed in this work.

\subsection{Distance Matrices and Additivity}
Given a set of $n$ nodes, each represented by a vector $\bm{x}_v$, we can derive an $n \times n$ distance matrix $M$, where $M_{ij}$ denotes the distance between nodes $i$ and $j$. If $M$ is a valid metric, it must satisfy the following properties:
\begin{itemize}
    \item \textbf{Symmetric}: $M_{ij} = M_{ji}$ and $M_{ii} = 0$;
    \item \textbf{Triangle Inequality}: $M_{ij} + M_{jk} \ge M_{ik}$
\end{itemize}

Now, suppose ${\bm{x}_v}$ represents the leaves of a phylogeny and $w$ is the lowest common ancestor of two nodes $u$ and $v$, we expect $M_{uv} = M_{uw} + M_{wv}$. This property introduces the concept of an additive metric. A distance matrix $M$ is \textbf{additive} if there exists a phylogeny $T$ such that:

\begin{itemize}
\item Each edge $(u,v)$ in $T$ is associated with a positive edge weight $\delta_{uv}$.
\item For every pair of nodes $u, v$, the distance $M_{uv}$ equals the sum of the edge weights along the unique path from $u$ to $v$ in $T$.
\end{itemize}

For distance matrices with fewer than four leaves, $M$ must be additive. However, for matrices with four or more leaves, $M$ may not be additive. Buneman’s 4-point condition provides a criterion to determine additivity:
\begin{theorem}[4-Point Condition]\label{four_point_condition}
A distance matrix $M$ is additive if and only if the following holds: For any distinct leaves $i, j, k, l$, we can label them such that:
\[
M_{ik} + M_{jl} = M_{il} + M_{jk} \geq M_{ij} + M_{kl}.
\]
\end{theorem}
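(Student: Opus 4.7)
The plan is to prove the biconditional in two directions, with the reverse direction carrying most of the content.

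For the forward direction, I would assume $M$ is realized by an edge-weighted tree $T$ and restrict attention to the minimal subtree of $T$ spanning the four leaves $i, j, k, l$. Up to relabeling, this induced subtree is a quartet with two cherries, which after relabeling I may call $\{i,j\}$ and $\{k,l\}$, joined by a single internal edge of length $e \geq 0$. Let the four pendant edges have lengths $a_i, a_j, a_k, a_l$. A direct computation yields
\[
M_{ij} + M_{kl} \;=\; a_i + a_j + a_k + a_l,
\]
\[
M_{ik} + M_{jl} \;=\; M_{il} + M_{jk} \;=\; a_i + a_j + a_k + a_l + 2e,
\]
so the two ``crossing'' sums are equal and exceed the ``cherry'' sum by $2e \geq 0$. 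This establishes the labeling required by the condition.

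For the reverse direction I would proceed by induction on the number $n$ of leaves. The cases $n \le 3$ are immediate, since any such $M$ is realized by a star (or an edge) with half-sum edge weights. For the inductive step, the crux is to identify a cherry in the hypothetical tree purely from the entries of $M$. I would use a Saitou--Nei style selection criterion, choosing $i, j$ to minimize
\[
Q_{ij} \;=\; (n-2) M_{ij} - \sum_{k \ne i,j}\!\bigl(M_{ik} + M_{jk}\bigr),
\]
and show, via the 4-point condition, that this pair must be a sister pair in any additive realization. I then reduce to $n-1$ leaves by collapsing $\{i,j\}$ into a new symbol $u$ with
\[
M'_{uk} \;=\; \tfrac{1}{2}\bigl(M_{ik} + M_{jk} - M_{ij}\bigr),
\]
verify by a case analysis on the intersection of an arbitrary reduced quartet with $\{i,j\}$ that $M'$ still satisfies the 4-point condition, apply the inductive hypothesis to obtain a tree $T'$ realizing $M'$, and finally reattach $i$ and $j$ as a cherry at the vertex of $T'$ that represents $u$, with pendant lengths $\tfrac{1}{2}(M_{ij} + M_{ik} - M_{jk})$ and $\tfrac{1}{2}(M_{ij} + M_{jk} - M_{ik})$ for some auxiliary $k$.

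The main obstacle I anticipate is two-fold: first, showing that the cherry-selection criterion is actually correct, i.e., that the minimizing pair is forced to be siblings in any additive realization, which requires using the \emph{equality} portion of the 4-point condition (not only the inequality); and second, verifying that the reduced matrix $M'$ inherits the 4-point condition, since the statement must be re-derived from overlapping quartets in $M$ that mix $i$ and $j$ with other leaves. A cleaner alternative, which sidesteps some of this bookkeeping, would be to invoke Buneman's splits-equivalence: use the 4-point condition to define the system of splits $\{A\mid B\}$ for which $M_{ik}+M_{jl} > M_{ij}+M_{kl}$ whenever $\{i,j\} \subset A$ and $\{k,l\} \subset B$, show pairwise compatibility of these splits, and invoke the splits-equivalence theorem to produce $T$, with edge weights read off from quartet sum-gaps. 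Either route closes the induction and yields an additive realization of $M$.
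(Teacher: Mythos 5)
The paper does not prove this statement: it is quoted as Buneman's classical theorem (cited via \citep{buneman_recovery_1971} in the appendix on quartet completeness) and used as a black box, so there is no in-paper argument to compare against. Judged on its own terms, your forward direction is complete and correct: the minimal spanning subtree on four leaves is a (possibly degenerate) quartet, and the computation $M_{ik}+M_{jl}=M_{il}+M_{jk}=M_{ij}+M_{kl}+2e$ with $e\ge 0$ is exactly right, including the star case $e=0$.

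The reverse direction is a sound plan but, as written, leaves its two load-bearing lemmas unproved, and one of them is phrased in a way that hides a circularity trap. You say you will show the $Q$-minimizing pair ``must be a sister pair in any additive realization'' --- but at that point in the argument no realization is known to exist, so the statement you actually need is purely matrix-theoretic: from the four-point condition and the minimality of $Q_{ij}$, deduce that $M_{ij}+M_{kl}\le M_{ik}+M_{jl}=M_{il}+M_{jk}$ for \emph{all} $k,l\notin\{i,j\}$. The standard Studier--Keppler correctness proofs of the NJ criterion use the tree explicitly (counting leaves on either side of edges of the path from $i$ to $j$), so you cannot simply cite NJ correctness; you must rederive the cherry property from the 4-point condition alone. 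Once that property is in hand, the rest does go through: the equality half of the condition makes $M_{ik}-M_{jk}$ independent of $k$, so the pendant lengths and $M'_{uk}$ are well defined and nonnegative by the triangle inequality, and the inheritance of the 4-point condition by $M'$ follows by averaging the conditions for $\{i,k,l,m\}$ and $\{j,k,l,m\}$. Your fallback via Buneman splits and the splits-equivalence theorem is the historically original route and avoids the cherry-selection issue entirely; if you want a self-contained proof, that is probably the cleaner choice, provided you actually verify pairwise compatibility of the induced splits rather than asserting it.
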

This condition ensures that, among any four leaves, the two largest sums of pairwise distances are equal. This property is fundamental for the additivity, guaranteeing that $M$ corresponds to an additive tree. When $M$ satisfies this condition, the additive tree can be uniquely reconstructed in $O(n^2)$ time.


\subsection{Non-Additivity and Approximate Solutions}\label{preliminaries}
In practical phylogenomics, observed distance matrices $M$ often deviate from perfect additivity. In such cases, the objective becomes finding an additive matrix $M_T$ that corresponds to a tree $T$ that minimizes the sum of squared errors (SSQ), defined as:
\[
\min_T \|M_{T} - M\|_2.
\]
It is known that finding the optimal $T$ is $NP$-hard for various norms ($p=1,2,\infty$) \citep{farach_robust_1995}.

However, given a fixed tree topology $T$, one can at least solve for the optimal edge lengths $E$ that best fit $M$. This subproblem can be formulated as a non-negative least squares (NNLS) problem:
\begin{equation*}
    \min_{E \ge 0} \|P_T E - M_{\text{vec}}\|_2, 
\end{equation*}
where :
\begin{itemize}
    \item $E$ is an $m$-dimensional vector of edge lengths ($m$ is the number of edges in $T$).
    \item $M_{\text{vec}}$ the vectorized form of $M$, containing $\binom{n}{2}$ pairwise distances.
    \item $P_T$ is a ${n \choose 2} \times m$ path matrix encoding the tree’s topology. Each row corresponds to a pair $(i,j)$ of leaves, and each column corresponds to an edge $e$ in $T$. The entry $[P_T]_{(i,j);e}$ is 1 if edge $e$ lies on the path between $i$ and $j$, and 0 otherwise.    
\end{itemize}
This NNLS problem is convex and can be solved efficiently. The difficulty lies in choosing the optimal topology $T$. Because the space of possible tree topologies grows super-exponentially. Since enumerating and evaluating all possible trees is not practical for larger $n$, heuristic methods are employed to approximate the solution. One of the widely used heuristics is the Neighbor-Joining (NJ) method \citep{saitou_neighbor-joining_1987}. The NJ algorithm is efficient and makes no assumptions about the edge lengths. As shown in \cite{gascuel_concerning_1997}, NJ reconstructs the unique tree when given an additive distance matrix. Moreover, \cite{atteson_performance_1999} proved that if a distance matrix $M$ is nearly additive, there exists an additive distance matrix $D_T$ such that:
\[
|M - D_T|_\infty < \mu(T) / 2
\]
where $\mu(T)$ is the minimum edge length in $T$. All distance matrices $M$ that satisfy this condition share the same tree topology, meaning $T$ is the unique tree corresponding to these distances. The NJ algorithm has an optimal reconstruction radius in the sense that: (a) given a nearly additive distance function it reconstructs the unique tree $T$ and (b) there can be more than one tree for which $|M - D_T| < \delta$ holds if $\delta \ge \mu(T)/2$.

\subsection{From Genomic Data to Latent Representations}
In classical phylogenomics, evolutionary distances are estimated from genomic sequences using probabilistic models. However, in the context of single-cell data, the observed distance matrix often deviates substantially from additivity due to factors such as measurement noise, high dimensionality, and features unrelated to lineage. In this work, we propose learning a nonlinear mapping that projects high-dimensional observations into a latent space, and then computing distances in that space. The goal is for these learned distances to reflect the underlying lineage structure more accurately than distances directly computed from the original data. In essence, we aim to transform empirical dissimilarities into additive phylogenetic distances, thereby bridging the gap between observed data and their development histories.
\subsection{Constructing the Tree with NJ}
After learning the embeddings $\{\bm{z}_i\}$ in which pairwise distances approximate the true tree distances, the next step is to build the lineage tree $T$. A standard choice is the Neighbor-Joining (NJ) algorithm \citep{saitou_neighbor-joining_1987}, a greedy approach that iteratively merges pairs of nodes or clusters based on a pairwise distance matrix. Neighbor-Joining does not assume equal branch lengths and can yield accurate topologies even when distances are only approximately additive. 

The algorithm is guaranteed to recover the correct tree topology when the distances perfectly adhere to an additive tree metric, and it often performs well even when this assumption is not strictly met. In general, the method has been shown to work well with finite datasets, and it is one of the most widely used distance methods for tree graph inference.

 While NJ is a polynomial-time algorithm, its time complexity is $O(n^3)$ for $n$ taxa. This is typically manageable for moderate dataset sizes, but it may become computationally expensive for very large trees \citep{atteson_performance_1999,mihaescu_why_2009}.

\section{Markov Process on a Tree with Brownian Motion}\label{appendix:lemma-proof}
The motivation for using a distance-based formulation arises from Markov tree processes. Specifically, consider a continuous-state Markov process on a tree $T$ in which each node’s state evolves by Brownian motion along the edges. Such processes naturally induce an additive (tree) metric on the leaves, as described next.

We model cellular phenotypes as a vector-valued stochastic process evolving along a tree $T=(V, E)$ using a Brownian motion framework. Each vertex $v \in V$ has a continuous random state $x_v \in \mathbb{R}^d$, representing the phenotype of the corresponding cell or ancestral state.

\paragraph{Assumptions:}
\begin{itemize}
    \item Root Initialization: The trait value at the root $r$ of the tree is  $\mathbf{x}_r = \mathbf{x}_0$.
    \item Tree Markov Property: For each edge $e = (u \to v)$, the phenotype at $v$ depends only on its parent $u$.
    \item Independent Increments: Each trait change $\Delta_e = \mathbf{x}_v - \mathbf{x}_u$ is an independent Gaussian increment with zero mean.
    \item Brownian Covariance: For an edge $e$ of length $\ell_e, \Delta_e \sim \mathcal{N}(\mathbf{0}, \sigma^2 \ell_e \mathbf{I})$.
\end{itemize}
Under these assumptions, the expected squared Euclidean distance between any two leaves is proportional to the sum of the edge lengths on the unique path connecting them. Formally:

\begin{lemma}[Additivity of Expected Distances]
\label{lem:additivity-gaussian}
Suppose $\{X(v)\}_{v \in V}$ is a Gaussian Markov process on the tree T satisfying the four assumptions above. For any two leaves $i$ and $j$, define
\[
D(i,j) \;=\; \mathbb{E}\bigl[\|X(i) - X(j)\|^2\bigr].
\]

Then $D(i,j)$ is an additive (tree) metric. In particular, letting
\[
\ell_e := \operatorname{trace}(\Sigma_e) \quad \text{for each edge } e,
\]
we have
\[
D(i,j)
\;=\; \sum_{e \in \mathrm{path}(i,j)} \ell_e,
\]
where $\mathrm{path}(i,j)$ is the unique sequence of edges from $i$ to $j$.
\end{lemma}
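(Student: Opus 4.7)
The plan is to exploit the tree-Markov structure together with independence of edge increments to turn $X(i)-X(j)$ into a sum of orthogonal (in the $L^2$ sense) random vectors, one per edge on the path between $i$ and $j$. First I would fix the unique undirected path from leaf $i$ to leaf $j$ in $T$, passing through the most recent common ancestor $a = \mathrm{lca}(i,j)$. Traversing this path from $i$ up to $a$ and then down to $j$, I would write
\[
X(i) - X(j) \;=\; \sum_{e \in \mathrm{path}(i,a)} \bigl(-\Delta_e\bigr) \;+\; \sum_{e \in \mathrm{path}(a,j)} \Delta_e,
\]
where each $\Delta_e$ is the edge increment from parent to child. The telescoping works because the root-to-$a$ contributions cancel when one forms the difference $X(i)-X(j)$.

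Next I would use bilinearity of expectation together with the independence-of-increments assumption. Because each $\Delta_e$ has mean zero and distinct edges give independent increments, we obtain for any two distinct edges $e \neq e'$ on the path,
\[
\mathbb{E}\bigl[\langle \pm\Delta_e,\; \pm\Delta_{e'}\rangle\bigr] \;=\; \bigl\langle \mathbb{E}[\Delta_e],\, \mathbb{E}[\Delta_{e'}]\bigr\rangle \;=\; 0,
\]
so the cross terms vanish. The signs $\pm$ coming from the up/down orientation disappear once we take squared norms, so
\[
\mathbb{E}\bigl[\|X(i)-X(j)\|^2\bigr] \;=\; \sum_{e \in \mathrm{path}(i,j)} \mathbb{E}\bigl[\|\Delta_e\|^2\bigr].
\]

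Finally, I would compute each per-edge term. Since $\Delta_e \sim \mathcal{N}(\mathbf{0}, \Sigma_e)$ (with $\Sigma_e = \sigma^2 \ell_e \mathbf{I}$ under the stated Brownian covariance, but the argument works for any covariance $\Sigma_e$), we have
\[
\mathbb{E}\bigl[\|\Delta_e\|^2\bigr] \;=\; \mathrm{trace}(\Sigma_e) \;=:\; \ell_e,
\]
which gives the claimed additive formula $D(i,j) = \sum_{e \in \mathrm{path}(i,j)} \ell_e$. Additivity as a tree metric then follows because $\mathrm{path}(i,j) = \mathrm{path}(i,k) \triangle \mathrm{path}(j,k)$ for any third leaf $k$ (symmetric difference of edge sets), so the four-point condition of Theorem~\ref{four_point_condition} is automatically satisfied.

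The only mildly subtle step is justifying that all cross terms vanish: one must be careful that the two half-paths $\mathrm{path}(i,a)$ and $\mathrm{path}(a,j)$ share no edges (true because $a$ is the lca and $T$ is a tree), so the full collection of increments $\{\Delta_e : e \in \mathrm{path}(i,j)\}$ is genuinely a set of mutually independent mean-zero Gaussians. Everything else is bookkeeping of signs in the telescoping sum and a one-line trace identity, so I do not expect any real obstacle beyond setting up the path decomposition cleanly.
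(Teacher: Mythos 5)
Your proof is correct and follows essentially the same route as the paper's: decompose $X(i)-X(j)$ into a signed sum of independent, mean-zero edge increments along the unique path (you telescope through the lca, the paper through an arbitrary root with cancellation of the shared prefix — the same identity), kill the cross terms by independence, and identify each per-edge contribution as $\operatorname{trace}(\Sigma_e)$. No gaps.
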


\begin{proof}
\textbf{Step 1: Expressing Leaf States via Edge Increments.}
Pick a root $r$ in $T$.  For any vertex $v$, we have
\[
  X(v) 
  \;=\; X(r) \;+\; \sum_{e \in \mathrm{path}(r,v)} \Delta_e.
\]
All increments $\Delta_e$ are independent.

\textbf{Step 2: Difference of Leaf States.}
For leaves $i$ and $j$:
\[
  X(i) - X(j)
  \;=\; \bigl[X(r) + \textstyle\sum_{e \in \mathrm{path}(r,i)} \Delta_e\bigr]
         \;-\; \bigl[X(r) + \textstyle\sum_{e \in \mathrm{path}(r,j)} \Delta_e\bigr].
\]
Canceling $X(r)$, we get
\[
  X(i) - X(j)
  \;=\; \sum_{e \in \mathrm{path}(r,i)} \Delta_e 
         \;-\; \sum_{e \in \mathrm{path}(r,j)} \Delta_e.
\]
Since $\mathrm{path}(i,j)$ is the intersection of $\mathrm{path}(r,i)$ and $\mathrm{path}(r,j)$ (with some edges traversed in opposite directions), we can rewrite
\[
  X(i) - X(j) 
  \;=\; \sum_{e \in \mathrm{path}(i,j)} \!\!\pm\,\Delta_e.
\]

\textbf{Step 3: Expected squared distance.}
By the linearity of expectation and the mutual independence of the increments $\{\Delta_e\}$ on different edges:
\[
  D(i, j) = \mathbb{E}\!\bigl[\|X(i) - X(j)\|^2\bigr]
  \;=\; \mathbb{E}\!\Bigl[\Bigl\|\sum_{e \in \mathrm{path}(i,j)} \pm\, \Delta_e\Bigr\|^2\Bigr]
  \;=\; \sum_{e \in \mathrm{path}(i,j)} \mathbb{E}\!\bigl[\|\Delta_e\|^2\bigr],
\]
because cross-terms $\mathbb{E}[\Delta_e \cdot \Delta_{e'}]$ vanish for $e \neq e'$.

\textbf{Step 4: Relate to Trace of Covariance Matrices.}
Since $\Delta_e$ is a zero-mean Gaussian with covariance matrix $\Sigma_e$,
\[
  \mathbb{E}\!\bigl[\|\Delta_e\|^2\bigr]
  \;=\; \operatorname{trace}(\Sigma_e).
\]
Define
\[
   \ell_e 
   \;:=\; \operatorname{trace}(\Sigma_e)
   \;\ge\; 0.
\]
Thus,
\[
  D(i,j)
  \;=\; \sum_{e \in \mathrm{path}(i,j)} \ell_e,
\]
which is by definition a \emph{tree-additive} metric on the leaves. 
\end{proof}

\section{Motivation and Completeness of the Quartet-Based Approach}
\label{appendix:quartet_motivation}

In this section, we provide additional support for our use of local, quartet-based constraints in lineage reconstruction. As discussed in \S\ref{sec:problem_formulation}, quartet-based learning circumvents the need for a fully specified tree metric. Here, we discuss more deeply why quartets can be both theoretically sound and practically robust. This modular approach is well-suited to the cases where we have varying degree of supervised information for tree topologies.

\subsection{Completeness}
Quartets provide a complete characterization of an additive (tree) metric through the four-point condition (Them.~\ref{four_point_condition}). \cite{buneman_recovery_1971} shows that a distance matrix is tree-additive if and only if each quartet satisfies the four-point condition. Hence, enforcing consistency on all quartets is theoretically equivalent to enforcing a global tree metric. 
\subsection{Motivations}\label{appendix:local-vs-global}
\paragraph{Partial Constraints Fit Better with Quartets.} In many biological contexts, we do not know the full structure of the lineage tree but do have partial information about leaf labels. For example:
\begin{enumerate}
    \item \textbf{High-level Partitioning Setting.} Biologists may know that certain leaves belong to one of a few major clades but lack details on how those clades split internally.
    \item \textbf{Partially Leaf-labeled Setting.} A subset of leaves (cells) might have known lineage relationships from direct tracing experiments, while the others remain unlabeled.
\end{enumerate}

A distance-based approach that minimizes $\|D(f(x)) - D_{T_0}\|$ for a pruned tree $T_0$ can handle the Partial Leaf Labels scenario.  However, such a scheme cannot handle cases where we have qualitative partial knowledge. A quartet-based approach can integrate varying degrees of partial knowledge more naturally:
\begin{itemize}
\item In the High-level Partitioning Setting, a quartet is considered ``unknown'' if three or four of its leaves belong to the same clade, and ``known'' otherwise.
\item In the Partially Leaf-labeled Setting, a quartet is ``known'' if all four leaves have known relationships.
\end{itemize}

\paragraph{Error Toleration.} Classical phylogenetic methods often rely on a \emph{global} objective such as minimizing \(\|\,D - D_T\|\). However, in the presence of noise, outliers in \(D\) can have an outsized impact on the entire tree topology. In contrast, a \emph{quartet-based} method imposes constraints on only a subset of quartets at a time (as in Eq.~\ref{eq:quartet_loss}). This local enforcement often limits the impact of noisy distances to a small number of quartets, rather than globally distorting the inferred tree.

Moreover, even a fraction of erroneous quartets can still yield a correct (or near-correct) topology, provided that the majority are consistent with the true tree . This means local mistakes do not necessarily compromise the entire reconstruction, as long as there is sufficient consensus among the ``clean'' quartets. \citep{strimmer_quartet_1996,berry_faster_1999,richards_bayesian-weighted_2021,han_quartets_2023}

\subsection{Empirical Evidence from Quartet Methods}
\label{appendix:empirical_quartet}

Quartet-based phylogenetic methods have long been explored in the computational biology literature. For example:

\begin{itemize}[leftmargin=*]
    \item \textbf{Quartet Puzzling} \citep{strimmer_quartet_1996} uses repeated sampling of quartets from the full data, builds local trees, and then combines them to form a consensus. Empirical studies show this method performs competitively under moderate noise.
    \item \textbf{qNet} \citep{grunewald_qnet_2007} adapt quartet or triplet methods to construct more complex network or supertree models. They demonstrate robustness in settings where data are partially missing or have conflicting signals.
\end{itemize}

These studies highlight how localized, sample-based constraints often handle measurement noise more effectively than single-objective global optimization. Uncertain or contradictory quartets can be down-weighted or filtered out, preventing dubious distances from distorting the entire topology.
\section{Dataset Details}\label{app:dataset}

\subsection{Lineage-Resolved \emph{C.~elegans} Dataset}
Among model organisms, the nematode \emph{C.~elegans} is uniquely suited for benchmarking lineage-reconstruction methods because its embryonic development follows an \emph{invariant} pattern. Using the transcriptomic atlas by \citet{packer_lineage-resolved_2019} and \citep{large_lineage-resolved_2024}, we define three datasets with supervised tree-graph ground truth of varying sizes:
\begin{itemize}
    \item \textbf{C. elegans Large (295 leaves).} This dataset, drawn directly from \citep{packer_lineage-resolved_2019}, provides a broad coverage of terminal lineages, offering a moderately sized yet comprehensive benchmark.
    \item \textbf{C. elegans Small (102 leaves).} To create a simpler, more tractable dataset, we prune the original \emph{C.~elegans} lineage to include only nodes that have a clearly defined mapping to annotated cell types. This smaller tree is useful for rapid prototyping and evaluating basic performance. Details of curation can be seen in later of this subsection.
    \item \textbf{C. elegans Mid (183 leaves).} Building on the lineage-resolved atlas for both \emph{C.~elegans} and \emph{C.~briggsae} by \citet{large_lineage-resolved_2024}, we curate a set of 183 leaves that can be consistently mapped to both species. We use just the \emph{C.~elegans} portion here, providing an intermediate-sized dataset that balances coverage and complexity.
\end{itemize}

\paragraph{Preprocessing.} Following standard filtering criteria, we retain about 13,000 genes per dataset by removing those with minimal counts (fewer than 10 UMIs) or zero variance across cells.

\paragraph{Summary Statistics.} 
Table~\ref{tbl:c_elegans_stats} summarizes key statistics for each \emph{C.~elegans} dataset, including the number of cells per leaf, total leaves, Colless index \citep{lieberman_phylogenetics_2011}, tree diameter, depth, Faith’s Phylogenetic Diversity (PD)\citep{faith_conservation_1992}, and mean pairwise distance. 

The Colless index is a measure of tree imbalance, where a smaller value indicates a more balanced (symmetric) tree. Indeed, the relatively low Colless indices suggest that the \emph{C.~elegans} lineage is quite symmetric, and our pruned subsets remain well-distributed across the leaves. Faith’s PD is a measure of biodiversity, defined as the sum of branch lengths in a phylogenetic tree.

We provide two sets of tree metrics. In the \textit{Based on Reference Tree} section, branch lengths are set to be $1$s in the complete reference tree (containing 669 leaves), and these lengths are retrained and accumulated while pruning so that the distances among leaves in our curated datasets reflect the distances in the complete reference tree. In the \textit{Based on Topology} section, we recompute statistics under the assumption that all branch lengths are equal to 1 in the curated dataset, allowing us to evaluate the inherent structure of each pruned tree independently of the original branch lengths.

\begin{table}[ht]
\centering
\scriptsize
\caption{\textbf{Statistics of Datasets in the Cell Lineage Tree Reconstruction Benchmark.} 
Values in parentheses indicate standard deviations where applicable.}
\label{tbl:c_elegans_stats}
\scalebox{0.75}{
\begin{tabular}{@{}lccl|cccc|cccc@{}}
\toprule
\multirow{2}{*}{\textbf{Dataset}} & \multirow{2}{*}{\textbf{N Cells/Leaf}} & \multirow{2}{*}{\textbf{Leaves}} & \multirow{2}{*}{\textbf{Colless}} & \multicolumn{4}{c|}{\textbf{Based on Reference Tree}}                       & \multicolumn{4}{c}{\textbf{Based on Topology}}                              \\ \cmidrule(l){5-12} 
                                  &                                        &                                  &                                         & \textbf{Diameter} & \textbf{Depth} & \textbf{Faith's PD} & \textbf{Mean PD} & \textbf{Diameter} & \textbf{Depth} & \textbf{Faith's PD} & \textbf{Mean PD} \\ \midrule
C. elegans Small                  & 107                                    & 102                              & 0.047                                   & 21                & 7.92 (2.07)    & 293                 & 15.52            & 19                & 6.66 (2.01)    & 202                 & 12.27            \\
C. elegans Mid                    & 101                                    & 183                              & 0.018                                   & 19                & 8.21 (1.81)    & 424                 & 14.23            & 18                & 8.01 (1.66)    & 364                 & 13.53            \\
C. elegans Large                  & 117                                    & 295                              & 0.010                                   & 20                & 7.91 (1.69)    & 735                 & 15.84            & 20                & 7.62 (1.64)    & 588                 & 14.77            \\ \bottomrule
\end{tabular}
}
\end{table}

\paragraph{The Curation of C. elegans Small.}\label{app:curation_celegans_small}
In \citet{packer_lineage-resolved_2019}, three cell ontologies are available: (1) cell identities by barcode, (2) cell type annotations (from Packer \emph{et al.}), and (3) Lineage Node Names on the full embryonic tree (from Sulston \emph{et al.}). As illustrated in Figure~\ref{fig:entity_relation_celegans}, the relationship between Cell Identities and Cell Type Annotations is many-to-one, determined by manual annotation. Meanwhile, the relationship between Cell Type Annotations and Lineage Node Names is many-to-many.

On one hand, a single Cell Type Annotation may correspond to multiple lineage nodes due to symmetry (e.g., \emph{Cx} could be either \emph{Ca} or \emph{Cp}). On the other hand, one lineage node can be associated with multiple Cell Type Annotations because each annotation is defined as a distribution of cell states, which may overlap.

To create a relatively small, clean dataset while preserving the overall lineage structure of \emph{C.~elegans}, we selected only those cell type annotations that map to a single lineage node. We then constructed a lineage tree using these “clean” lineage nodes. This effectively serves as pruning the annotation tree. After pruning, we randomly drop a few clades to make sure the final lineage tree is in binary structure. As shown in Table ~\ref{tbl:c_elegans_stats}, the curated C.elegans Small dataset spans most major cell lines of \textit{C. elegans} (with large diameter and depth) while being relatively easier to reconstruct than the full lineage tree. 

\begin{figure}[ht]
    \centering
    \includegraphics[width=0.3\linewidth]{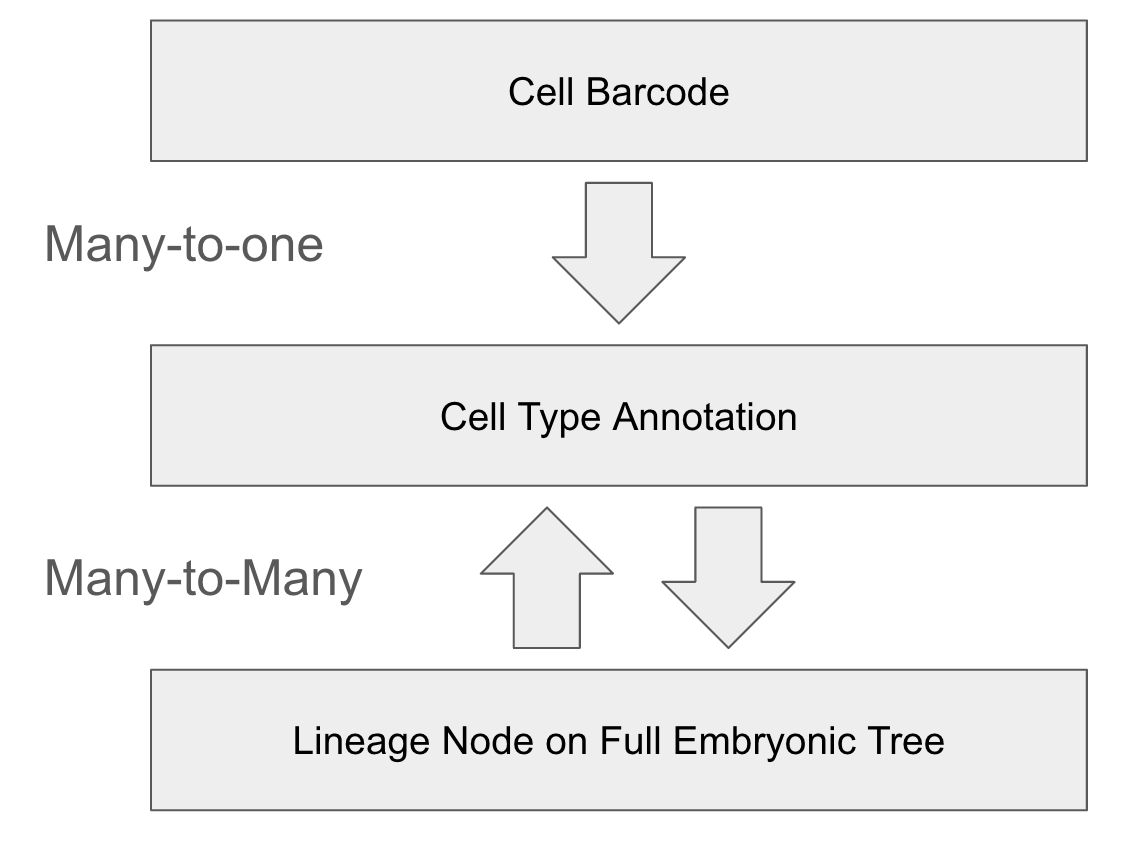}
    \caption{Entity relationships among the annotations in \citet{packer_lineage-resolved_2019}.}
    \label{fig:entity_relation_celegans}
\end{figure}

\subsection{Simulation: Brownian Motion on Lineage Tree}\label{app:simulation}
We developed a synthetic dataset that encompasses both heritable and non-heritable features to model cell lineage relationships in a structured, probabilistic manner. Each leaf $u$ in this dataset is assigned three types of features:
\[
\underbrace{(S_{u,1}, \dots, S_{u,n_{\text{signal}}})}_{\text{signal variables}}, \quad
\underbrace{(N_{u,1}, \dots, N_{u,n_{\text{noise}}})}_{\text{Gaussian noise}}, \quad
\underbrace{(A_{u,1}, \dots, A_{u,n_\text{AltSig}})}_{\text{alternative-tree noise}},
\]
providing a comprehensive representation that includes both true lineage signals and potential confounding factors.

\begin{figure}[ht]
\centering
\includegraphics[width=0.9\linewidth]{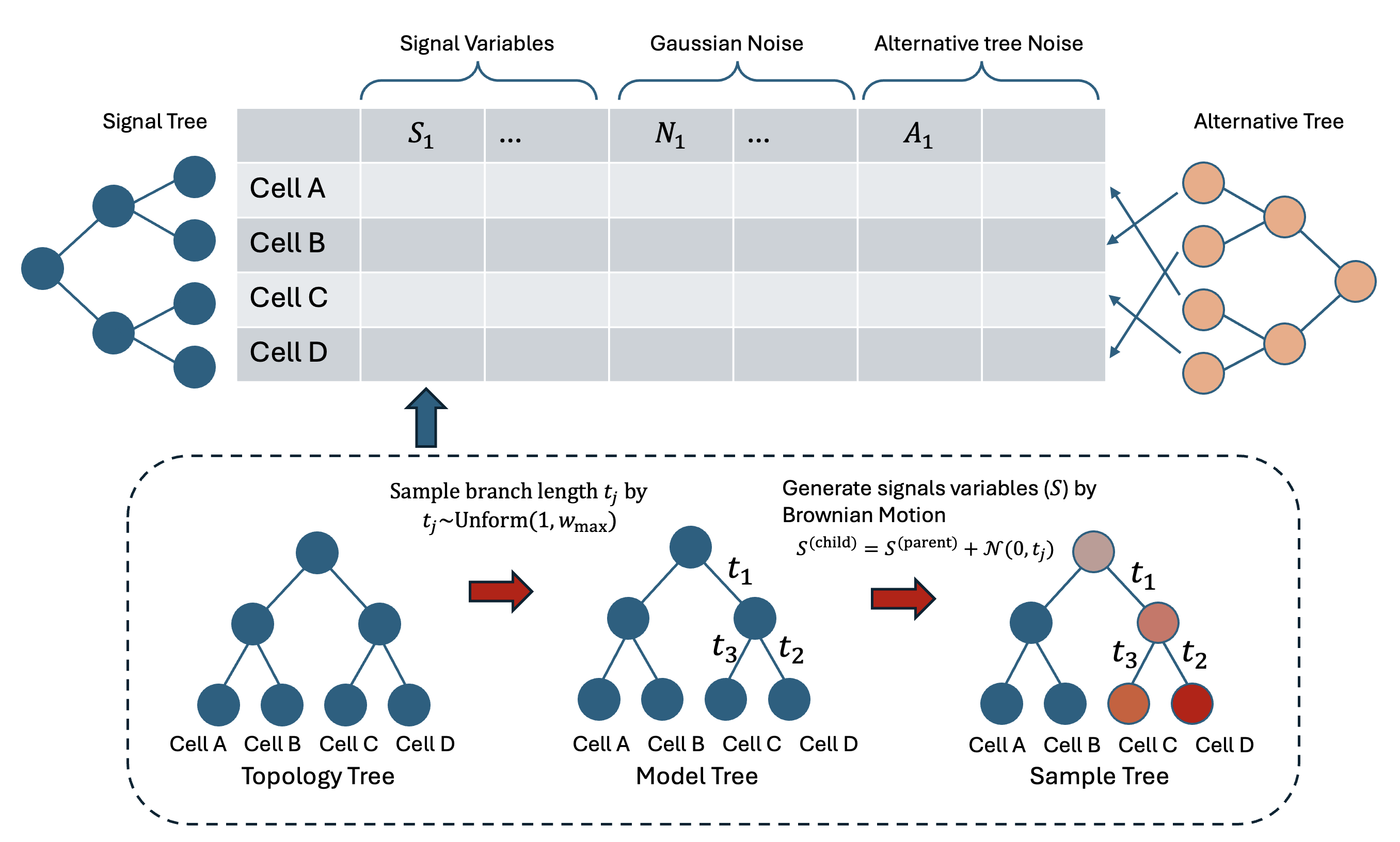}
\caption{Overall schematic of the lineage tree simulation. The simulation starts with a \emph{Topology Tree} defining cell relationships. Each edge is assigned a random length, producing a \emph{Model Tree}. Signal variables ($S_i$) are then generated along the branches via Brownian motion process, resulting in the \emph{Sample Tree}. Finally, Gaussian noise ($N_i$) and alternative tree noise ($A_i$) are added independently, yielding the complete feature matrix for each leaf. Alternative trees are constructed separately to simulate confounding topologies.}
\label{fig:simulation_schema}
\end{figure}

\subsubsection{Cell Linage Signals}
We represent the cell lineage as a full binary tree with $n_{\text{leaves}}$ terminal nodes. Each edge in the tree is assigned a length $t_j \sim \text{Uniform}(1, w_{\text{max}})$, which can be viewed as a developmental or temporal distance between parent and child nodes. To simulate heritable changes, we apply Brownian motion, a standard approach for continuous-character evolution \citep{omeara_testing_2006,pan_tedsim_2022}.

Starting from a zero vector at the root, each child node’s signal vector is obtained by adding a Gaussian increment $\mathcal{N}(0, t_j)$ to the parent’s signal vector along the branch $j$. Let $i$ denote the index of one out of $n_\text{signal}$ Brownian motion realizations, thereby defining a signal feature $S_i$ that encodes lineage information. Formally, Brownian motion at edge $j$ is
\[
S_i^{(\text{child})} = S_i^{(\text{parent})} + \mathcal{N}(0, t_j) \quad \text{for } i = 1, \dots, n_\text{signal}.
\]
This part of the simulation is governed by the hyperparameters ($n_\text{signal}, n_\text{leaves}, w_\text{walk}$). After generating all signal features, we compute the pairwise distance matrix and use Neighbor-Joining (NJ) to reconstruct the lineage tree. The reconstructed trees are compared with the true tree using the Robinson-Foulds (RF) distance. A grid-sweep of the hyperparameters was conducted, and the results are presented in Fig. \ref{fig:signal_dataset}.

\begin{figure}
    \centering
    \includegraphics[width=\linewidth]{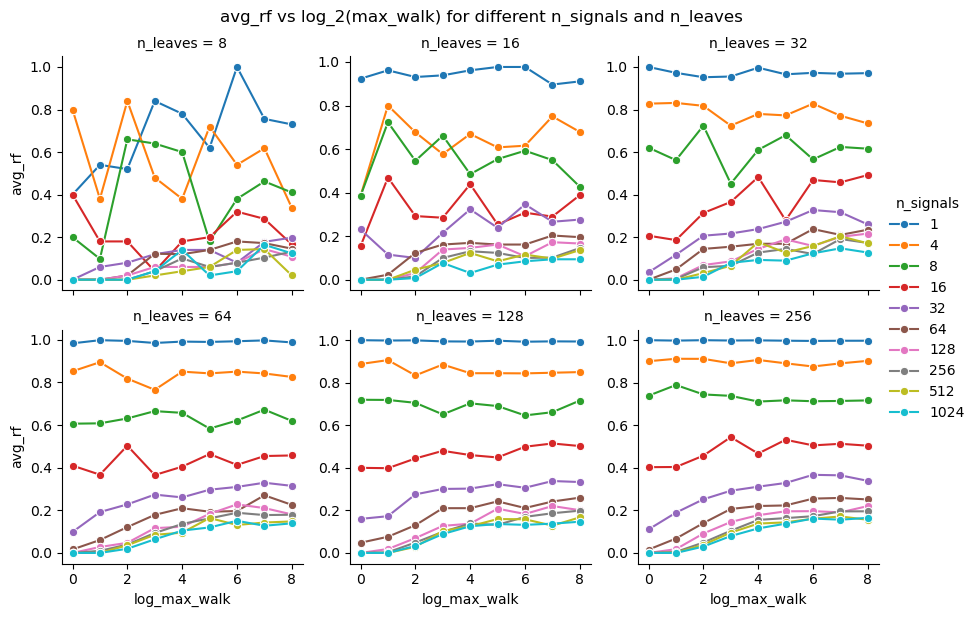}
    \caption{Mean Robinson-Foulds (RF) distance between reconstructed and true lineage trees as a function of $\log w_\text{max}$ for varying numbers of signals $n_\text{signal}$ and leaves $n_\text{leaves}$. Each panel represents a different value of $n_\text{leaves}$, while distinct lines within each panel correspond to different numbers of signals. Here, $w_\text{max}$ is the maximum possible edge length in the tree. Each data point shows the average RF distance across 25 trails (5 distinct model tree $\times$ 5 replicate sample tree per model tree.}
\label{fig:signal_dataset}
\end{figure}

Figure~\ref{fig:signal_dataset} illustrates several key factors affecting lineage reconstruction accuracy. Increasing the number of signals $n_\text{signals}$ generally reduces the average RF distance, reflecting better reconstructions. Conversely, as the number of leaves $n_\text{leaves}$ grows, the RF distance tends to rise, indicating the challenges of reconstructing larger and more complex trees. The parameter $\log w_\text{max}$ can help distinguish lineages by amplifying their differences—particularly when $n_\text{signals}$ or $n_\text{leaves}$ is small—but its benefits come with a random variation. Consequently, beyond a certain threshold, increasing $\log w_\text{max}$ does not further reduce the RF distance; instead, an inherent ``floor'' of imperfect reconstruction persists. Finally, there is a saturation effect: beyond some point, neither adding more leaves nor more signals consistently pushes the RF distance toward zero, highlighting the limit imposed by Brownian motion’s stochasticity and the maximum possible branch length $w_\text{max}$.

\subsubsection{Independent Gaussian Noises}
We introduce non-heritable noise features that are independent of the lineage. Each noise variable follows a Gaussian distribution with mean 0 and standard deviation $\sigma_\text{gau} = \beta\sigma_\text{signal}$. Symbolically,
\[
N_i \sim \mathcal{N}(0, \sigma_\text{gau}) \quad \text{for } i = 1, \dots, n_\text{gaussian}.
\]

\subsubsection{Alternative Tree Noise}

In addition to independent Gaussian noise, we introduce a second type of noise structured by an \emph{alternative tree}. This \emph{alternative tree noise}, or ``anti‐signal,'' represents confounding factors that give rise to correlated features unrelated to the true lineage. Examples include environmental or microenvironmental influences (e.g., regional nutrient availability), cell‐cycle synchronization, or technical batch effects. These factors can cluster cells according to a topology distinct from the actual lineage, thus posing a challenge for tree reconstruction.

\paragraph{Key parameters.}
To generate alternative tree noise, we first partition the set of leaves into one or more subsets. In our running example in Fig.~\ref{fig:alternative_tree_noise}, we split the leaves into two partitions of equal size, i.e., [0.5, 0.5]. For each partition, we construct an alternative tree of a specified size (equal to the number of leaves in the largest partition, set to 4 in the example) and create a new set of ``alternative'' feature columns, \text{AltSigs}. Each leaf in the original (signal) tree thus has a corresponding ``alternative'' leaf in the alternative tree. We can optionally have multiple alternative trees per partition, but in the example, we set this number to 1.

\paragraph{Generation procedure.}
Similar to the main (signal) tree, each alternative tree has its own branch lengths drawn from some distribution (e.g., $\text{Uniform}(1, w_{\max}^{(\mathrm{alt})})$). For every leaf $u$ in the main (signal) tree, there is a corresponding leaf $u^{\prime}$ in the alternative tree. We then generate ``alternative'' features for each such leaf by simulating a Brownian-motion–like process (or another model as desired), along the branches of the alternative tree. These ``anti-signals'' are thus correlated according to the \emph{alternative} topology rather than the true lineage topology. In the final dataset, each leaf $u$ accumulates an additional vector of alternative noise features,
\[A_{u,1}, A_{u,2}, \dots, A_{u,n_\text{AltSig}}.\]
Because these features reflect a different branching structure than the true cell lineage, they act as confounders in any tree reconstruction method that does not isolate genuine signals from extraneous structure. By controlling \textit{the number of partitions}, \textit{leaves per alternative tree}, \textit{number of alternative signals}, and \textit{total number of alternative trees}, one can tune the difficulty of distinguishing true lineage signal from these carefully structured ``anti-signal'' features. (Fig.~\ref{fig:alternative_tree_noise})

\begin{figure}[ht]
\centering
\includegraphics[width=0.9\linewidth]{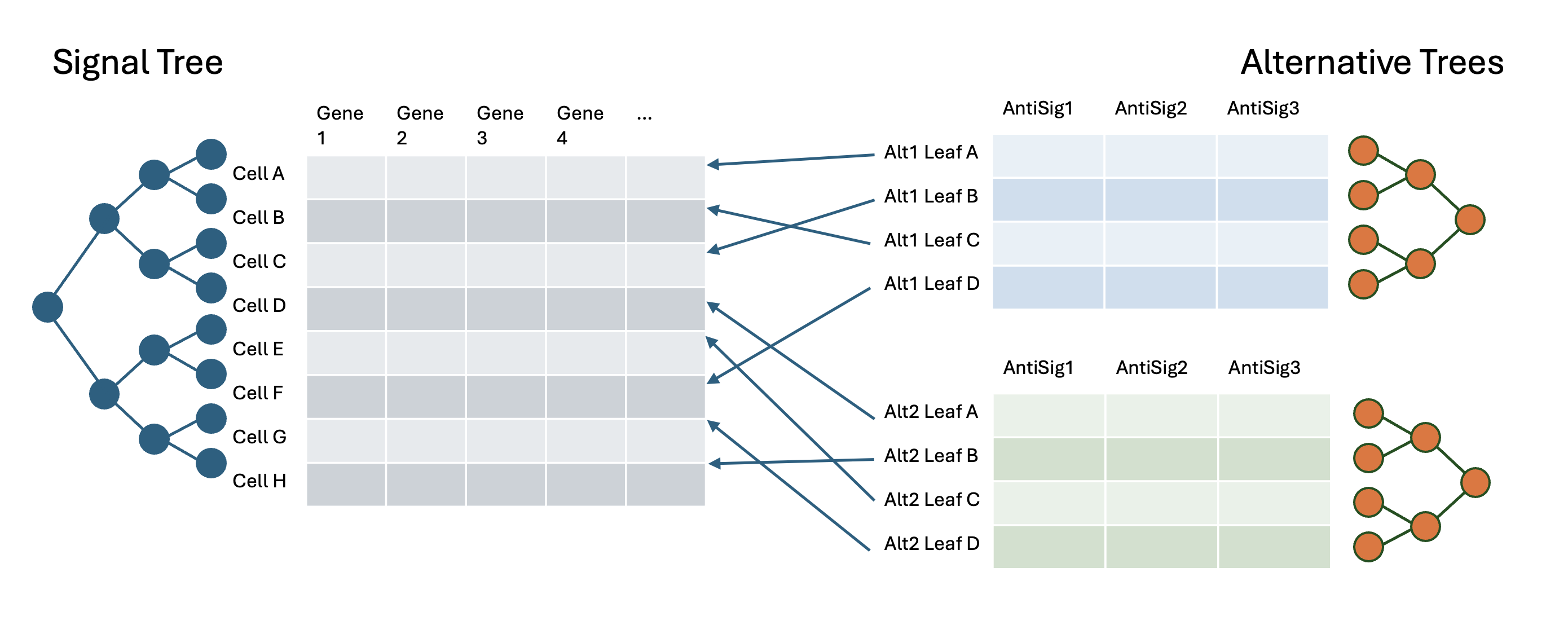}
\caption{Schematic of Alternative Tree Noise Generation. The left panel depicts the true signal tree, representing the actual lineage relationships between cells. The middle panel shows the feature matrices for signal genes (left) and alternative tree noise or "anti-signals" (right). Arrows indicate the mapping of leaves from the signal tree to the corresponding leaves in the alternative trees. The right panel illustrates the alternative trees, which are constructed independently for each partition of leaves. These alternative trees have different topologies and branch lengths than the true signal tree, resulting in correlated noise features that can confound lineage reconstruction.}
\label{fig:alternative_tree_noise}
\end{figure}

\subsubsection{Supervised Setting}
To evaluate lineage reconstruction methods in a supervised setting, we generate two sets of signal variables: one for training and one for testing (Fig.~\ref{fig:simulation_supervised_setting}). First, we generate signal variables using the Brownian motion process described earlier. Then, for each signal variable, we add a small amount of Gaussian noise, $\mathcal{N}(0, 0.1 \times \bar{\sigma}_\text{signal})$, where $\bar{\sigma}_\text{signal}$ represents the mean standard deviation of the generated signal features across all leaves and signal dimensions. This creates a replicate of the signal variables with slight perturbations, suitable for evaluating a method's ability to generalize from training data to unseen test data. The training and testing datasets, therefore, have the same underlying lineage structure but differ in the specific signal values due to the added noise. Gaussian and alternative tree noises are added independently to both the training and testing datasets, as described in the previous subsections.

\begin{figure}[ht]
\centering
\includegraphics[width=0.9\linewidth]{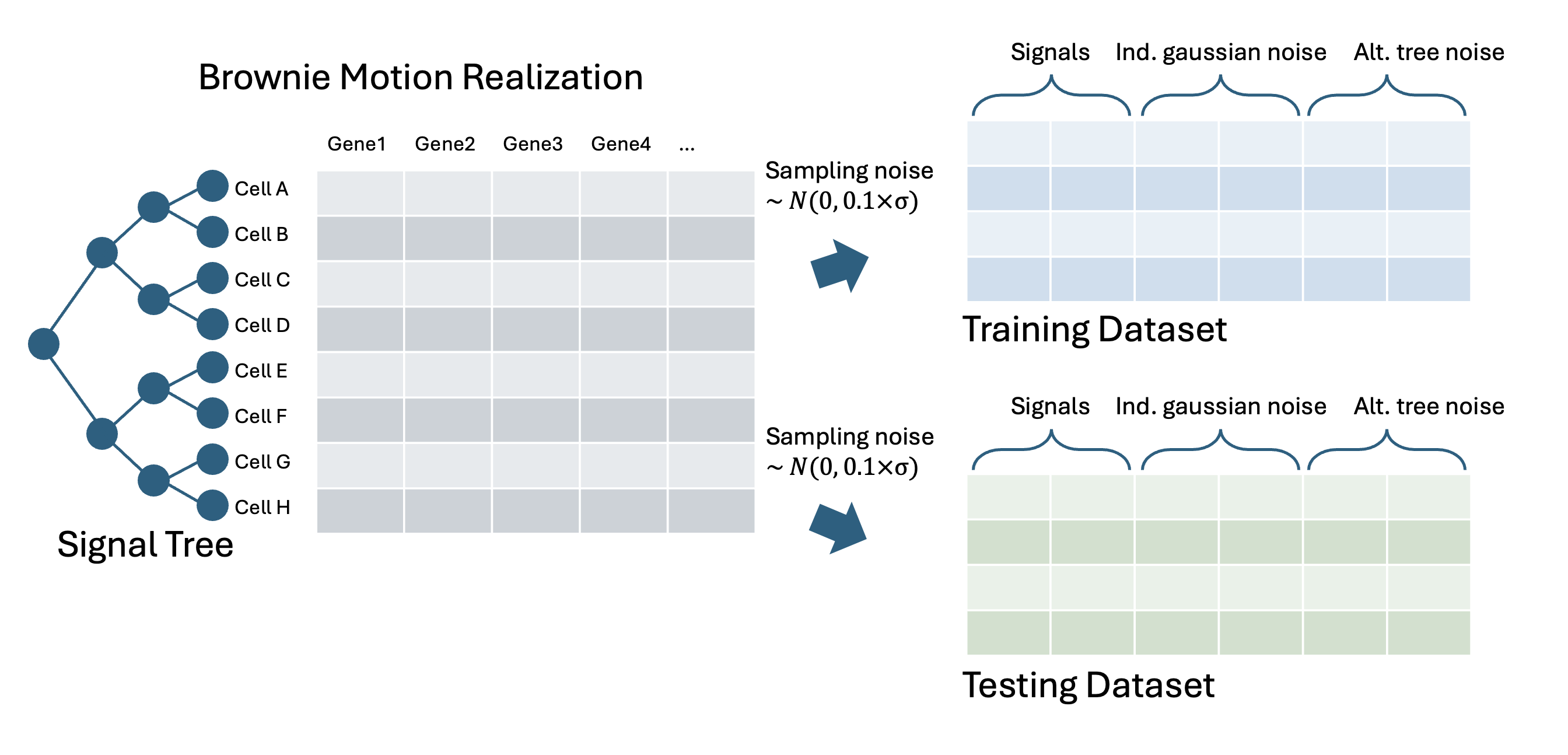}
\caption{Illustration of the supervised setting for simulation. After generating signal variables through Brownian motion on the signal tree, a small amount of Gaussian noise is added to create a replicate set of signal features. One set is used for training, and the other for testing. Both the training and testing datasets then receive independent additions of Gaussian noise and alternative tree noise.}
\label{fig:simulation_supervised_setting}
\end{figure}

\section{Problem Formulation Details}
We consider an underlying lineage tree ${T}$ that describes the data-generating process, where a single progenitor cell initiates binary replication while also modifying molecular states such that a diverse set of daughter cells is generated. In this bifurcating tree, each internal node corresponds to ancestral mother cells and edges represent mother-daughter relationships. The leaves of the tree correspond to sampled cells, typically terminally differentiated cells (\textit{i.e.}, cells that no longer replicate). Our goal is to recover this tree structure ${T}$ using high-dimensional phenotypic data, such as transcriptomic profiles, measured on the leaves. 

\subsection{Prior Information}\label{app:prior_info}
The problem of lineage reconstruction from single-cell phenotypic data poses challenges reminiscent of classical systematics in biology, a field historically concerned with organizing biological diversity based on observable traits. \citep{zeng_what_2022,domcke_reference_2023,quake_cellular_2024} The low correlation between phenotype and cell lineage complicates the task. Moreover, selecting signal features is an NP-hard problem, and reconstructing the phylogeny is also a well-known NP-hard problem. Despite these challenges, heuristic solutions under reasonable constraints remain viable. After all, the essence of systematics is to address these complexities. \citep{zaharias_recent_2022}

In a fully supervised setting, we assume complete knowledge of tree topology and pairwise distances between leaves. While idealized, this scenario sets an upper bound on performance. However, such an assumption is rarely achievable in practice. We therefore formulate the following two weakly supervised scenarios.

\subsubsection{High-level Partitioning Setting}\label{app:high_level_partition}
In this setting, we assume prior knowledge that partitions the leaves into $k$ broad clades or categories based on partial qualitative information. Systematics commonly employs simple yet decisive features that elucidate population structure. For example, the presence or absence of a vertebral column distinguishes vertebrates from invertebrates. \citep{cavalli-sforza_phylogenetic_1967} Similarly, it is possible to identify some molecular phenotypes—such as surface marker genes—that reveal functional insights into cell identity and lineage. \citep{crowley_atlas_2024} With such ``clade'' assignments in place, certain quartet topologies can be inferred directly from membership information.

\paragraph{Theoretical Proportions}
Suppose there are $k$ clades, each containing many leaves. When four leaves (a ``quartet'') are chosen uniformly and independently from all leaves, we categorize them by how many leaves come from each clade (Fig.~\ref{fig:quartets_parition}) :

\begin{enumerate}
    \item $(4, 0, 0, 0)$ – All four leaves in the same clade.
    \item $(3, 1, 0, 0)$ – Three leaves in one clade, the fourth in another.
    \item $(2, 2, 0, 0)$ – Two leaves in one clade, two leaves in another.
    \item $(2, 1, 1, 0)$ – Two leaves in one clade, one leaf in each of two other clades.
    \item $(1, 1, 1, 1)$ – All four leaves in different clades.
\end{enumerate}
We refer to $(4,0,0,0)$ and $(3,1,0,0)$ as \textbf{in‐clade quartets}, because most or all of the leaves lie in a single clade. We call $(2,2,0,0)$ and $(2,1,1,0)$ \textbf{resolvable quartets}, since their topology can be inferred from clade membership. Finally, $(1,1,1,1)$ are cross‐clade quartets (one leaf per clade). By this scheme, the known quartets are the resolvable ones, while in‐clade and cross‐clade quartets remain unknown, as clade membership alone does not determine their topology.

\begin{figure}[ht]
\centering
\includegraphics[width=0.9\linewidth]{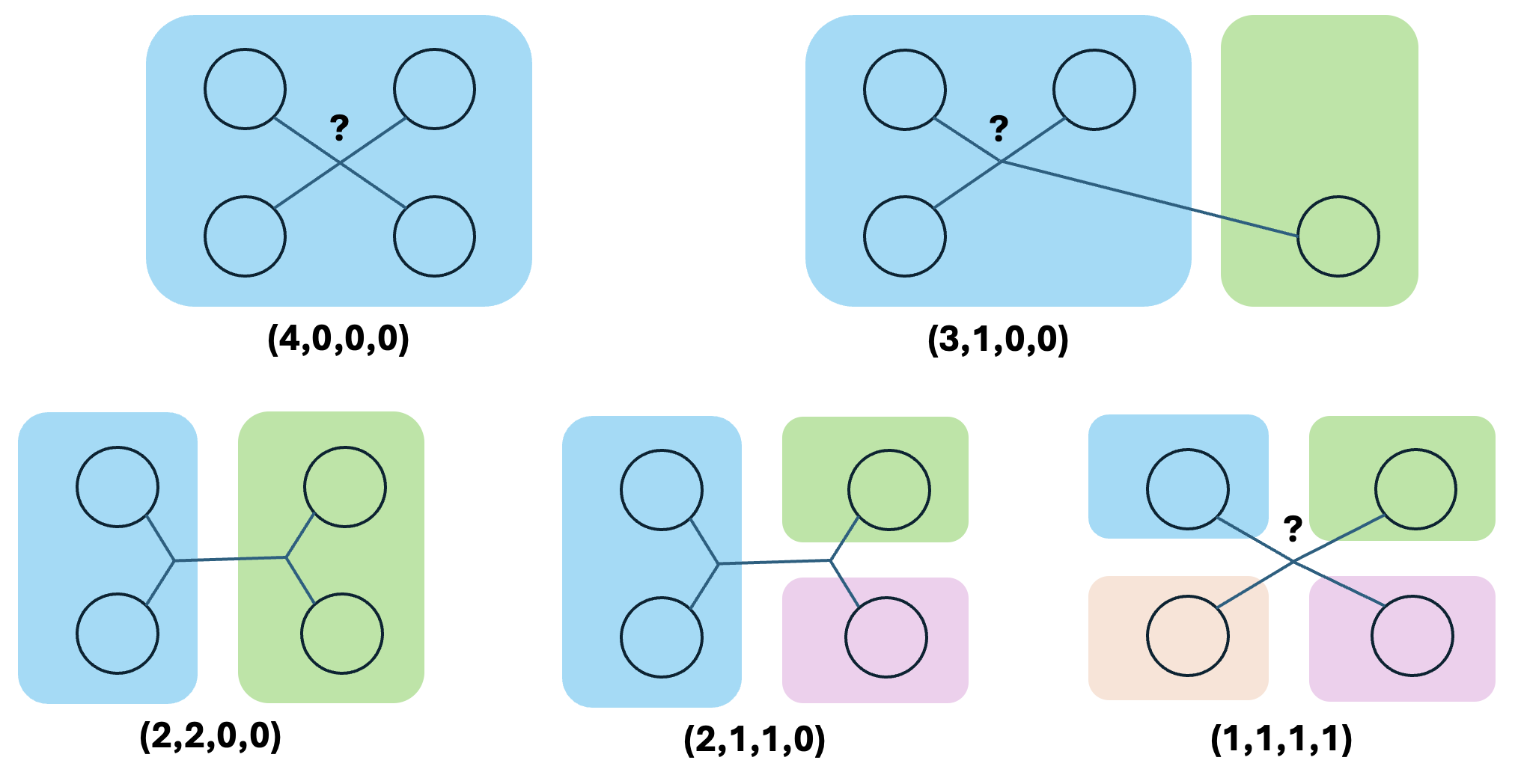}
\caption{\textbf{Illustration of the five possible quartets.}
Each panel shows how four leaves (circles) can be distributed across up to four clades (indicated by color‐shaded regions).
$(4,0,0,0)$ and $(3,1,0,0)$ are ``in‐clade'' (unknown),
$(2,2,0,0)$ and $(2,1,1,0) $are ``resolvable'' (known), and
$(1,1,1,1)$ is ``cross‐clade'' (unknown).}
\label{fig:quartets_parition}
\end{figure}

If each leaf is equally likely to come from any of the $k$ clades, the probability of each partition can be derived as follows. These probabilities sum to 1 and define the expected fraction of each quartet type under uniform sampling. Figure \ref{fig:quartet_prop} plots these theoretical proportions as $k$ increases from 2 to 16. Notably, in this idealized scenario, $k=4$ maximizes the overall fraction of resolvable ($(2,2,0,0)$ and $(2,1,1,0)$) quartets.

\begin{align*}
P(4,0,0,0) \;=&\; \frac{1}{k^3}.\\
P(3,1,0,0) \;=&\; \frac{4\,(k-1)}{k^3}.\\
P(2,2,0,0) \;=&\; \frac{3\,(k-1)}{k^3}.\\
P(2,1,1,0) \;=&\; \frac{6\,(k-1)\,(k-2)}{k^3}.\\
P(1,1,1,1) \;=&\; \frac{(k-1)\,(k-2)\,(k-3)}{k^3}.
\end{align*}

\begin{figure}[ht]
    \centering
    \includegraphics[width=0.75\linewidth]{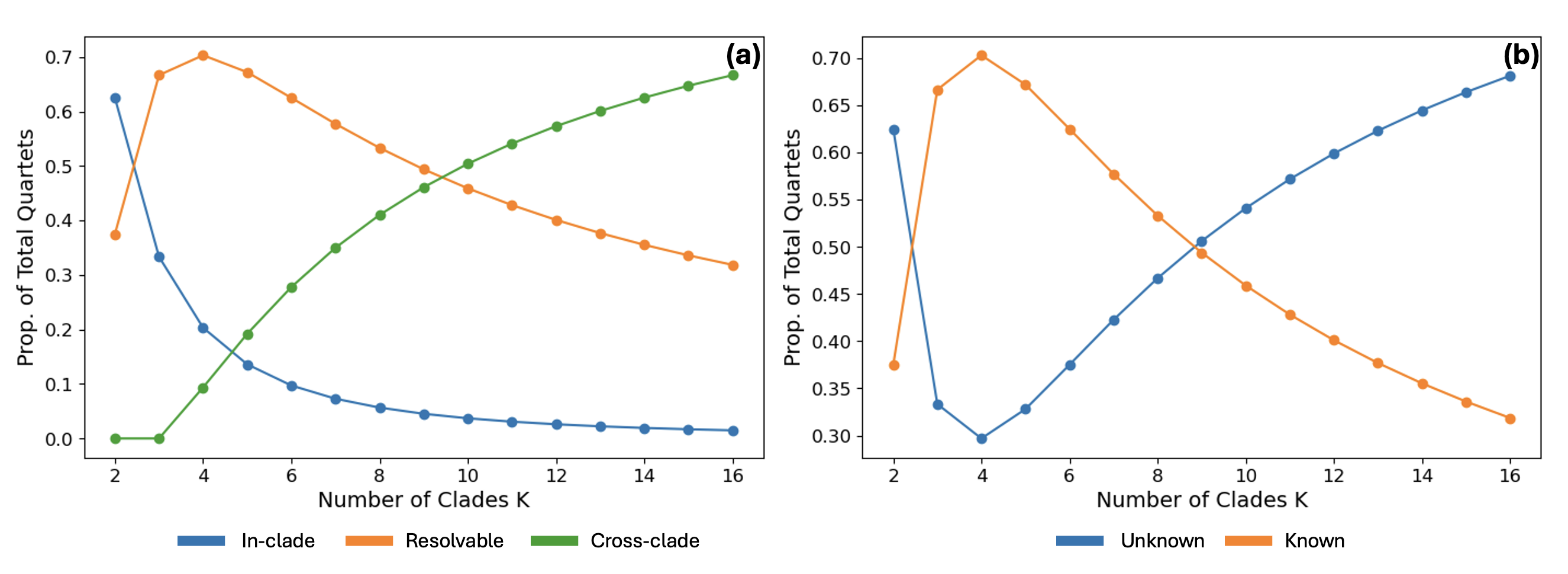}
    \caption{\textbf{Theoretical proportions} of the five quartet partitions under the uniform clade model, plotted as a function of the number of clades $k$. (a) ``in‐clade'' covers $(4,0,0,0)$ and $(3,1,0,0)$, ``resolvable'' covers $(2,2,0,0)$ and $(2,1,1,0)$, and ``cross‐clade'' is $(1,1,1,1)$. The fraction of resolvable quartets is highest at $k=4$. (b) In-clade and cross-clade quartets are unknown quartets. Resolvable quartets are known quartets. }
    \label{fig:quartet_prop}
\end{figure}

\paragraph{Empirical vs. Theoretical Results} Figure \ref{fig:quartet_prop_heuristic} compares the above theoretical proportions to empirical observations (“heuristics”) from various datasets, including simulated 64‐leaf trees (“Sim‐64”) and several C. elegans lineage data sets with different clade definitions. In most cases, the overall trend holds, with known quartets peaking near $k=4$. The main exception is the \textit{C. elegans Small} dataset, which has very unbalanced clade sizes at $k=4 (54, 10, 25, 13)$. Because the uniform sampling assumption is strongly violated there, the fraction of resolvable quartets deviates from the ideal theory.

\begin{figure}[ht]
    \centering
    \includegraphics[width=0.75\linewidth]{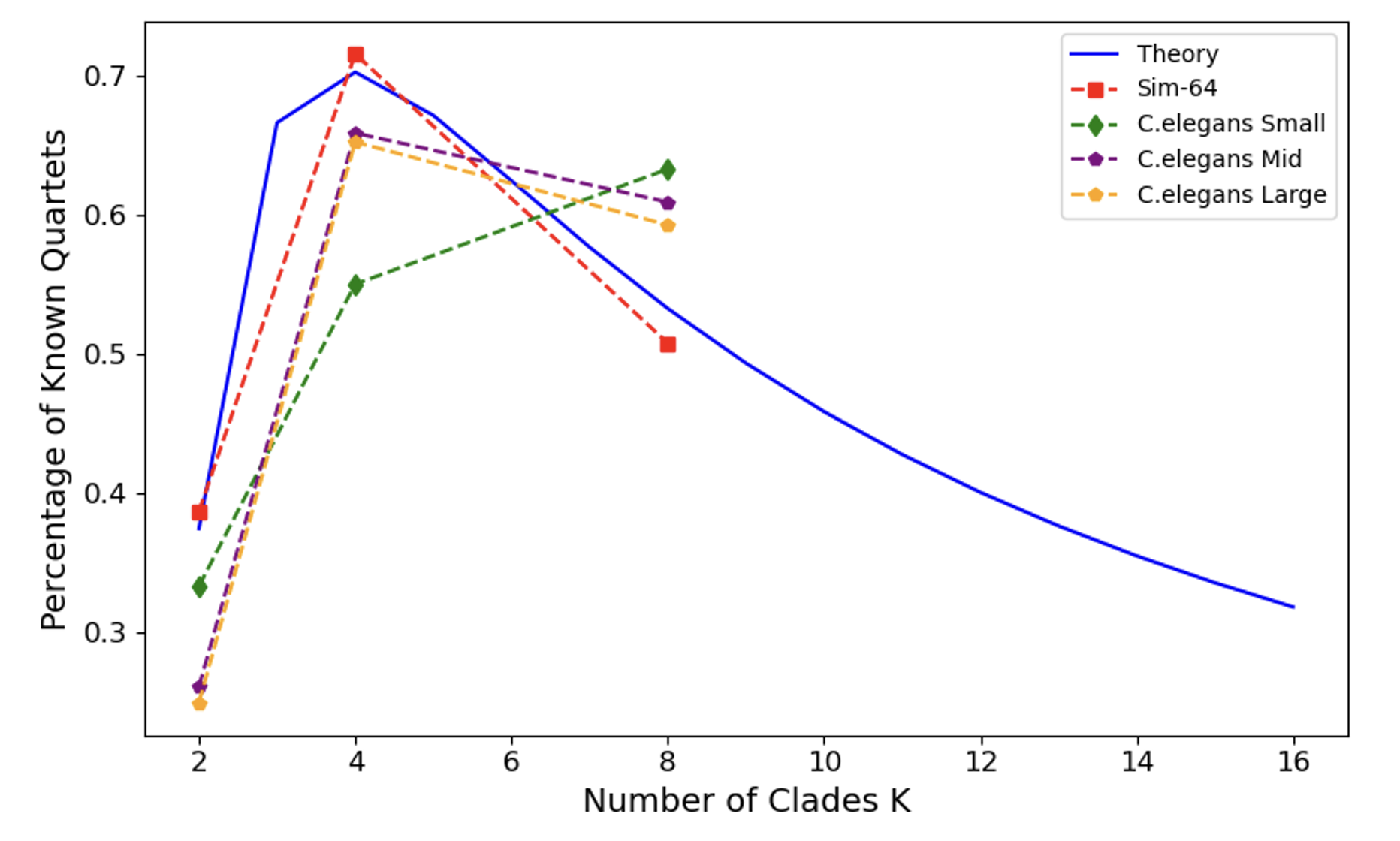}
    \caption{
    \textbf{Comparison of theoretical vs.\ empirical quartet proportions} for several datasets. While the general shape (peak near $k=4$) holds in most cases, strong deviations arise when the clade sizes are highly unbalanced, as in C. elegans Small.
    }
    \label{fig:quartet_prop_heuristic}
\end{figure}

\subsubsection{Partially Leaf-labeled Setting}
In this setting, ground-truth lineage relationships are known for only a subset of cells—often via lineage-tracing techniques such as CRISPR–Cas9-based barcoding—while the other cells remain unlabeled. Let $\kappa$ be the fraction of leaves with known lineage relationships. Then the fraction of fully labeled quartets is $\kappa^4$. For instance, when $\kappa=0.5$, approximately $6.25\%$ of quartets are fully labeled, rising to around $40.96\%$ for $\kappa=0.8$, and to roughly $85\%$ for $\kappa=0.9$.

Figure~\ref{fig:quartet_partial} shows how the proportions of known, unknown, and partially labeled quartets shift with $\kappa$. Even a moderate decrease in $\kappa$ leads to a substantial drop in known quartets, underscoring the challenges posed by partial labeling. 

\begin{figure}[ht]
    \centering
    \includegraphics[width=0.75\linewidth]{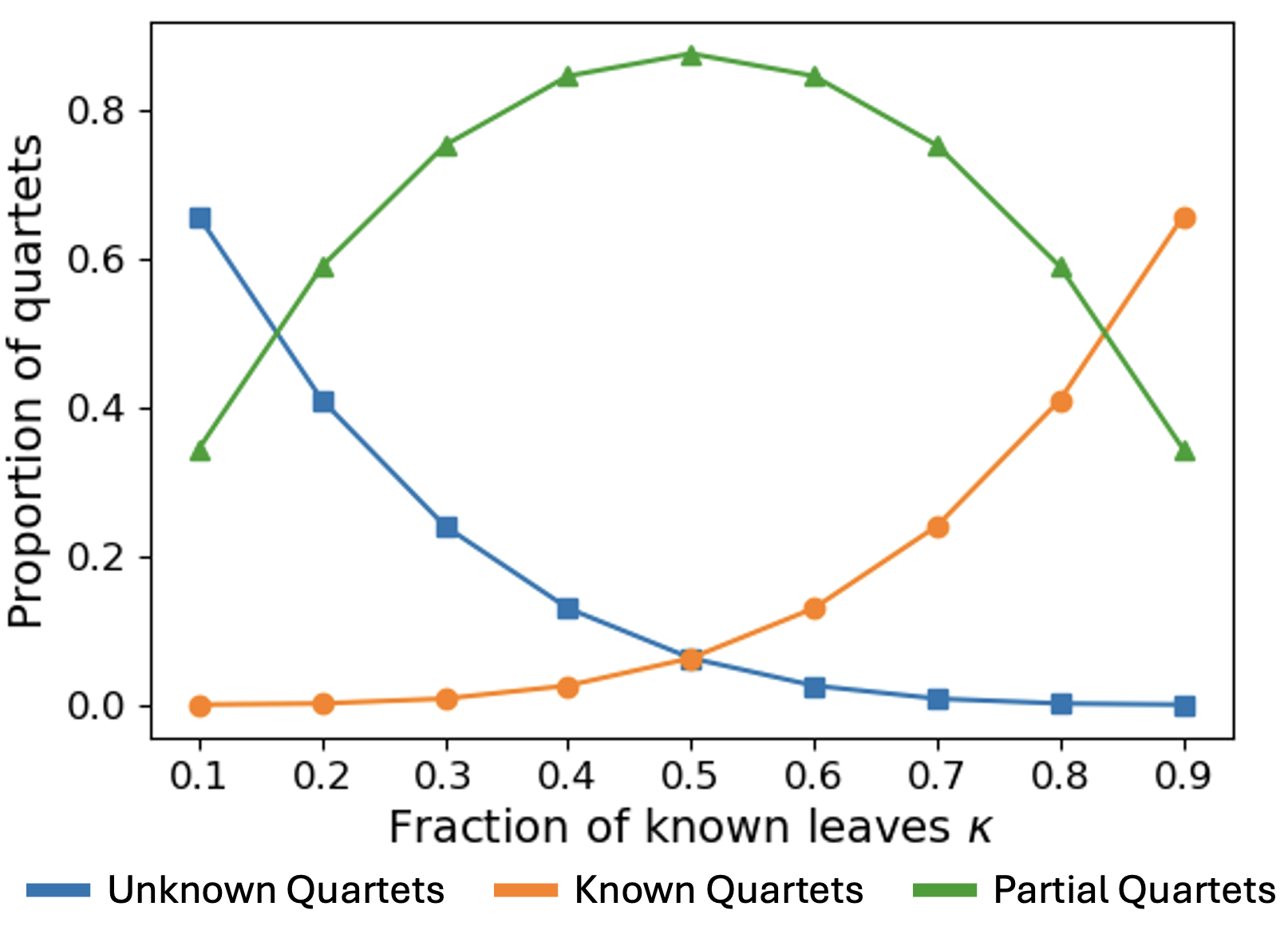}
    \caption{\textbf{Proportions of known, unknown, and partial quartets} as a function of the fraction $\kappa$ of known leaves.}
    \label{fig:quartet_partial}
\end{figure}

\section{Model}
\subsection{Model Architecture}
Our proposed framework, CellTreeQM, aims to learn embeddings from high-dimensional phenotypic data that facilitate phylogenetic reconstruction. To effectively learn relationships among cells, we use a sequence of Transformer encoder blocks as the backbone of the network, illustrated in Figure~\ref{fig:phylodist}. Unlike classical Transformer models, we do not include positional encodings, as our input cells are not inherently ordered. Without positional constraints, the self-attention mechanism can focus purely on learning meaningful relationships based on the feature similarities between cells. 

Moreover, we incorporate two types of dropout regularization: Data Dropout, which is applied in the attention encoder to prevent overfitting on input features; Metric Dropout, a dropout layer added after the network’s output layer\citep{qian_distance_2014}. Empirically, we found that Metric Dropout improves the model's performance. We call the dropout layer after the output layer the metric dropout. The dropout in the attention encoder is called data dropout.

For comparison, we also implemented a CellTreeQM-FC model with a straightforward feedforward architecture. It consists of a stack of eight fully connected layers, each with a hidden dimension of 1024. Nonlinear activation functions (ReLU) are applied between layers to introduce model capacity and expressiveness. 

In Table \ref{tab:transformer-vs-fc}, we compared the performance between the network with Transformer encoder as backbones and fully connected layers as backbone on a real dataset under a supervised setting. The network with the attention module performs better than FC. This indicates that while this FC is simple and relatively fast, it learns the pairwise distances within each quartet without modeling the global pairwise relationship. 

\begin{table}[h]
\centering
\scriptsize
\caption{RF distances for various datasets and embeddings. Lower values indicate trees that are more similar to the ground truth topology. Permutation experiments are conduced with CellTreeQM-transformer. Experiments on C.elegans Small are repeated 3 times. Values in parentheses are standard errors.}
\label{tab:transformer-vs-fc}
\begin{tabular}{@{}llllllll@{}}
\toprule
                 & N leaves & Dir. Recon. & \begin{tabular}[c]{@{}l@{}}CellTreeQM-\\ transformer\end{tabular} & \begin{tabular}[c]{@{}l@{}}CellTreeQM-\\ fc\end{tabular} & \begin{tabular}[c]{@{}l@{}}Leaves \\ Permutation\end{tabular} & \begin{tabular}[c]{@{}l@{}}Cell\\ Permutation\end{tabular} & \begin{tabular}[c]{@{}l@{}}Gene\\ Permutation\end{tabular} \\ \midrule
C.elegans Small    & 102      & 0.929  & 0.246 (0.012)                                                    & 0.400 (0.026)                                           & 0.510 (0.058)                                                 & 0.927 (0.014)                                              & 0.937 (0.012)                                              \\
C.elegans Large & 295      & 0.955  & 0.571                                                            & 0.647                                                  & 0.876                                                         & 0.969                                                      & 0.973                                                      \\
\bottomrule
\end{tabular}
\end{table}


\begin{figure}[ht]
    \centering    
    \includegraphics[width=0.75\linewidth]{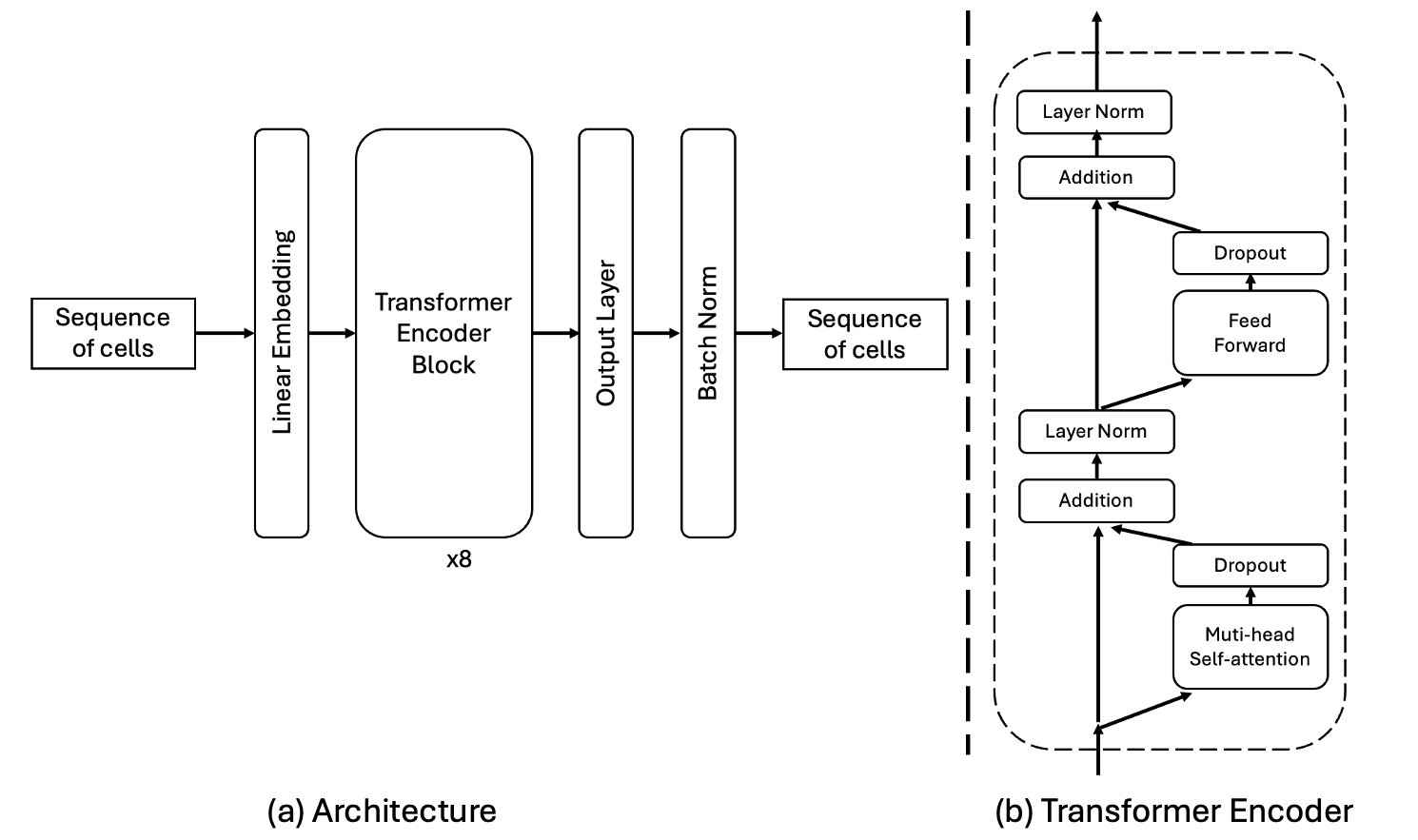}
    \caption{(a) The architecture of PhyloDist (b) Transformer Encoder Block}
    \label{fig:phylodist}
\end{figure}

\subsection{Loss Function Component Study}\label{app:loss_ablation}
To further dissect the contributions of each component in our framework, we conduct an ablation study using the C.elegans Small dataset under a supervised setting. Table~\ref{tbl:loss_ablation} reports the RF distances from trees reconstructed under various modified configurations of CellTreeQM, isolating the effects of the ``close'' and ``push'' terms of the quartet loss, as well as the regularization term $\Omega$.

The ablation results confirm that each component of our loss function plays a significant role in shaping the latent space. The combination of the ``close'' and ``push'' quartet constraints is essential to effectively approximate an additive metric, while the $\Omega$ regularization ensures that the learned representation remains grounded in the original phenotypic data. Together, these components enable CellTreeQM to achieve superior topology recovery compared to baseline or partially ablated models.


\begin{table}[ht]
\centering
\footnotesize
\caption{Loss component study on the C.elegans-Dev dataset. Lower RF distance values indicate better topological similarity to the ground truth. The reported values are means
across ten runs, with standard deviations in parentheses.}
\label{tbl:loss_ablation}
\begin{tabular}{@{}lllllllll@{}}
\toprule
         &    & Quartet Sampling & Deviation & Additivity & Close & Push & margin & Relative RF $\downarrow $  \\ \midrule
baseline &    & mismatched       & 0.01      & 2          & 1     & 10   & 0.5    & 0.274 (0.021) \\ \midrule
(a)      & a1 &                  & 0         &            &       &      &        & 0.839 (0.011) \\
         & a2 &                  &           & 0          &       &      &        & 0.846 (0.008) \\ \midrule
(b)      & b1 &                  &           & 0.005      &       &      &        & 0.838 (0.008) \\
         & b2 &                  &           & 0.01       &       &      &        & 0.836 (0.008) \\
         & b3 &                  &           & 0.1        &       &      &        & 0.707 (0.006) \\
         & b4 &                  &           & 0.5        &       &      &        & 0.277 (0.016) \\
         & b5 &                  &           & 1          &       &      &        & 0.253 (0.016) \\
         & b6 &                  &           & 4          &       &      &        & 0.525 (0.203) \\ \midrule
(d)      & d1 &                  &           &            & 1     & 0    &        & 0.514 (0.042) \\
         & d2 &                  &           &            & 0     & 10   &        & 0.547 (0.019) \\ \midrule
(e)      & e1 &                  &           &            &       & 1    &        & 0.433 (0.031) \\
         & e2 &                  &           &            &       & 2    &        & 0.377 (0.026) \\
         & e3 &                  &           &            &       & 5    &        & 0.321 (0.023) \\
         & e4 &                  &           &            &       & 20   &        & 0.249 (0.019) \\
         & e5 &                  &           &            &       & 30   &        & 0.257 (0.017) \\ \midrule
(f)      & f1 &                  &           &            &       &      & 0      & 0.267 (0.012) \\
         & f2 &                  &           &            &       &      & 0.1    & 0.263 (0.017) \\
         & f3 &                  &           &            &       &      & 0.3    & 0.275 (0.030) \\
         & f4 &                  &           &            &       &      & 0.7    & 0.287 (0.040) \\
         & f5 &                  &           &            &       &      & 1      & 0.314 (0.018) \\ \midrule
(g)      & g1 & matched          &           &            &       &      &        & 0.860 (0.018) \\
         & g2 & all              &           &            &       &      &        & 0.306 (0.022)\\\bottomrule
\end{tabular}
\end{table}

\section{Additional Results}\label{app:results}
\subsection{Baselines}\label{app:baselines}
\emph{CellTreeQM} and both baselines produce a learned embedding space from which we extract pairwise distances between leaves. 
\paragraph{Triplet Loss}  For each quartet of leaves, we first identify the two leaves with the smallest ground-truth distance (based on the known quartet ordering for the supervised setting and on observed quartet ordering for the unsupervised setting) and treat them as the \emph{anchor} (\(A\)) and \emph{positive} (\(P\)). We then select the leaf that is farthest from $A$ to serve as the \emph{negative} ($N$). Denoting the learned embeddings as \(f(\cdot)\) and using a margin \(m\), the triplet loss is defined as:
\begin{equation}
\label{eq:triplet_loss}
\scalebox{0.9}{$
\mathcal{L}_\text{tri} 
= \sum_{(A,P,N)} 
\Bigl[ 
\|f(A) - f(P)\|^2 
- \|f(A) - f(N)\|^2
+ m_0
\Bigr]_+,
$}
\end{equation}
where \([\cdot]_+\) denotes the hinge function \(\max(0, \cdot)\). Each quartet thus provides local distance orderings that the learned embedding space must respect.

\paragraph{Quadruplet Loss} We extend the triplet formulation by incorporating a second negative. From each quartet, after we identify the closest pair \((A, P)\), we designate \emph{two} negatives, \(N\) and \(N'\), chosen among the more-distant leaves. One common form of the quadruplet loss is:
\begin{equation}
\label{eq:quadruplet_loss}
\scalebox{0.9}{$
\begin{aligned}
\mathcal{L}_\text{quad}&=\sum_{(A,P,N,N')} \Bigl[\|f(A) - f(P)\|^2- \|f(A) - f(N)\|^2 + \alpha \Bigr]_+ \\
&+ \Bigl[\|f(A) - f(P)\|^2  - \|f(N') - f(N)\|^2 + \beta \Bigr]_+.
\end{aligned}
$}
\end{equation}
where \(\alpha\) and \(\beta\) are margins. The first bracket encourages the anchor--positive distance to be smaller than the anchor--negative distance, as in triplet loss, while the second bracket enforces additional separation between \((A, P)\) and the second negative pair \((N', N)\), thereby enhancing global distance structure.

\subsection{Evaluation}\label{app:evaluation}

We assess the reconstructed tree $\hat{\mathcal{T}}$ against a ground-truth tree $\mathcal{T}$ using metrics such as:
\paragraph{Robinson–Foulds (RF) Distance.} The RF distance quantifies the topological difference between two unrooted trees by comparing their sets of partitions, where a partition corresponds to a split in the tree that divides the taxa into two complementary subsets. The metric counts how many partitions differ between the inferred and the true trees. We will generally present normalized RF distance, where 0 implies identical tree topology and 1 implies that no partitions are shared between the two trees.
 
\paragraph{Quartet Distance (QD).} The RF distance is widely used due to its conceptual simplicity. However, it does have limitations. Because it treats all partitions equally, it does not distinguish between topological differences that might have different biological relevance. More importantly, the distance measure is sensitive to tree differences that arise from a cut-and-paste operation. For example, if one leaf vertex is moved from one side of the tree to another, RF might be 1, even if the remaining tree structure is identical. Thus, to consider more detailed subtree structure, we introduce quartet distance\citep{bryant_computing_2000}. Given any four leaf vertices (\ital{i.e.}, a quartet of leaves), there are three possible arrangements of their unrooted tree-graph relationships. Given two tree graphs, we consider all possible quartets of leaves and compute the percent of quartet tree graphs that are different as the quartet distance. For large trees, we may approximate the quartet distance by a sample of the leaf vertices.

For reference, we also compare to a ``direct reconstruction'' approach that uses raw Euclidean distances in gene-expression space, denoted as base RF and base QD. We use $\Delta\%$RF and $\Delta\%$QD to measure the relative improvement in RF or QD over direct raw-data reconstruction (higher is better).
\begin{equation}\label{delta_percentage}
\Delta\% \text{RF} = \frac{\text{RF}_{\text{base}} - \text{RF}_{\text{recon}}}{\text{RF}_{\text{base}}}; \quad\quad
\Delta\% \text{QD} = \frac{\text{QD}_{\text{base}} - \text{QD}_{\text{recon}}}{\text{QD}_{\text{base}}}
\end{equation}

\subsection{Supervised Setting}\label{app:supervised}

\paragraph{Simulation}
Table \ref{tbl:supervised_simulation} summarizes the performance of our method under four simulated scenarios (A–D), each parameterized by distinct signal, noise, and alternate tree noise settings. In each scenario, we vary the dimensionality of the signal features ($d_{\text{sig}}$), Gaussian noise ($d_{\text{noise}}$), and alternate tree noise ($d_{\text{AltSig}}$), as well as the scaling factors $\alpha$ (for noise standard deviation) and $\beta$ (for branch lengths in the alternate tree). We define the signal-to-noise ratio ($\mathrm{SNR}$) by the ratio of the total signal dimensionality to the sum of noise and alternate tree noise dimensions, $\mathrm{SNR} = d_{\text{sig}}/(\,d_{\text{noise}} + n_{T_{\text{Alt}}} \times d_{\text{AltSig}}\,)$. The total feature dimensionality $d$ thus reflects both signal and noise features combined.

Under ``CellTreeQM'' section, we compare two modes: $\mathcal{F}$ (using all features) and $\mathcal{G}$ (applying a Gumbel-based feature gate). The final three columns (“Recall,” “Precis.,” $\Delta$) further illustrate how well the gating module identifies true signal features. We observe that using all features without gating ($\mathcal{F}$) can be prone to noise contamination when $d_{\text{noise}}$ and $d_{\text{AltSig}}$ are large, while the Gumbel-based gate ($\mathcal{G}$) achieves higher Recall and Precision of signal features and remains relatively stable (as seen in $\Delta$). These trends hold across the different scenarios, with certain variations in performance as the SNR, scaling factors $\alpha$ and $\beta$, and total dimensionality d change. In additional scenarios with $n_{\text{leaves}}$ = 128 and 256, similar trends are observed. 

\renewcommand{\arraystretch}{1.2}
\begin{table}[ht]
\centering
\scriptsize
\caption{%
\textbf{Performance comparison on simulated data with $n_{\text{leaves}}=64$ (scenarios A--D).} 
Each row shows parameter choices for the signal ($w,\,d_{\text{sig}}$), Gaussian noise ($d_{\text{noise}},\,\alpha$), and alternate tree noise ($S,\,T,\,d_{\text{AltSig}},\,\beta$). $\alpha$ is the scaling factor of noise stander deviation such that $\sigma_{noise} = \alpha \bar \sigma_{sig}$. $\beta$ is the scaling factor of the branch length range of the alternative tree such that $w_\text{alt}=\beta w$.
We define $\textrm{SNR}$ as
$
\textrm{SNR}=d_{\text{sig}}/(d_{\text{noise}}+T \times d_{\text{AltSig}})
$
Here, $d$ denotes the total feature dimensionality. 
``Direct Recon.'' measures reconstruction when using only the signal features (Signal) vs.\ when using both signal and noise (Noisy). 
``CellTreeQM'' ($\mathcal{F},\mathcal{G}$) indicates whether all features are used ($\mathcal{F}$) or a Gumbel-based feature gate is applied ($\mathcal{G}$). 
Each value is averaged over five runs. 
``G-Gate Performance'' columns show: 
(\emph{Recall}) the percentage of signal features selected (out of all signal features) at the best-performing training step;
(\emph{Precis.}) the percentage of selected features that are signal features at that same step;
($\Delta$ g) the standard deviation of the number of gate changes over the final one-third of training steps.
Each value is averaged over five runs. 
Additional results (including standard deviations) for $n_{\text{leaves}}=128$ and $256$ are provided in the appendix.
}\label{tbl:supervised_simulation}
\scalebox{0.9}{
\begin{tabular}{l|ll|ll|llll|ll|ll|cc|ccc}
\hline
\multirow{2}{*}{}  & \multicolumn{2}{c|}{Signal}                                   & \multicolumn{2}{c|}{Gauss. Noise}                                    & \multicolumn{4}{c|}{Alt. Tree Noise}                                                                                     & \multicolumn{2}{c|}{Summary}                       & \multicolumn{2}{c|}{Direct Recon.}                      & \multicolumn{2}{c|}{CellTreeQM} & \multicolumn{3}{c}{$\mathcal{G}$-Gate Performance} \\
                   & \multicolumn{1}{c}{$w$} & \multicolumn{1}{c|}{$d_\text{sig}$} & \multicolumn{1}{c}{$d_\text{noise}$} & \multicolumn{1}{c|}{$\alpha$} & \multicolumn{1}{c}{$S$} & \multicolumn{1}{c}{$T$} & \multicolumn{1}{c}{$d_\text{AltSig}$} & \multicolumn{1}{c|}{$\beta$} & \multicolumn{1}{c}{SNR} & \multicolumn{1}{c|}{$d$} & \multicolumn{1}{c}{Signal} & \multicolumn{1}{c|}{Noisy} & $\mathcal{F}$    & $\mathcal{G}$   & Recall      & Precis.      & $\Delta g$        \\ \hline
\multirow{3}{*}{A} & 2                       & 20                                  & 20                                   & 0.5                           & 20                      & 1                       & 20                                    & 0.5                          & 0.5                     & 60                       & 0.291                      & 0.615                      & 0.318            & 0.126           & 0.85        & 0.946        & 1.276        \\
                   & 2                       & 50                                  & 50                                   & 0.5                           & 50                      & 1                       & 50                                    & 0.5                          & 0.5                     & 150                      & 0.095                      & 0.413                      & 0.103            & 0.041           & 0.804       & 0.951        & 2.158        \\
                   & 2                       & 100                                 & 100                                  & 0.5                           & 100                     & 1                       & 100                                   & 0.5                          & 0.5                     & 300                      & 0.028                      & 0.275                      & 0.037            & 0.014           & 0.824       & 0.884        & 3.624        \\ \hline
\multirow{3}{*}{B} & 2                       & 20                                  & 100                                  & 0.5                           &                         &                         &                                       &                              & 0.2                     & 120                      & 0.292                      & 0.534                      & 0.512            & 0.182           & 0.94        & 0.775        & 3.064        \\
                   & 5                       & 50                                  & 500                                  & 1                             &                         &                         &                                       &                              & 0.1                     & 550                      & 0.156                      & 0.865                      & 0.884            & 0.370           & 0.9         & 0.424        & 17.267       \\
                   & 10                      & 100                                 & 2000                                 & 2                             &                         &                         &                                       &                              & 0.05                    & 2100                     & 0.157                      & 0.980                      & 0.960            & 0.867           & 0.794       & 0.130        & 105.705      \\ \hline
\multirow{3}{*}{C} & 2                       & 20                                  &                                      &                               & 20                      & 1                       & 20                                    & 0.5                          & 1                       & 40                       & 0.315                      & 0.567                      & 0.233            & 0.108           & 0.89        & 0.981        & 1.217        \\
                   & 5                       & 50                                  &                                      &                               & 50                      & 2                       & 100                                   & 1                            & 0.5                     & 150                      & 0.177                      & 0.951                      & 0.603            & 0.056           & 0.848       & 0.954        & 3.605        \\
                   & 10                      & 100                                 &                                      &                               & 100                     & 2                       & 200                                   & 2                            & 0.5                     & 300                      & 0.144                      & 0.997                      & 0.856            & 0.263           & 0.844       & 0.738        & 12.961       \\ \hline
\multirow{3}{*}{D} & 2                       & 20                                  & 100                                  & 0.5                           & 20                      & 1                       & 20                                    & 0.5                          & 0.17                    & 140                      & 0.308                      & 0.663                      & 0.600            & 0.249           & 0.98        & 0.663        & 3.407        \\
                   & 5                       & 50                                  & 500                                  & 1                             & 50                      & 2                       & 100                                   & 1                            & 0.08                    & 650                      & 0.174                      & 0.948                      & 0.921            & 0.403           & 0.828       & 0.403        & 22.853       \\
                   & 10                      & 100                                 & 2000                                 & 2                             & 100                     & 2                       & 200                                   & 2                            & 0.05                    & 2300                     & 0.144                      & 0.984                      & 0.974            & 0.875           & 0.772       & 0.112        & 129.442      \\ \hline
\end{tabular}
}
\end{table}

\paragraph{Real Data}
The results for the supervised setting on all four real datasets are in Table~\ref{tbl:supervised_full}. Experiments for \textit{C. elegans Small} and \textit{Mid} are repeated 3 times, and the mean and standard deviation are reported. We report the results from only one run of \textit{C. elegans Large} because it runs slowly at the evaluation step to calculate the RF distance of the tree with 295 leaves. We are running more repetitions and will update the appendix as soon as more experiments are finished.
\begin{table}[ht]
\centering
\caption{\textbf{Supervised results on Cell Lineage Benchmark.} 
Direct reconstruction on raw data yields Base RF and Base QDist. Suffix ``-G'' denotes feature gating, and ``-p'' indicates label permutation. Standard deviations are in parentheses. Experiments are repeat 3 times expect C. elegans Large.}
\label{tbl:supervised_full}
\begin{tabular}{lccccc}
\hline
\textbf{Method} & \textbf{Train RF\(\downarrow\)} & \textbf{Test RF\(\downarrow\)} & \textbf{Test QD\(\downarrow\)} & \multicolumn{1}{c}{\textbf{$\Delta$\%RF\(\uparrow\)}} & \multicolumn{1}{c}{\textbf{$\Delta$\%QD\(\uparrow\)}} \\ \hline
\multicolumn{6}{l}{\textbf{C. elegans Small}, Base RF=0.923; Base QD=0.554}                                                                                                                                            \\
CellTreeQM      & \textbf{0.000} (0.00)                    & \textbf{0.286} (0.05)                   & \textbf{0.074} (0.01)                   & \textbf{0.690} (0.05)                                          & \textbf{0.867} (0.02)                                          \\
CellTreeQM-G    & \textbf{0.000} (0.00)                    & \textbf{0.226} (0.02)                   & \textbf{0.084} (0.01)                   & \textbf{0.757} (0.03)                                          & \textbf{0.848} (0.01)                                          \\
CellTreeQM-p    & 0.013 (0.00)                    & 0.566 (0.04)                   & 0.208 (0.03)                   & 0.434 (0.04)                                          & 0.691 (0.05)                                          \\
Triplet         & 0.519 (0.03)                    & 0.741 (0.01)                   & 0.201 (0.01)                   & 0.179 (0.01)                                          & 0.637 (0.02)                                          \\
Triplet-G       & 0.545 (0.03)                    & 0.724 (0.00)                   & 0.207 (0.01)                   & 0.203 (0.03)                                          & 0.631 (0.02)                                          \\
Triplet-p       & 0.677 (0.06)                    & 0.963 (0.01)                   & 0.413 (0.04)                   & 0.037 (0.01)                                          & 0.385 (0.06)                                          \\
Quadruplet      & 0.057 (0.00)                    & 0.492 (0.03)                   & 0.120 (0.00)                   & 0.454 (0.02)                                          & 0.784 (0.01)                                          \\
Quadruplet-G    & 0.061 (0.00)                    & 0.471 (0.04)                   & 0.116 (0.00)                   & 0.484 (0.04)                                          & 0.791 (0.01)                                          \\
Quadruplet-p    & 0.118 (0.00)                    & 0.848 (0.01)                   & 0.310 (0.01)                   & 0.149 (0.01)                                          & 0.538 (0.02)                                          \\ \hline
\multicolumn{6}{l}{\textbf{C. elegans Mid}, Base RF=0.967, Base QD=0.579}                                                                                                                                                \\
CellTreeQM      & \textbf{0.022} (0.00)                    & \textbf{0.513} (0.01)                   & \textbf{0.155} (0.00)                   & \textbf{0.457} (0.02)                                          & \textbf{0.730} (0.01)                                          \\
CellTreeQM-G    & \textbf{0.031} (0.01)                    & \textbf{0.472} (0.00)                   & \textbf{0.130} (0.01)                   & \textbf{0.490} (0.01)                                          & \textbf{0.773} (0.01)                                          \\
CellTreeQM-p    & 0.035 (0.01)                    & 0.831 (0.00)                   & 0.356 (0.02)                   & 0.165 (0.00)                                          & 0.466 (0.03)                                          \\
Triplet         & 0.650 (0.02)                    & 0.811 (0.00)                   & 0.189 (0.00)                   & 0.095 (0.01)                                          & 0.673 (0.01)                                          \\
Triplet-G       & 0.693 (0.02)                    & 0.809 (0.01)                   & 0.183 (0.00)                   & 0.112 (0.01)                                          & 0.682 (0.00)                                          \\
Triplet-p       & 0.878 (0.03)                    & 0.970 (0.01)                   & 0.438 (0.01)                   & 0.026 (0.01)                                          & 0.344 (0.01)                                          \\
Quadruplet      & 0.146 (0.01)                    & 0.557 (0.01)                   & 0.179 (0.01)                   & 0.396 (0.01)                                          & 0.690 (0.01)                                          \\
Quadruplet-G    & 0.174 (0.04)                    & 0.537 (0.01)                   & 0.166 (0.01)                   & 0.423 (0.02)                                          & 0.711 (0.02)                                          \\
Quadruplet-p    & 0.283 (0.03)                    & 0.915 (0.01)                   & 0.395 (0.00)                   & 0.082 (0.01)                                          & 0.409 (0.01)                                          \\ \hline
\multicolumn{6}{l}{\textbf{C. elegans Large}, Base RF=0.949; Base QD=0.586}                                                                                                                                             \\
CellTreeQM      & \textbf{0.113}                           & \textbf{0.568}                          & \textbf{0.111}                          & \textbf{0.401}                                                 & \textbf{0.811}                                                 \\
CellTreeQM-G    & \textbf{0.140}                           & \textbf{0.521}                          & \textbf{0.115}                          & \textbf{0.439}                                                 & \textbf{0.805}                                                 \\
CellTreeQM-p    & 0.195                           & 0.863                          & 0.262                          & 0.137                                                 & 0.606                                                 \\
Triplet         & 0.822                           & 0.863                          & 0.229                          & 0.077                                                 & 0.608                                                 \\
Triplet-G       & 0.808                           & 0.873                          & 0.196                          & 0.073                                                 & 0.664                                                 \\
Triplet-p       & 0.979                           & 0.993                          & 0.424                          & 0.007                                                 & 0.365                                                 \\
Quadruplet      & 0.432                           & 0.675                          & 0.155                          & 0.284                                                 & 0.736                                                 \\
Quadruplet-G    & 0.377                           & 0.712                          & 0.144                          & 0.249                                                 & 0.754                                                 \\
Quadruplet-p    & 0.620                           & 0.932                          & 0.317                          & 0.068                                                 & 0.523                                                 \\ \bottomrule
\end{tabular}
\end{table}

\paragraph{Visualization on C. elegans Small}
\textit{CellTreeQM} embeds the leaves into 128-dimensional space. To have some intuition about the latent space, we project the embeddings to 20 principal components (via PCA), followed by a 2D t-SNE projection. Figure~ \ref{fig:supervised_tsne_all} displays the results on \emph{C. elegans Small}, coloring cells by common ancestors at different hierarchical levels. While triplet and quadruplet losses capture some broad structure, \emph{CellTreeQM} more effectively organizes the leaves according to the underlying tree hierarchy.

In addition to these embeddings, Figure~\ref{fig:supervised_circle_tree} illustrates the final lineage trees reconstructed from the learned distances of \emph{CellTreeQM}, Triplet, and Quadruplet. Each leaf is colored according to its major lineage annotation in \emph{C. elegans}. Notably, \emph{CellTreeQM} yields a more faithful global topology, with tighter, lineage-consistent subtrees. By contrast, trees reconstructed under triplet or quadruplet losses exhibit more dispersed leaf placements, sometimes grouping distantly related lineages together. These visual differences align with our quantitative findings in Table~\ref{tbl:supervsied_celegans_small} and underscore \emph{CellTreeQM}’s advantage in accurately capturing the hierarchical relationships among cells.

\begin{figure}[ht]
    \centering
    \includegraphics[width=\linewidth]{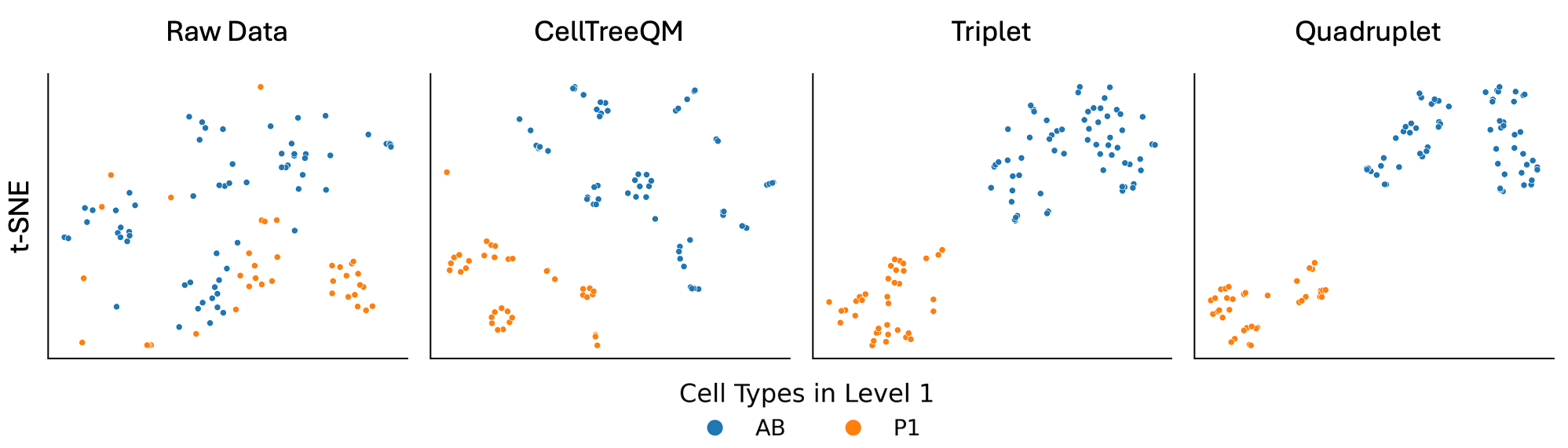}
    \includegraphics[width=\linewidth]{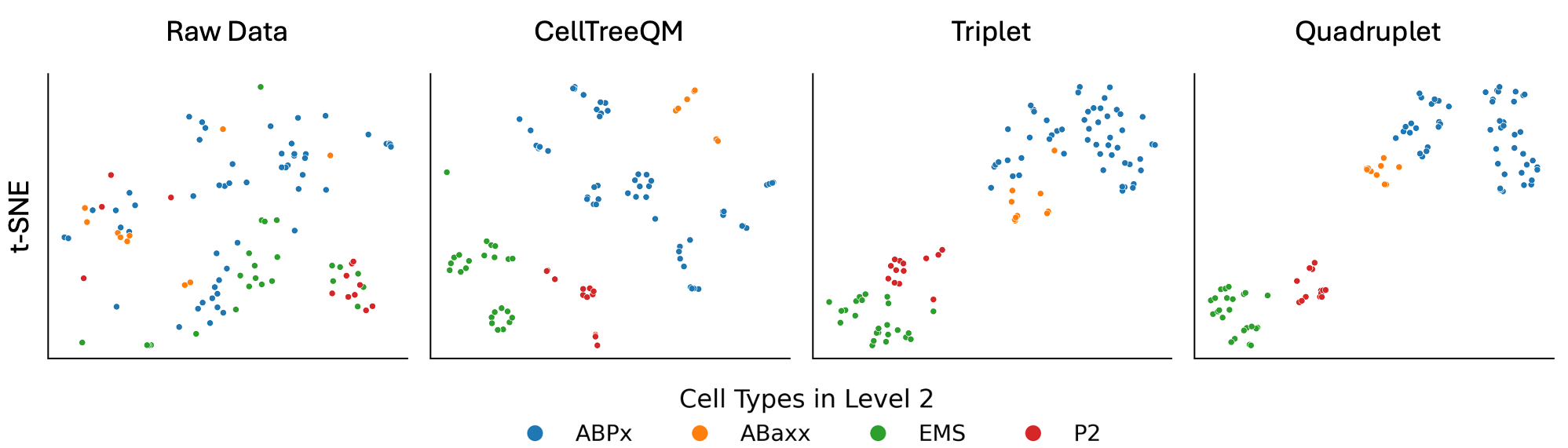}
    \includegraphics[width=\linewidth]{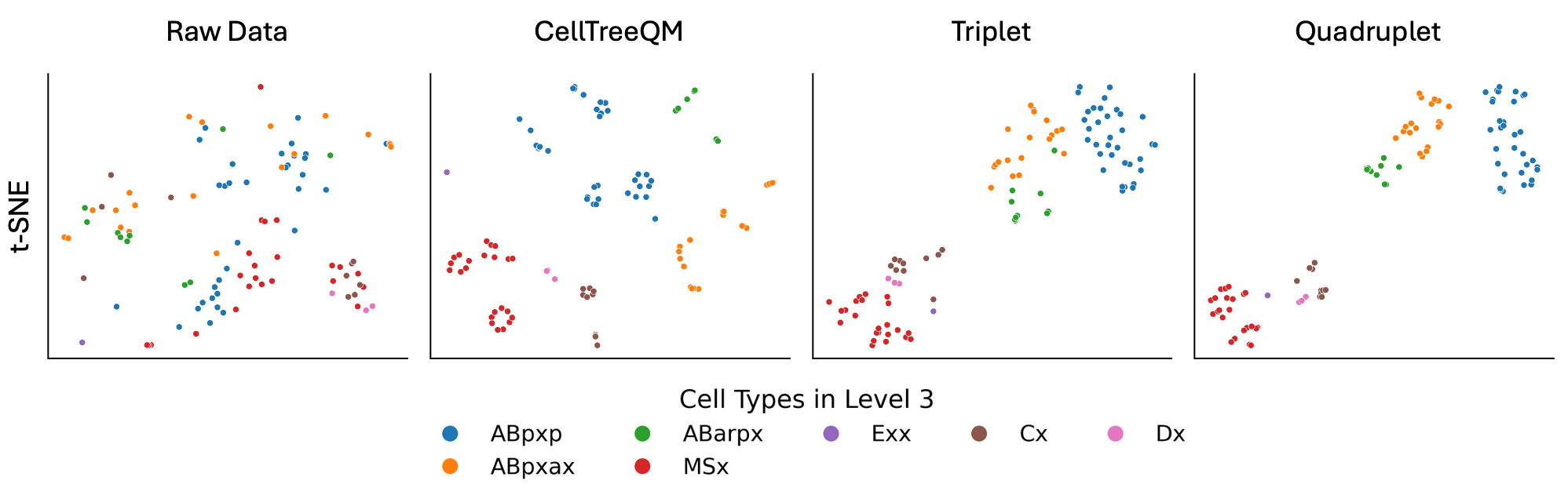}
    \includegraphics[width=\linewidth]{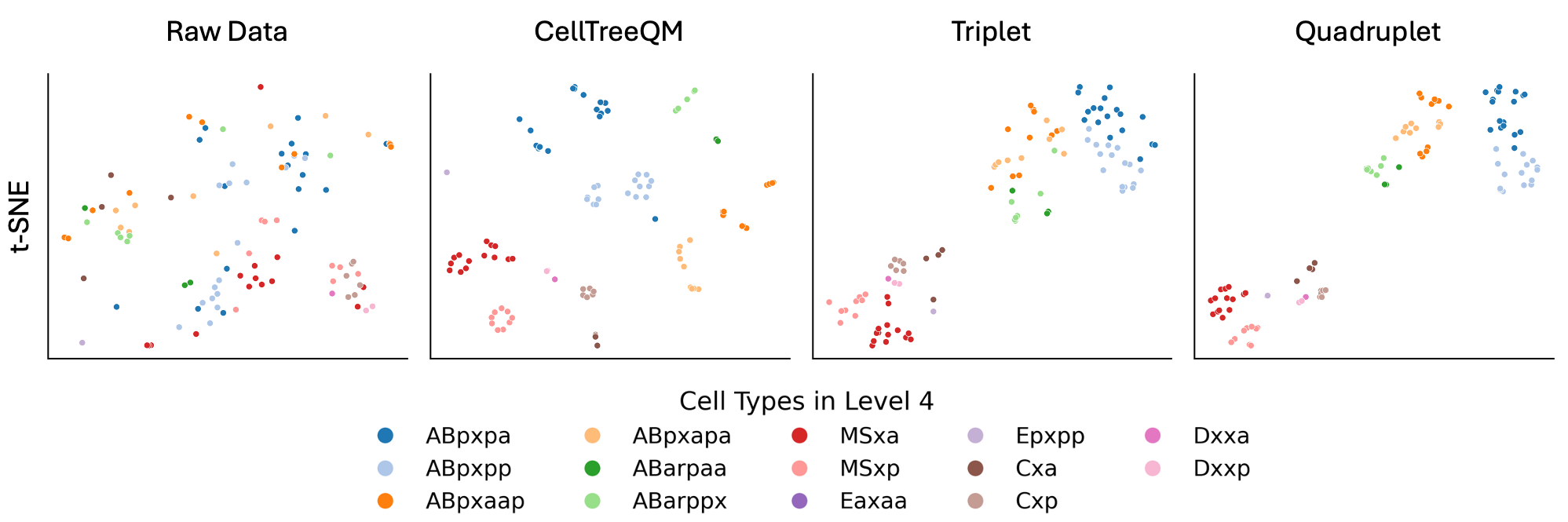}    
    \caption{t-SNE visualization of CellTreeQM embeddings for \emph{C. elegans Small}. The embeddings are first reduced to 20 principal components via PCA before applying t-SNE. Each panel corresponds to a different hierarchical level, with colors representing common ancestors at that level.}
    \label{fig:supervised_tsne_all}
\end{figure}

\begin{figure}[ht]
    \centering
    \includegraphics[width=\linewidth]{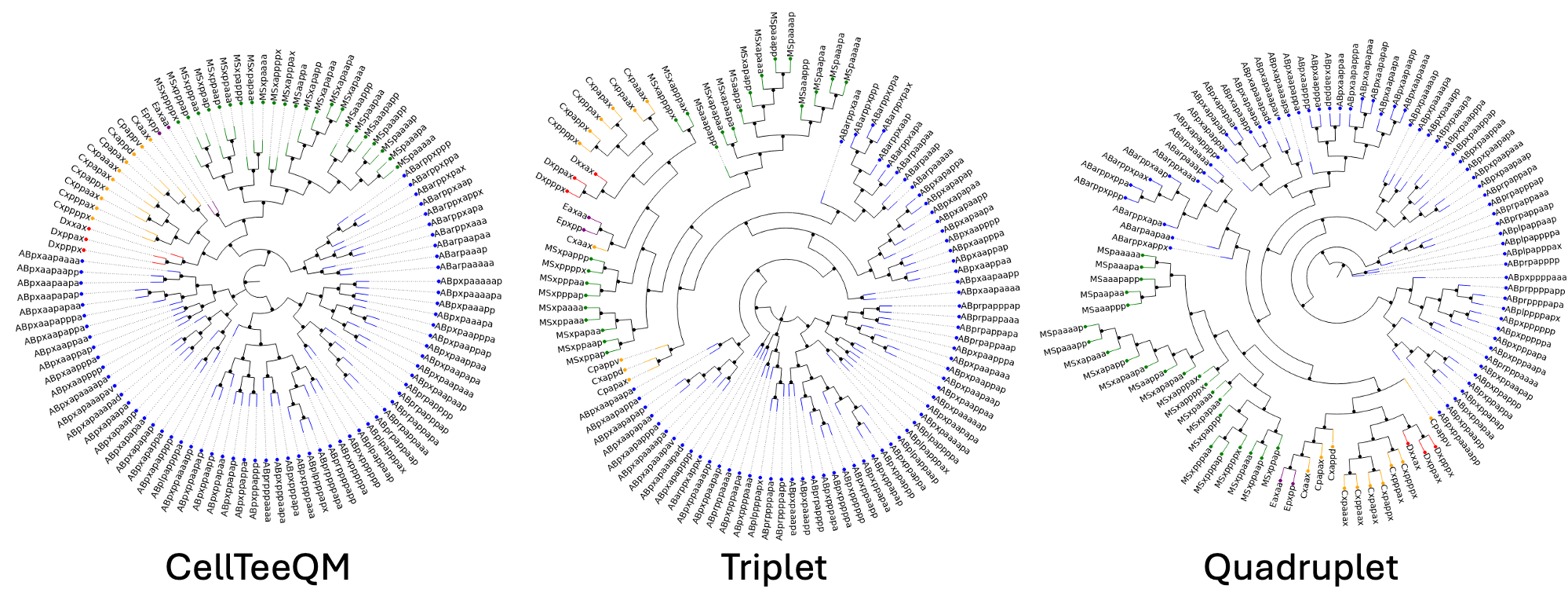}
    \caption{Reconstructed circular lineage tree for the \emph{C. elegans} dataset using the learned embeddings from \emph{CellTreeQM}. Each leaf is color‐coded according to its major lineage (e.g., AB, MS, E, C, D), with the circular layout highlighting the hierarchical structure. The faithful grouping of leaves into cohesive subtrees demonstrates the effectiveness of \emph{CellTreeQM} in capturing global genealogical relationships from the data.}\label{fig:supervised_circle_tree}
\end{figure}

\paragraph{Label Permutation as Null Experiments} To verify that our method is genuinely capturing lineage structure rather than high-dimensional artifacts, we conduct a series of null experiments by permuting labels or data entries in ways that disrupt the underlying biological patterns. Specifically, we apply three types of permutations:
\begin{itemize}[leftmargin=1.5em]
\item \textbf{Leaf Permutation:} Randomly shuffle the labels associated with each leaf.
\item \textbf{Cell Permutation:} Randomly shuffle the cell-wise entries (i.e., rows) within each feature.
\item \textbf{Gene Permutation:} Randomly shuffle the feature-wise entries (i.e., columns) within each cell.
\end{itemize}
Each permutation severs the original lineage relationships embedded in the data. By comparing the model’s performance on these null-permuted datasets to its performance on unpermuted data, we confirm that \emph{CellTreeQM}’s success relies on genuine lineage signals rather than incidental structural artifacts (see Table~\ref{tab:transformer-vs-fc}).

\subsection{Weakly Supervised Setting}

\subsubsection{High-level Partitioning Setting}
We report the \emph{known‐quartet distance} ($\text{K‐QDist}$) and the \emph{unknown‐quartet distance} ($\text{U‐QDist}$).  Unsurprisingly, $\text{K‐QDist}$ remains near zero for successful methods, since those quartets are directly supervised.  Meanwhile, $\text{U‐QDist}$ steadily decreases at deeper prior levels, indicating that CellTreeQM \emph{generalizes} better to unconstrained quartets as more high‐level clades become known.  We also measure $\Delta \mathrm{RF} = \mathrm{RF0} - \mathrm{RF}$, the improvement over a raw‐data baseline.  This gap grows with increased prior, demonstrating that additional top‐level constraints help the model recover more correct branching patterns.

To gauge how well the model recovers the unknown internal structure \emph{within} each clade, we also track local subtree errors at various “levels” of the reconstructed tree.  Even though these subclades were not all explicitly supervised, the table reveals that CellTreeQM obtains lower Robinson–Foulds distances on these deeper substructures—particularly in the presence of more known quartets.

We report the fraction of known quartets at levels of the full balanced dataset with 64 leaves and for the three \textit{C. elegans} datasets in Table \ref{tbl:high_level_partition_quartet_fraction}.


\begin{table}[ht]
    \centering
    \caption{Fraction of Known Quartets at Levels for High-level Partitioning Setting.}
    \label{tbl:high_level_partition_quartet_fraction}
    \begin{tabular}{@{}lcc|cc|cc|cc@{}}
        \toprule
        & \multicolumn{2}{c|}{Brownian64} 
        & \multicolumn{2}{c|}{C. elegans Small} 
        & \multicolumn{2}{c|}{C. elegans Mid} 
        & \multicolumn{2}{c}{C. elegans Large} \\ 
        \midrule
        \textbf{n leaves}   & \multicolumn{2}{c|}{64} & \multicolumn{2}{c|}{102} & \multicolumn{2}{c|}{182} & \multicolumn{2}{c}{295} \\
        \textbf{n quartets} & \multicolumn{2}{c|}{635,376} & \multicolumn{2}{c|}{4,249,575} & \multicolumn{2}{c|}{44,224,635} & \multicolumn{2}{c}{309,177,995} \\
        \midrule
        & \textbf{Counts} & \textbf{Prop.} 
        & \textbf{Counts} & \textbf{Prop.} 
        & \textbf{Counts} & \textbf{Prop.} 
        & \textbf{Counts} & \textbf{Prop.} \\ 
        \midrule
        Level 1 & 246,016  & 0.387   & 1,417,248   & 0.334  & 11,814,336   & 0.2613  & 77,232,330   & 0.250  \\
        Level 2 & 455,040  & 0.716   & 2,340,063   & 0.551   & 29,809,148   & 0.659  & 201,823,758  & 0.653  \\
        Level 3 & 323,008  & 0.508   & 2,633,325   & 0.620  & 27,562,455   & 0.610  & 183,448,315  & 0.593  \\
        Level 4 & 165,600  & 0.261   & 1,892,679   & 0.445  & 17,895,810   & 0.396  & 116,649,187  & 0.377  \\
        \bottomrule
    \end{tabular}
\end{table}

\paragraph{Performance on Simulated Data.}
The results for the high-level partition setting are shown in Figure~\ref{fig:high_level_partition_sim}. The dataset is simulated with $n_{\text{leaves}} = 64$ and a maximum walk of 2 per branch. Each leaf contains 50 signal features and 500 Gaussian noise features, with the same mean and standard deviation as the signal features. Additionally, each leaf has 50 spurious features derived from an alternative tree. There are two alternative trees, each with 32 leaves and the same maximum walk as the true tree. Each experiment is repeated 10 times, and the mean and standard deviation are shown as the central value and error bars in the figure.

\begin{figure}[ht]
    \centering
    \includegraphics[width=\linewidth]{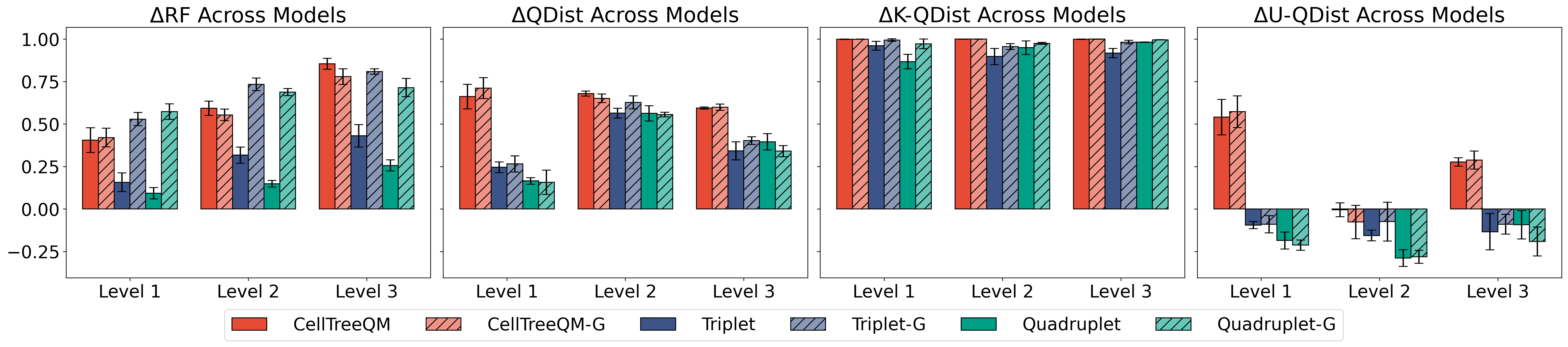}
    \caption{Results for high-level partitioning setting on simulation with 64 leaves.}
    \label{fig:high_level_partition_sim}
\end{figure}

\paragraph{Performance on \textit{C. elegans} Data.}

Under the high-level partition setting, the results for \textit{C. elegans Small} are shown in Figure~\ref{fig:high_level_partition_celegans_dev} and Table~\ref{tbl:high_level_partition_celegans_small_full}. The results for C. elegans Mid. are shown in Figure~\ref{fig:high_level_partition_celegans_large} and Table~\ref{tbl:high_level_partition_celegans_mid_full}.

\begin{table}[ht]
\centering
\footnotesize
\caption{\textbf{Weakly supervised high-level partitioning setting results on \emph{C.~elegans Small} under different partition levels.} 
\(\mathrm{K\text{-}QD}\) and \(\mathrm{U\text{-}QD}\) 
are quartet distances on the \emph{known} and \emph{unknown} quartets, respectively.
he reported values are means
across three runs, with standard deviations in parentheses.}
\label{tbl:high_level_partition_celegans_small_full}
\begin{tabular}{lcccccc}
\hline
\textbf{Method}             & \textbf{RF\(\downarrow\)}                              & \textbf{$\Delta$\%RF\(\uparrow\)} & \textbf{QD\(\downarrow\)}          & \textbf{$\Delta$\%QD\(\uparrow\)} & \textbf{$\Delta$\%K-QD\(\uparrow\)} & \textbf{$\Delta$\%U-QD\(\uparrow\)} \\ \hline
\textbf{Partition Level: 3} &                                          &                                   &                      &                                   &                                     &                                     \\
CellTreeQM                  & \textbf{0.538(0.01)}                     & \textbf{0.403(0.01)}                                  & \textbf{0.156(0.00)} & \textbf{0.715(0.01)}                                  & \textbf{0.999(0.00)}                                    & \textbf{0.247(0.02)}                                    \\
CellTreeQM-G                & \textbf{0.558(0.02)}                     & \textbf{0.379(0.01)}                                  & \textbf{0.171(0.01)} & \textbf{0.690(0.01)}                                  & \textbf{0.999(0.00)}                                    & 0.182(0.02)                                             \\
Triplet                     & 0.702(0.01)                              & 0.232(0.01)                                           & 0.264(0.01)          & 0.518(0.01)                                           & 0.722(0.02)                                             & \textbf{0.183(0.01)}                                    \\
Triplet-G                   & 0.694(0.02)                              & 0.236(0.02)                                           & 0.273(0.00)          & 0.503(0.01)                                           & \textbf{0.710(0.01)}                                    & 0.159(0.01)                                             \\
Quadruplet                  & 0.854(0.01)                              & 0.081(0.02)                                           & 0.241(0.01)          & 0.560(0.02)                                           & 0.971(0.01)                                             & \textbf{-0.117(0.04)}                                   \\
Quadruplet-G                & 0.818(0.02)                              & 0.097(0.03)                                           & 0.241(0.00)          & \textbf{0.560(0.01)}                                  & 0.979(0.00)                                             & -0.130(0.03)                                            \\ \hline
\textbf{Partition Level: 2} &                                          & \multicolumn{1}{c}{}                                  &                      & \multicolumn{1}{c}{}                                  & \multicolumn{1}{c}{}                                    & \multicolumn{1}{c}{}                                    \\
CellTreeQM                  & \multicolumn{1}{r}{\textbf{0.657(0.01)}} & \textbf{0.286(0.02)}                                  & \textbf{0.129(0.01)} & \textbf{0.764(0.03)}                                  & \textbf{0.998(0.00)}                                    & \textbf{0.511(0.06)}                                    \\
CellTreeQM-G                & \textbf{0.626(0.03)}                     & \textbf{0.311(0.04)}                                  & \textbf{0.117(0.01)} & \textbf{0.787(0.01)}                                  & \textbf{0.997(0.00)}                                    & \textbf{0.558(0.03)}                                    \\
Triplet                     & 0.773(0.02)                              & 0.168(0.02)                                           & 0.326(0.01)          & 0.405(0.01)                                           & \textbf{0.726(0.01)}                                    & 0.058(0.02)                                             \\
Triplet-G                   & 0.763(0.02)                              & 0.156(0.02)                                           & 0.322(0.00)          & 0.414(0.01)                                  & 0.726(0.01)                                             & 0.074(0.01)                                             \\
Quadruplet                  & 0.871(0.01)                              & 0.052(0.02)                                           & 0.286(0.00)          & 0.477(0.00)                                           & 0.995(0.00)                                    & -0.084(0.01)                                            \\
Quadruplet-G                & 0.861(0.01)                              & \textbf{0.063(0.02)}                                  & 0.282(0.00)          & 0.487(0.01)                                           & 0.991(0.00)                                             & -0.060(0.02)                                            \\ \hline
\textbf{Partition Level: 1} &                                          & \multicolumn{1}{c}{}                                  &                      & \multicolumn{1}{c}{}                                  & \multicolumn{1}{c}{}                                    & \multicolumn{1}{c}{}                                    \\
CellTreeQM                  & \textbf{0.773(0.02)}                     & \textbf{0.164(0.01)}                                  & \textbf{0.209(0.03)} & \textbf{0.619(0.05)}                                  & \textbf{0.998(0.00)}                                    & \textbf{0.467(0.07)}                                    \\
CellTreeQM-G                & \textbf{0.793(0.02)}                     & \textbf{0.128(0.02)}                                  & \textbf{0.191(0.00)} & \textbf{0.652(0.01)}                                  & \textbf{0.997(0.00)}                                    & \textbf{0.515(0.01)}                                    \\
Triplet                     & 0.813(0.02)                              & 0.120(0.02)                                           & 0.352(0.01)          & 0.357(0.02)                                           & 0.831(0.01)                                             & 0.168(0.02)                                             \\
Triplet-G                   & 0.801(0.01)                              & 0.112(0.01)                                           & 0.349(0.00)          & 0.363(0.01)                                           & 0.833(0.02)                                             & 0.175(0.00)                                             \\
Quadruplet                  & 0.859(0.01)                              & 0.061(0.00)                                           & 0.423(0.01)          & 0.228(0.01)                                           & 0.672(0.02)                                             & 0.051(0.01)                                    \\
Quadruplet-G                & 0.861(0.01)                              & 0.058(0.02)                                  & 0.426(0.01)          & 0.223(0.01)                                           & 0.662(0.04)                                             & 0.049(0.01)                                             \\ \hline
\end{tabular}
\end{table}
\begin{figure}[ht]
    \centering
    \includegraphics[width=\linewidth]{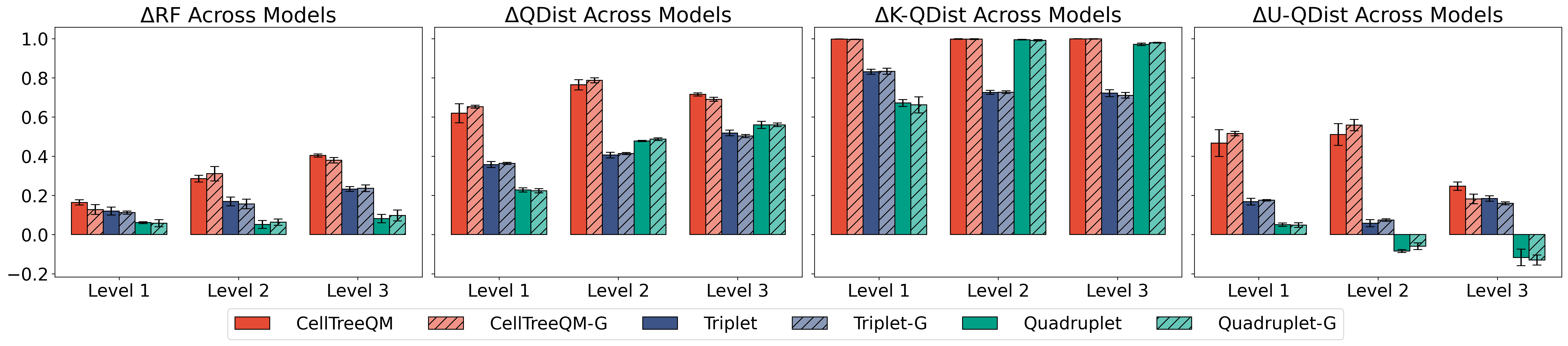}
    \caption{Results for high-level partitioning setting on \textit{C.elegans Small} dataset. The x-axis is the level of partition.}
    \label{fig:high_level_partition_celegans_dev}
\end{figure}

\begin{table}[ht]
\centering
\footnotesize
\caption{\textbf{Weakly supervised high-level partitioning setting results on \emph{C.~elegans Mid} under different partition levels.} 
\(\mathrm{K\text{-}QD}\) and \(\mathrm{U\text{-}QD}\) 
are quartet distances on the \emph{known} and \emph{unknown} quartets, respectively.}
\label{tbl:high_level_partition_celegans_mid_full}
\begin{tabular}{lrrrrrr}
\hline
\textbf{Method}             & \textbf{RF\(\downarrow\)}                              & \textbf{$\Delta$\%RF\(\uparrow\)} & \textbf{QD\(\downarrow\)}          & \textbf{$\Delta$\%QD\(\uparrow\)} & \textbf{$\Delta$\%K-QD\(\uparrow\)} & \textbf{$\Delta$\%U-QD\(\uparrow\)} \\ \hline
\textbf{Partition Level: 3} & \multicolumn{1}{l}{}            & \multicolumn{1}{c}{}                                  & \multicolumn{1}{l}{}            & \multicolumn{1}{c}{}                                  & \multicolumn{1}{c}{}                                    & \multicolumn{1}{c}{}                                    \\
CellTreeQM                  & \textbf{0.724(0.01)}            & \textbf{0.207(0.01)}                                  & \textbf{0.203(0.00)}            & \textbf{0.643(0.01)}                                  & \textbf{1.000(0.00)}                                    & \textbf{0.147(0.02)}                                    \\
CellTreeQM-G                & \textbf{0.728(0.01)}            & \textbf{0.188(0.01)}                                  & \textbf{0.203(0.00)}            & \textbf{0.641(0.00)}                                  & \textbf{1.000(0.00)}                                    & \textbf{0.142(0.01)}                                    \\
Triplet                     & 0.811(0.01)                     & 0.112(0.00)                                           & 0.287(0.01)                     & 0.494(0.01)                                           & 0.831(0.01)                                             & 0.026(0.02)                                             \\
Triplet-G                   & 0.785(0.01)                     & 0.136(0.01)                                           & 0.289(0.01)                     & 0.490(0.01)                                           & 0.822(0.01)                                             & 0.029(0.01)                                             \\
Quadruplet                  & 0.894(0.01)                     & 0.008(0.02)                                           & 0.254(0.01)                     & 0.551(0.02)                                           & 0.993(0.00)                                             & -0.062(0.03)                                            \\
Quadruplet-G                & 0.900(0.00)                     & -0.000(0.01)                                          & 0.254(0.00)                     & 0.553(0.01)                                           & 0.993(0.00)                                             & -0.062(0.02)                                            \\ \hline
\textbf{Partition Level: 2} & \multicolumn{1}{l}{}            & \multicolumn{1}{c}{}                                  & \multicolumn{1}{l}{}            & \multicolumn{1}{c}{}                                  & \multicolumn{1}{c}{}                                    & \multicolumn{1}{c}{}                                    \\
CellTreeQM                  & \textbf{0.822(0.01)}            & \textbf{0.110(0.01)}                                  & \textbf{0.144(0.00)}            & \textbf{0.746(0.01)}                                  & \textbf{0.999(0.00)}                                    & \textbf{0.235(0.02)}                                    \\
CellTreeQM-G                & 0.835(0.01)                     & 0.079(0.01)                                           & \textbf{0.120(0.02)}            & \textbf{0.788(0.04)}                                  & \textbf{0.999(0.00)}                                    & \textbf{0.360(0.11)}                                    \\
Triplet                     & \textbf{0.826(0.01)}            & \textbf{0.091(0.02)}                                  & 0.199(0.00)                     & 0.648(0.01)                                           & 0.910(0.01)                                             & 0.114(0.01)                                             \\
Triplet-G                   & 0.835(0.01)                     & 0.089(0.01)                                           & 0.204(0.00)                     & 0.642(0.01)                                           & 0.908(0.01)                                             & 0.101(0.03)                                             \\
Quadruplet                  & 0.928(0.01)                     & -0.018(0.00)                                          & 0.221(0.00)                     & 0.609(0.00)                                           & \textbf{0.999(0.00)}                                    & -0.188(0.00)                                            \\
Quadruplet-G                & 0.922(0.01)                     & -0.021(0.01)                                          & 0.223(0.00)                     & 0.606(0.00)                                           & 0.998(0.00)                                             & -0.190(0.01)                                            \\ \hline
\textbf{Partition Level: 1} & \multicolumn{1}{l}{}            & \multicolumn{1}{c}{}                                  & \multicolumn{1}{l}{}            & \multicolumn{1}{c}{}                                  & \multicolumn{1}{c}{}                                    & \multicolumn{1}{c}{}                                    \\
CellTreeQM                  & \textbf{0.902(0.00)}            & \textbf{0.006(0.01)}                                  & \textbf{0.261(0.01)}            & \textbf{0.539(0.02)}                                  & \textbf{0.999(0.00)}                                    & \textbf{0.414(0.03)}                                    \\
CellTreeQM-G                & \textbf{0.904(0.00)}            & \textbf{0.014(0.01)}                                  & \textbf{0.298(0.00)}            & \textbf{0.474(0.01)}                                  & \textbf{0.999(0.00)}                                    & \textbf{0.331(0.01)}                                    \\
Triplet                     & 0.850(0.01)                     & 0.063(0.01)                                           & 0.364(0.00)                     & 0.357(0.00)                                           & 0.831(0.01)                                             & 0.228(0.00)                                             \\
Triplet-G                   & 0.869(0.00)                     & 0.035(0.01)                                           & 0.369(0.00)                     & 0.350(0.01)                                           & 0.821(0.02)                                             & 0.222(0.01)                                             \\
Quadruplet                  & 0.896(0.01)                     & 0.015(0.03)                                           & 0.437(0.00)                     & 0.229(0.00)                                           & 0.618(0.03)                                             & 0.123(0.01)                                             \\
Quadruplet-G                & 0.885(0.01)                     & 0.022(0.02)                                           & 0.433(0.01)                     & 0.235(0.01)                                           & 0.637(0.02)                                             & 0.127(0.01)                                             \\ \hline
\end{tabular}
\end{table}
\begin{figure}[ht]
    \centering
    \includegraphics[width=\linewidth]{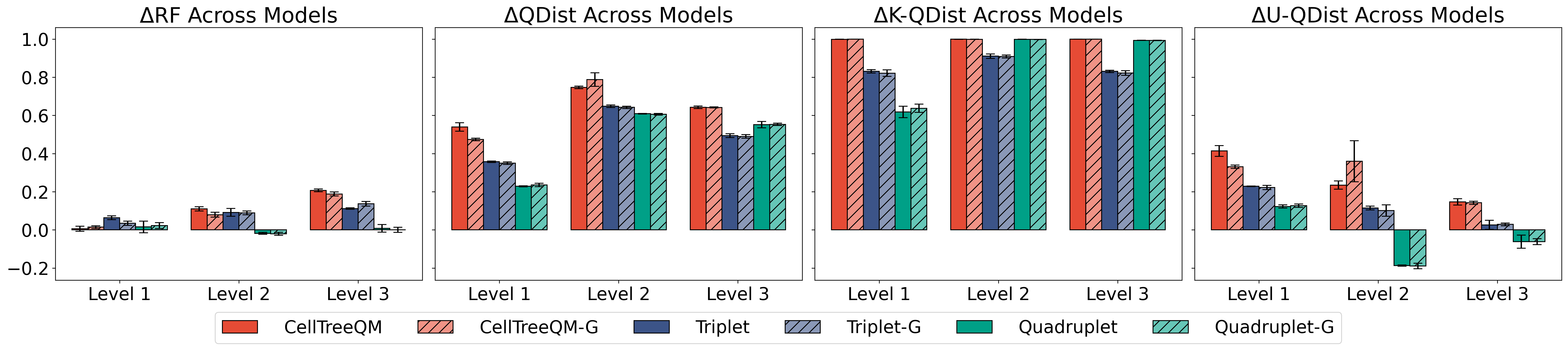}
    \caption{Results for high-level partitioning setting on \textit{C.elegans Mid.} dataset. The x-axis is the level of partition.}
    \label{fig:high_level_partition_celegans_large}
\end{figure}

\subsubsection{Partially Leaf-labeled Setting}
In many biological systems, researchers may know a coarse-grained lineage structure up to a certain “level” of the tree but not the fine-grained branching events beneath it. Concretely, let us define a level as the number of branching steps from the root. We assume we know how leaves are divided into clades at level, as well as the relationships among these clades (see Fig.~\ref{fig:intro-framework}, left). Hence, each leaf is assigned to exactly one of these high-level clades, but the tree structure \emph{within} each clade remains unknown.

When forming quartets under this partial knowledge, some quartets still have ambiguous structure (``unknown quartets''). Specifically, a quartet is unknown if all four leaves lie in the same clade or if three of the leaves come from the same clade, because the high-level partition does not determine the precise branching among these leaves. Quartets with leaves spanning different clades (e.g., two leaves from one clade and two from another) become ``known quartets,'' whose topological order \emph{can} be inferred from the clade-level prior. In our weakly supervised training, we compute the additivity loss \emph{only} on these known quartets, since their correct topology is implied by the high-level partitions.

Table~\ref{tbl:partial_labeled_known_fraction_stats} reports the Known, Partial, and Unknown counts and proportions for different known fractions in \textit{C. elegans Small} and \textit{Mid.} datasets. The experiments for \textit{C. elegans Small} are repeated 10 times. The experiments for \textit{C. elegans Mid.} are only run once. Since we try to iterate all the quartets during the evaluation step to have a reliable count about the small fraction of QD for unknown quartets, this is computationally intensive. We are running more repeats and will update the appendix as soon as more repetitions are done.

\begin{table}[ht]
    \centering
    \caption{Known, Partial, and Unknown counts and proportions for different known fractions in \textit{C. elegans} datasets for partially leaf-labeled setting.}
    \label{tbl:partial_labeled_known_fraction_stats}
    \begin{tabular}{@{}lcc|cc@{}}
        \toprule
        & \multicolumn{2}{c|}{\textbf{C. elegans Small}} 
        & \multicolumn{2}{c}{\textbf{C. elegans Mid}} \\ 
        \midrule
        \textbf{Known Fraction} & \textbf{Count} & \textbf{Prop.} 
        & \textbf{Count} & \textbf{Prop.} \\ 
        \midrule
        \multicolumn{5}{l}{\textbf{Known Fraction: 0.8}} \\ 
        Known     & 1,663,740  & 0.392  & 18,163,860  & 0.402  \\
        Partial   & 2,579,850  & 0.607  & 26,982,990  & 0.597  \\
        Unknown   & 5,985      & 0.001  & 66,045      & 0.001  \\ 
        \midrule
        \multicolumn{5}{l}{\textbf{Known Fraction: 0.5}} \\ 
        Known     & 249,900    & 0.059  & 2,672,670   & 0.059  \\
        Partial   & 3,749,775  & 0.882  & 39,746,070  & 0.879  \\
        Unknown   & 249,900    & 0.059  & 2,794,155   & 0.062  \\ 
        \midrule
        \multicolumn{5}{l}{\textbf{Known Fraction: 0.3}} \\ 
        Known     & 27,405     & 0.006  & 316,251     & 0.007  \\
        Partial   & 3,193,380  & 0.752  & 33,887,268  & 0.750  \\
        Unknown   & 1,028,790  & 0.242  & 11,009,376  & 0.244  \\ 
        \bottomrule
    \end{tabular}
\end{table}

\paragraph{Performance on Simulated Data.}
The results for the partial-labeled setting are shown in Figure~\ref{fig:partial_label_sim} with known fractions as $35\%, 55\%, 85\%$. The dataset is simulated with $n_{\text{leaves}} = 128$ and a maximum walk of 2 per branch. Each leaf contains 50 signal features and 500 Gaussian noise features, with the same mean and standard deviation as the signal features. Additionally, each leaf has 50 spurious features derived from an alternative tree. There are two alternative trees, each with 64 leaves and the same maximum walk as the true tree. Each experiment is repeated 10 times, and the mean and standard deviation are shown as the central value and error bars in the figure.

\begin{figure}[ht]
    \centering
    \includegraphics[width=\linewidth]{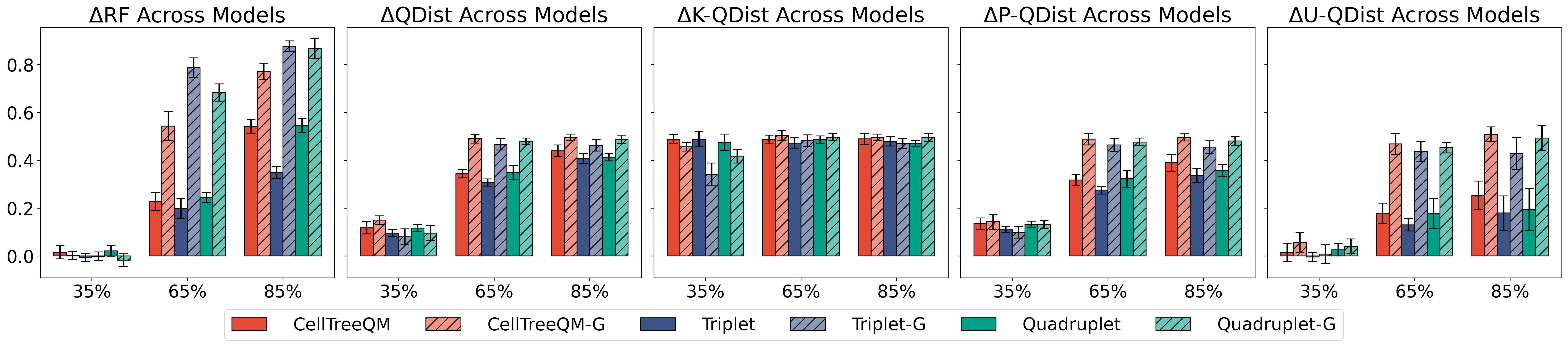}
    \caption{Results for partially leaf-labeled setting on simulation with 128 leaves dataset. The x-axis is the percentage of known leaves.}.
    \label{fig:partial_label_sim}
\end{figure}

\paragraph{Performance on Real Data}

The results for C.elegans Small and Mid are summarized in Table~\ref{tbl:partial_label_elegans_small_full} and \ref{tbl:partial_label_elegans_mid_full} for fraction $30\%, 50\%, 80\%$. The results are also visualized by bar plots in Figure.~ \ref{fig:partial_label_celegans_dev} and \ref{fig:partial_label_celegans_large}. The experiments for \textit{C. elegans Small} are repeated 10 times. The experiments for C. elegans Mid are repeated 5 times.

\begin{table}[ht]
\centering
\caption{Weakly supervised Partial-labeled setting results across known fractions for C. elegans Small. The reported values are means across ten runs, with standard deviations in parentheses.}
\label{tbl:partial_label_elegans_small_full}
\scalebox{0.9}{
\begin{tabular}{@{}lcccccc@{}}
\toprule
\textbf{Method}              & \textbf{Train RF\(\downarrow\)} & \textbf{$\Delta$\%RF\(\uparrow\)} & \textbf{$\Delta$\%QD\(\uparrow\)} & \textbf{$\Delta$\%K-QD\(\uparrow\)} & \textbf{$\Delta$\%P-QD\(\uparrow\)} & \textbf{$\Delta$\%U-QD\(\uparrow\)} \\ \midrule
\textbf{Known Fraction: 0.8}                                                                                                                                                                                                        \\
CellTreeQM                   & \textbf{0.024(0.03)}            & \textbf{0.448(0.06)}              & \textbf{0.842(0.05)}              & \textbf{0.999(0.00)}                & \textbf{0.742(0.07)}                & \textbf{0.465(0.13)}                \\
CellTreeQM-G                 & \textbf{0.031(0.04)}            & \textbf{0.411(0.07)}                       & \textbf{0.837(0.03)}              & \textbf{0.999(0.00)}                & \textbf{0.700(0.06)}                & 0.371(0.13)                \\
Triplet                      & 0.454(0.06)                     & 0.175(0.05)                       & 0.728(0.03)                       & 0.895(0.01)                         & 0.624(0.05)                & 0.339(0.14)                \\
Triplet-G                    & 0.419(0.07)                     & 0.201(0.05)                       & 0.739(0.03)                       & 0.895(0.02)                         & 0.618(0.05)                         & 0.394(0.09)                         \\
Quadruplet                   & 0.066(0.04)                     & 0.403(0.03)              & 0.796(0.06)                       & 0.953(0.02)                         & 0.697(0.09)                         & \textbf{0.435(0.22)}                \\
Quadruplet-G                 & 0.060(0.04)                     & \textbf{0.411(0.03)}              & 0.784(0.04)                       & 0.936(0.01)                         & 0.662(0.05)                         & 0.320(0.09)                         \\ \midrule
\textbf{Known Fraction: 0.5} &                                 &                                   &                                   &                                     &                                     &                                     \\
CellTreeQM                   & \textbf{0.012(0.01)}            & 0.092(0.05)                       & \textbf{0.609(0.05)}              & \textbf{0.999(0.00)}                & \textbf{0.598(0.06)}                & \textbf{0.398(0.05)}                \\
CellTreeQM-G                 & \textbf{0.016(0.01)}            & 0.100(0.04)                       & \textbf{0.602(0.04)}              & \textbf{0.999(0.00)}                & \textbf{0.571(0.03)}                & \textbf{0.352(0.05)}                \\
Triplet                      & 0.303(0.09)                     & 0.049(0.04)                       & 0.505(0.05)                       & 0.879(0.02)                         & 0.493(0.05)                         & 0.304(0.06)                \\
Triplet-G                    & 0.319(0.09)                     & 0.067(0.03)                       & 0.520(0.05)                       & 0.883(0.02)                         & 0.499(0.05)                         & 0.308(0.05)                         \\
Quadruplet                   & 0.023(0.02)            & 0.115(0.04)                       & 0.549(0.04)                       & 0.934(0.03)                         & 0.537(0.04)                & 0.340(0.06)                \\
Quadruplet-G                 & 0.039(0.03)                     & 0.103(0.04)                       & 0.540(0.05)                       & 0.942(0.02)                         & 0.515(0.04)                         & 0.299(0.05)                         \\ \midrule
\textbf{Known Fraction: 0.3} &                                 &                                   &                                   &                                     &                                     &                                     \\
CellTreeQM                   & \textbf{0.000(0.00)}            & -0.023(0.02)                      & \textbf{0.368(0.06)}              & \textbf{1.000(0.00)}                & \textbf{0.401(0.06)}                & 0.250(0.06)                \\
CellTreeQM-G                 & \textbf{0.008(0.02)}            & -0.020(0.02)                      & 0.336(0.08)              & \textbf{0.999(0.00)}                & 0.340(0.08)                         & 0.180(0.07)                         \\
Triplet                      & 0.156(0.06)                     & -0.001(0.02)                      & 0.358(0.04)                       & 0.889(0.02)                         & 0.384(0.04)                         & \textbf{0.263(0.05)}                \\
Triplet-G                    & 0.169(0.09)                     & -0.017(0.03)                      & 0.352(0.03)                       & 0.890(0.03)                & 0.363(0.03)                         & 0.220(0.04)                         \\
Quadruplet                   & 0.008(0.02)                     & -0.020(0.03)                      & 0.358(0.03)                       & 0.918(0.04)                         & \textbf{0.386(0.03)}                & \textbf{0.259(0.02)}                \\
Quadruplet-G                 & 0.029(0.05)                     & -0.016(0.03)                      & \textbf{0.364(0.06)}              & 0.940(0.03)                         & 0.362(0.06)                & 0.208(0.07)                \\ \bottomrule
\end{tabular}
}
\end{table}

\begin{figure}[ht]
    \centering
    \includegraphics[width=\linewidth]{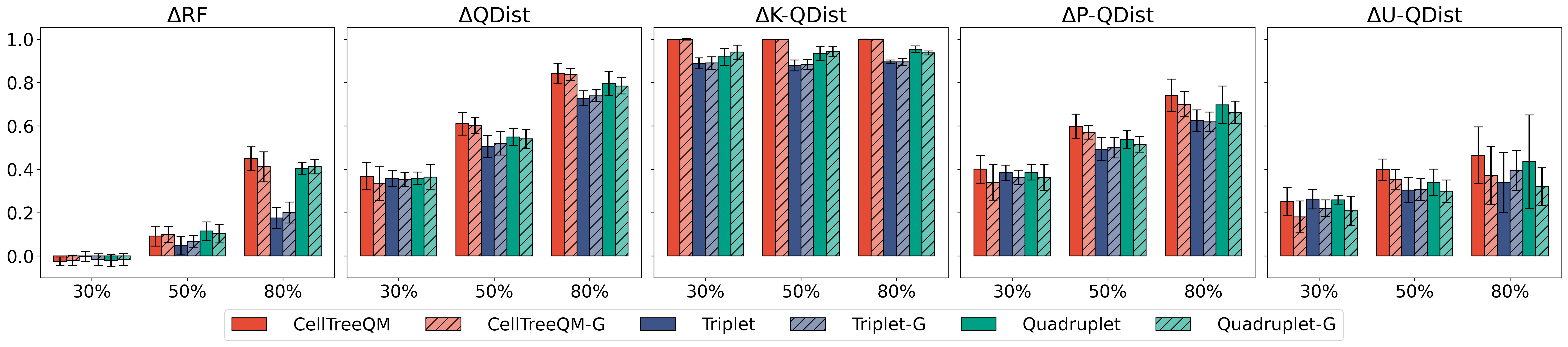}
    \caption{Results for partially leaf-labeled setting on \textit{C. elegans Small} dataset. The x-axis is the percentage of known leaves.}
    \label{fig:partial_label_celegans_dev}
\end{figure}

\begin{table}[ht]
\centering
\caption{Weakly supervised partially leaf-labeled setting results across known fractions for \textit{C. elegans Mid.} dataset. The reported values are means across five runs, with standard deviations in parentheses}
\label{tbl:partial_label_elegans_mid_full}
\scalebox{0.9}{
\begin{tabular}{@{}lcccccc@{}}
\toprule
\textbf{Method}              & \textbf{Train RF\(\downarrow\)} & \textbf{$\Delta$\%RF\(\uparrow\)} & \textbf{$\Delta$\%QD\(\uparrow\)} & \textbf{$\Delta$\%K-QD\(\uparrow\)} & \textbf{$\Delta$\%P-QD\(\uparrow\)} & \textbf{$\Delta$\%U-QD\(\uparrow\)} \\ \midrule
\textbf{Known Fraction: 0.8} &                                 &                                   &                                   &                                     &                                     &                                     \\
CellTreeQM                   & \textbf{0.108(0.02)}            & \textbf{0.339(0.03)}              & \textbf{0.840(0.02)}              & \textbf{1.000(0.00)}                & \textbf{0.733(0.04)}                & 0.407(0.07)                         \\
CellTreeQM-G                 & \textbf{0.150(0.05)}            & 0.309(0.04)                       & \textbf{0.847(0.06)}              & \textbf{1.000(0.00)}                & 0.718(0.10)                         & \textbf{0.441(0.13)}                \\
Triplet                      & 0.628(0.02)                     & 0.090(0.02)                       & 0.804(0.02)                       & 0.921(0.01)                         & \textbf{0.725(0.04)}                & \textbf{0.503(0.10)}                \\
Triplet-G                    & 0.656(0.02)                     & 0.082(0.01)                       & 0.801(0.03)                       & 0.918(0.01)                         & 0.681(0.07)                         & 0.431(0.11)                         \\
Quadruplet                   & 0.169(0.02)                     & \textbf{0.345(0.03)}              & 0.803(0.03)                       & 0.981(0.01)                         & 0.684(0.05)                         & 0.343(0.09)                         \\
Quadruplet-G                 & 0.250(0.07)                     & 0.271(0.07)                       & 0.817(0.02)                       & 0.980(0.00)                         & 0.691(0.03)                         & 0.351(0.07)                         \\ \midrule
\textbf{Known Fraction: 0.5} &                                 &                                   &                                   &                                     &                                     &                                     \\
CellTreeQM                   & \textbf{0.055(0.01)}            & 0.069(0.02)                       & \textbf{0.589(0.03)}              & \textbf{1.000(0.00)}                & \textbf{0.579(0.03)}                & 0.349(0.03)                         \\
CellTreeQM-G                 & 0.180(0.10)                     & 0.029(0.01)                       & \textbf{0.609(0.07)}              & \textbf{0.991(0.01)}                & 0.573(0.06)                         & 0.344(0.07)                         \\
Triplet                      & 0.491(0.03)                     & 0.025(0.03)                       & 0.569(0.02)                       & 0.923(0.01)                         & 0.560(0.02)                         & \textbf{0.361(0.04)}                \\
Triplet-G                    & 0.509(0.06)                     & -0.004(0.02)                      & 0.571(0.08)                       & 0.923(0.01)                         & 0.520(0.08)                         & 0.308(0.11)                         \\
Quadruplet                   & \textbf{0.093(0.02)}            & 0.058(0.03)                       & 0.587(0.03)                       & 0.988(0.00)                         & \textbf{0.576(0.03)}                & \textbf{0.365(0.05)}                \\
Quadruplet-G                 & 0.180(0.10)                     & 0.052(0.02)                       & 0.562(0.02)                       & 0.984(0.01)                         & 0.526(0.04)                         & 0.296(0.05)                         \\ \midrule
\textbf{Known Fraction: 0.3} &                                 &                                   &                                   &                                     &                                     &                                     \\
CellTreeQM                   & \textbf{0.000(0.00)}            & -0.026(0.02)                      & 0.385(0.05)                       & \textbf{1.000(0.00)}                & \textbf{0.420(0.05)}                & \textbf{0.259(0.06)}                \\
CellTreeQM-G                 & \textbf{0.027(0.03)}            & -0.036(0.02)                      & \textbf{0.400(0.04)}              & \textbf{0.999(0.00)}                & 0.393(0.04)                         & 0.227(0.05)                         \\
Triplet                      & 0.275(0.10)                     & -0.028(0.02)                      & 0.378(0.07)                       & 0.935(0.01)                         & 0.412(0.07)                         & 0.257(0.07)                         \\
Triplet-G                    & 0.306(0.13)                     & -0.029(0.01)                      & 0.311(0.06)                       & 0.932(0.01)                         & 0.332(0.05)                         & 0.164(0.06)                         \\
Quadruplet                   & 0.008(0.02)                     & -0.022(0.04)                      & 0.366(0.02)                       & 0.979(0.01)                         & 0.402(0.03)                         & 0.239(0.02)                         \\
Quadruplet-G                 & 0.067(0.06)                     & -0.045(0.02)                      & \textbf{0.403(0.05)}              & 0.987(0.01)                         & \textbf{0.424(0.05)}                & \textbf{0.265(0.05)}                \\ \bottomrule
\end{tabular}
}
\end{table}
\begin{figure}[ht]
    \centering
    \includegraphics[width=\linewidth]{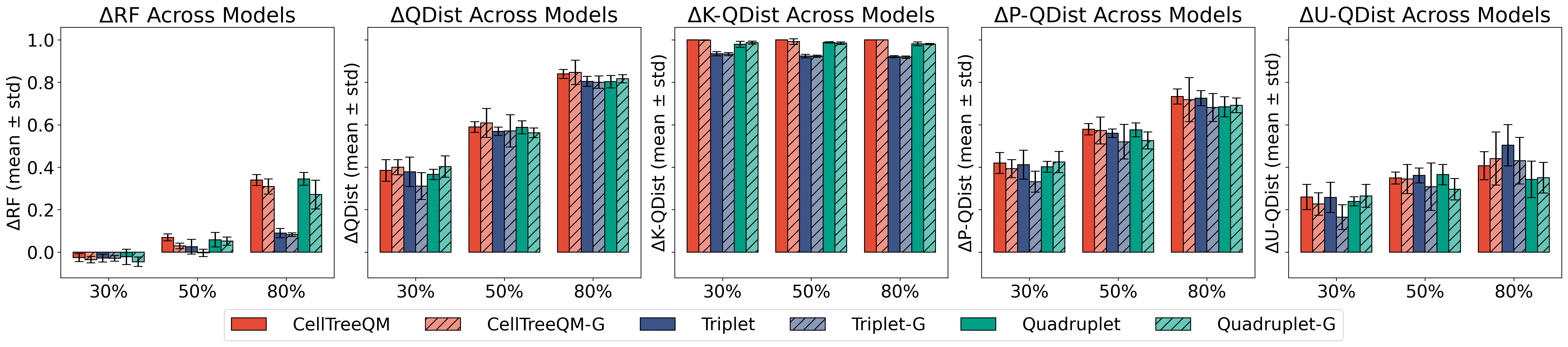}
    \caption{Results for partially leaf-labeled setting on \textit{C. elegans Mid} dataset. The x-axis is the percentage of known leaves.}
    \label{fig:partial_label_celegans_large}
\end{figure}

\subsection{Unsupervised Setting}\label{app:unsupervised}
Here, we provide a preliminary exploration to assess whether the \emph{CellTreeQM} framework can be extended to an unsupervised learning scenario. We use a Brownian motion simulation to generate a phylogenetic tree with 128 leaves with unit length. Each leaf is represented by 32 signal features and 32 Gaussian noise features, with the noise standard deviation set to 10. A direct reconstruction using the noisy data yields an RF value of 0.968. As baselines, we applied PCA with 5, 10, and 20 principal components, as well as Gaussian random projections with 2, 5, and 10 dimensions; all of these produced an RF value of 1. When we reconstruct the tree only based on the signal features, we got the optimal reconstruction accuracy RF = 0.128.

To adapt \emph{CellTreeQM} to the unsupervised setting, we make two main modifications to simplify the model compared to its supervised and weakly supervised versions:

\begin{enumerate}
\item We replace transformer encoder a fully connected backbone of 3 layers, each with 32 hidden units.
\item We replace the Gumbel-Softmax feature gating with a learnable linear projection matrix $G \in \mathbb{R}^{d \times d}$, where d is the number of input features.
\end{enumerate}

Because we do not have ground-truth distances among leaves in the unsupervised scenario, we approximate additivity by sorting the quartet distances $\{S_1, S_2, S_3\}$ (see \ref{dist_sums}) based on their values in the embedding space. Following the same idea as in DeSeto’s algorithm, we define the ``close’’ term as:
and take
\begin{equation*}
\mathcal{L}_{\text{quartet}} = \mathcal{L}_{\text{close}},
\quad
\mathcal{L}{_\text{additivity}} = \frac{1}{Q} \sum_{q=1}^{Q} \mathcal{L}_{\text{quartet}},
\end{equation*}
where Q is the total number of quartets.

We regularize the projection matrix G to encourage excluding noise features without excessively penalizing omissions. Specifically, we initialize G as the identity matrix and apply an $L_1$ penalty \(\lVert G - I \rVert_1\). Note that, unlike in the supervised and weakly supervised settings—where feature gating is applied directly to the input—here we apply feature gating only to the \emph{deviation} component. Concretely, we compare the learned embedding f(X) to its linear projection $GX$ via

\[
\Omega(f, X) = \frac{1}{N} \bigl\lVert \mathcal{D}\bigl(f(X)\bigr) - \mathcal{D}\bigl(GX\bigr) \bigr\rVert_{F}^{2},
\]

where $\mathcal{D}$ is an operator (e.g., pairwise distance) applied to the corresponding space. The overall loss function then combines the additivity loss, the embedding deviation term, and the regularization on G:

\[
\min_{f, G}
\Bigl[
\mathcal{L}_{\text{additivity}}(f, X) \;+\; \lambda \,\Omega(f, X) \;+\; \bigl\lVert G - I \bigr\rVert_1.
\Bigr]
\]

\begin{figure}[ht]
    \centering
    \includegraphics[width=0.5\linewidth]{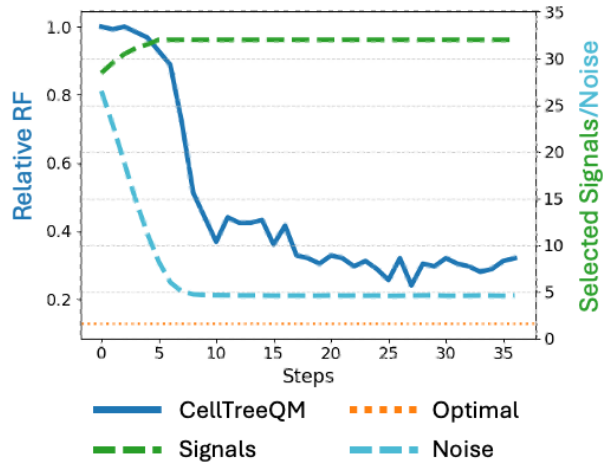}
    \caption{\textbf{Training dynamics of CellTreeQM in a purely unsupervised setting on a simulated dataset.} The left axis reports the relative RF of the reconstructed tree (blue) compared to the optimal RF (orange dotted line), while the right axis indicates the number of selected signal (green dashed) and noise (blue dashed) features over training steps. Early on, more noise features are included, resulting in poorer performance; as training proceeds, CellTreeQM discards the noise and approaches the optimal reconstruction.
    }\label{fig:unsupervsied_setting_app}
\end{figure}

Figure~\ref{fig:unsupervsied_setting_app} illustrates how \emph{CellTreeQM} learns in a purely unsupervised setting over the course of training on a simulated dataset. The figure shows that \emph{CellTreeQM}, even without any labels or supervision, progressively approaches the signal-only baseline, suggesting that the model’s gating mechanism and the unsupervised objective are successfully filtering out noise and capturing the underlying signal.
\section{Training Setups}\label{app:training_setups}
We evaluated our method in two settings: \emph{supervised} and \emph{weakly supervised}. For both settings, we tested on simulated and real datasets. Unless otherwise noted, we did not extensively tune the hyperparameters.

\paragraph{Supervised Setting.}
\begin{itemize}[leftmargin=*]
\item \textbf{Simulation:} We use a Transformer with 8 layers and 2 attention heads. The projection layer and hidden layer are both 256-dimensional, and the model outputs 128-dimensional embeddings. We apply a data dropout of 0.3 and a metric dropout of 0.2. When necessary, we set the gate regularization weight to 5.
\item \textbf{Real data:} The Transformer also has 8 layers and 2 attention heads, but with 1024-dimensional projection and hidden layers, producing 128-dimensional outputs. Here, we use a data dropout of 0.1 and a metric dropout of 0.1. When needed, the gate regularization weight is 0.01.
\end{itemize}

\paragraph{Weakly Supervised Setting.}
\begin{itemize}[leftmargin=*]
\item \textbf{Simulation:} We use a Transformer with 4 layers and 2 attention heads. The projection and hidden layers are both 256-dimensional, and the model outputs 128-dimensional embeddings. We apply a data dropout of 0.3 and a metric dropout of 0.2. When required, the gate regularization weight is 8.
\item \textbf{Real data:} The Transformer has 8 layers and 2 attention heads, with projection and hidden layers of 256 dimensions, producing 128-dimensional outputs. We use a data dropout of 0.3 and a metric dropout of 0.2, and set the gate regularization weight to 0.01 when needed.
\end{itemize}

\end{document}